\definecolor{mygray}{gray}{0.85}
\theoremstyle{plain}
\theoremstyle{definition}
\newcommand{\norm}[2][]{\ensuremath{\left\Vert #2 \right\Vert}}
\renewcommand{\vec}[1]{\mathbf{#1}}
     \newcommand{\BN}{{\mathbb {N}}}
     \newcommand{\BR}{{\mathbb {R}}}
     \newcommand{\CF}{{\mathcal {F}}}
    \newcommand{\CG}{{\mathcal {G}}}
     \newcommand{\CN}{{\mathcal {N}}}
     \newcommand{\CT}{{\mathcal {T}}}
    \newcommand{\CU}{{\mathcal {U}}} 
    \newcommand{\CW}{{\mathcal {W}}} \newcommand{\CX}{{\mathcal {X}}}
    \renewcommand{\mod}{\ \mathrm{mod}\ }
    \newcommand{\KL}{{\mathrm{KL}}}
    \newcommand{\MWU}{{\mathrm{MWU}}}
    \newcommand{\tb}{\textbf}
    \theoremstyle{plain}
    \newtheorem{thm}{Theorem}[section] 
    \newtheorem{lem}[thm]{Lemma}  \newtheorem{prop}[thm]{Proposition}
     \newtheorem{defn}[thm]{Definition}
     \newtheorem {rem}[thm]{Remark}
\title{Last-iterate Convergence Separation between Extra-gradient and Optimism  \\ in Constrained Periodic Games}
\author[1]{\href{mailto:<fengyi95524@gmail.com>?Subject=Your UAI 2024 paper}{Yi Feng}{}}
\author[1]{\href{mailto:<lping0423@163.com>?Subject=Your UAI 2024 paper}Ping Li}
\author[2]{\href{mailto:<panageasj@gmail.com>?Subject=Your UAI 2024 paper}Ioannis Panageas}
\author[1,3]{\href{mailto:<wangxiao@sufe.edu.cn>?Subject=Your UAI 2024 paper}Xiao Wang}
\affil[1]{%
    Institute for Theoretical Computer Science\\
    Shanghai University of Finance and Economics
}
\affil[2]{%
    Department of Computer Science\\
    University of California Irvine
}
\affil[3]{%
    Key Laboratory of Interdisciplinary Research of Computation and Economics\\
    Ministry of Education
}
\begin{document}

\maketitle
\footnotetext{Authors are listed according to the alphabetical order. \ Correspondence to: Xiao Wang <wangxiao@sufe.edu.cn>.}

\begin{abstract}
Last-iterate behaviors of learning algorithms in repeated two-player zero-sum games have been extensively studied due to their wide applications in machine learning and related tasks. Typical algorithms that exhibit the last-iterate convergence property include optimistic and extra-gradient methods. However, most existing results establish these properties under the assumption that the game is time-independent. Recently, \citep{feng2023last} studied the last-iterate behaviors of optimistic and extra-gradient methods in games with a time-varying payoff matrix, and proved that in an \textit{unconstrained} periodic game, extra-gradient method  converges to the equilibrium while optimistic method diverges. This finding challenges the conventional wisdom that these two methods are expected to behave similarly as they do in time-independent games. However, compared to unconstrained games, games with constrains are more common both in practical and theoretical studies. In this paper, we investigate the last-iterate behaviors of optimistic and extra-gradient methods in the \textit{constrained} periodic games, demonstrating that similar separation results for last-iterate convergence also hold in this setting. 
\end{abstract}

\section{Introduction}
Learning from repeated plays in zero-sum games has been a central research problem in game theory since the work of \citep{Brown} and \citep{Robinson}, soon after the appearance of the minimax theorem of von Neumann. In classic normal form zero-sum games, one has to compute probability distributions $\vec{x}_1^*\in\Delta_n$ and $\vec{x}_2^*\in\Delta_m$ that consist an equilibrium of the following problem
\begin{equation*}\label{zs:classic}\tag{Zero-Sum Game}   \max_{\vec{x}_1\in\Delta_m}\min_{\vec{x}_2\in\Delta_n}\vec{x}_1^{\top}A\vec{x}_2
\end{equation*}
where $A$ is an $n\times m$ payoff matrix, and an equilibrium $(\vec{x}_1^*,\vec{x}_2^*)$ is a pair of randomized strategies such that neither player can improve their payoff by unilaterally changing their distributions. 
The dynamics of online learning algorithm in games have been studied extensively. Among a variety of learning methods, Multiplicative Weights Update and Gradient Descent-Ascent, together with their optimistic and extra-gradient variants are of particular interest in \emph{time-independent} games (i.e., the payoff matrix $A$ is time-independent).

Recently, the \textit{last iterate property}, which captures the day-to-day behaviors of learning algorithms in games rather than their average behaviors, has attracted increasing interest due to their wide applications in machine learning and related tasks. In the regime of time-independent games, there have been quite a few results showing the last iterate convergence to Nash equilibrium in zero-sum games. Typical examples include optimistic gradient descent ascent \citep{DISZ17}, extra-gradient descent ascent \citep{LiangS18}  for unconstrained zero-sum games, as well as optimistic multiplicative weights update \citep{daskalakis2018last,fasoulakis2022forward}, extra-gradient multiplicative weights update \citep{mertikopoulos2018optimistic} for constrained zero-sum games.
To conclude, in the context of time-independent games, optimistic methods and extra-gradient methods exhibit similar behaviors : they both possess the last-iterate convergence property and converge by the same rate. Moreover, they can be analyzed in a unified way \citep{mokhtari2020unified}.

Despite aforementioned progresses on time-independent games, only recently there have emerged researches on learning in time-varying zero-sum games \citep{cardoso2019competing, fiez2021online,Duvocelle18:Multi,zhou2022,anagnostides2023convergence, feng2023last}. 
In particular, it has been established by \citep{feng2023last} that the optimistic gradient descent-ascent and extra gradient descent ascent have fundamentally different last iterate behaviors, unlike previous studies in time-independent zero-sum games. Nevertheless, \citep{feng2023last} focuses on the setting of \textit{unconstrained} zero-sum games. However, compared to unconstrained games, games with constrains are more common both in practical and theoretical studies. In this paper, we aim to address the following question:

\textit{Is there a similar last-iterate convergence separation between optimistic and extra-gradient methods in constrained time-varying games ?}

\paragraph{Our contribution.} We highlight the following two results as our main contribution :
\begin{itemize}
    \item We construct a constrained periodic game with a common equilibrium and prove optimistic multiplicative weights update do not converge to the equilibrium in this game. See 
    Theorem \ref{thm: OMWU fails}.
    \item We prove that if the game series in a periodic game with simplex constrains have a common equilibrium, then Extra-gradient multiplicative weights update will converge to this equilibrium. See Theorem \ref{T2}. 
\end{itemize}

By combining these two terms, we prove that there exist a clear last-iterate convergence separation between optimistic and extra-gradient methods in constrained periodic games, thereby extending the results of \citep{feng2023last} from unconstrained to constrained settings.

\paragraph{Technical Comparison.}

    The MWU-based algorithms considered in this paper differ from the GDA-based algorithms considered in \citep{feng2023last} in two fundamental ways. Firstly, variations of MWU algorithms are naturally defined on the simplex constraints, allowing our analysis to avoid the difficulty of projecting onto simplex. Secondly, the algorithms considered in \citep{feng2023last} have linear structure, i.e., can be directly analyzed as linear systems, while the MWU-based algorithms have non-linear, making the techniques of \citep{feng2023last} ineffective in our scenario. At a high level, by considering variations of MWU algorithms, we transform the technical difficulties arising from constraints into difficulties related to analyzing non-linear dynamics of MWU-based algorithms. It is worth noting that a similar transformation can be observed in the line of research on establishing last-iterate convergence in static games: \citep{daskalakis2017training} first proved convergence of Optimistic GDA without constraints and then \citep{daskalakis2018last} extended their results to constrained settings for Optimistic MWU. 

    

\paragraph{Organization.} In Section \ref{pre}, we present the necessary background for this work. In Section \ref{main}, we state our main results. In Section \ref{opop}, we explain the main ideas behind the proof of our theoretical results. In Section \ref{exexex},  we provide numerical experiments to support our theoretical findings. Discussions on possible extensions of the current results are presented in Section \ref{dc}.

\section{Preliminaries}\label{pre}

\subsection{Game theory}

\textbf{Zero-sum game.} A two players zero-sum game consists of two agents $\CN = \{1,2\}$, and the losses of both players are determined by a payoff matrix $A  \in \BR^{m \times n}$. At each time $t$, the player 1 selects
a mixed strategy $\tb{x}^t_1$ from the simplex constrains 
\begin{align*}
    \Delta_m = \left\{\tb{x} \in \BR^m | \sum^m_{i=1} \tb{x}_{i} = 1, \tb{x}_i \ge 0 \right\},
\end{align*}
and similarly, the player 2 selects a mixed strategy $\tb{x}^t_2$ from the simplex constrains $\Delta_n$. 
Given that player $1$ selects strategy $\tb{x}_1 \in \Delta_m$ and player $2$ selects strategy $\tb{x}_2 \in \Delta_n$, player 1 receives loss $u_1(x,y) = -\langle \tb{x}_1, A \tb{x}_2 \rangle$, and player 2 receives loss $u_2(x,y) = \langle \tb{x}_2, A^{\top} \tb{x}_1 \rangle$.  Naturally, players want to minimize their loss resulting the following min-max problem: 
\begin{equation}
\max_{\tb{x}_1 \in \Delta_m} \min_{\tb{x}_2 \in \Delta_n} \tb{x}_1^{\top} A \tb{x}_2 \tag{Zero-Sum Game}
\end{equation}

A mixed strategy $\tb{x} \in \Delta_m$ is called fully mixed if $\tb{x}_i > 0$ for all $i \in [m]$. The $\KL$-divergence between two pairs of fully mixed strategies $(\tb{x},\tb{y})$ and $(\tb{x}',\tb{y}') \in \Delta_m \times \Delta_n$ is defined as
\begin{align*}
    \KL \left((\tb{x},\tb{y}),(\tb{x}',\tb{y}')\right) = \sum_{i \in [m]} \tb{x}_i \ln \left( \frac{\tb{x}_i}{\tb{x}'_i} \right) + \sum_{j \in [n]} \tb{y}_j \ln \left( \frac{\tb{y}_j}{\tb{y}'_j} \right).
\end{align*}

The KL-divergence can be considered as a measurement of the distance between two pairs of mixed strategies. Note that for fixed $(\tb{x},\tb{y})$, the KL-divergence will diverge to infinity when $(\tb{x}',\tb{y}')$ approaches the boundary of the simplex constrains, i,e., when some components of $(\tb{x}',\tb{y}')$ tends to zero.

\paragraph{Periodic zero-sum game.}  In this paper, we study games in which the payoff matrices vary over time periodically.

\begin{defn}[Periodic zero-sum games]\label{pg2} A periodic game with period $\CT$ is an infinite sequence of zero-sum bilinear games 
$\{A_t\}^{\infty}_{t = 0} \subset  \BR^{n \times m}$, and $A_{t+\CT} = A_t$ for all $t \ge 1$. 
\end{defn}
Note that the \textit{time-independent game} is a special case of the periodic game with $\CT=1$. The periodic game defined here is the same as \citep{feng2023last}, except they consider the unconstrained case while we consider the constrained case.  A continuous-time counterpart of the periodic zero-sum game was also studied in \citep{fiez2021online}.

\subsection{Learning dynamics in games} 
In this paper we consider two types of learning dynamics : Optimistic Multiplicative Weights Updates  (OMWU) and Extra-gradient Multiplicative Weights Updates (Extra-MWU), which are variants of the Multiplicative Weights Updates algorithms (MWU). Both (OMWU) and (Extra-MWU) possess the last-iterate convergence property in repeated game with a time-independent payoff matrix and simplex constrains, as demonstrated in previous literature \citep{daskalakis2018last,mertikopoulos2018optimistic,wei2020linear,fasoulakis2022forward}.  Here we state their forms within a time-varying context.
\paragraph{MWU.}
The dynamics of MWU is
\begin{align*}\label{OMWU}
&\tb{x}_1^{t+1} = 
\left(\frac{\tb{x}_{1,i}^t e^{\eta (A_t \tb{x}_{2}^t)^i}} { \sum^m_{s=1}\tb{x}_{1,s}^t e^{ \eta (A_t \tb{x}_{2}^t)^s}}\right)^{m}_{ i =1},\\
& \\
&\tb{x}_2^{t+1} = 
\left(\frac{\tb{x}_{2,j}^t e^{- \eta (A^{\top}_t \tb{x}_1^t)^j }} { \sum^m_{s=1}\tb{x}_{2,s}^t e^{- \eta (A^{\top}_t \tb{x}_1^t)^s}}\right)^{n}_{ j =1}.\tag{MWU}
\end{align*}
Here $\eta$ represents the step size. (MWU) belongs to the general class of Follow-the-Regularized-Leader algorithms (FTRL), which play a central role in the online learning problems \citep{shalev2012online}. It is known that when two players both use (MWU) to update their strategies in a time-independent zero-sum game, the trajectories of their strategies will not converge and may diverge to the boundary of the simplex \citep{bailey2018multiplicative}. 

Recently, there are also works that study the dynamical behaviors of continuous-time partner of (MWU) and more general FTRL dynamics in periodic game. It is shown that these dynamics exhibit the Poincar\'{e} recurrence property in a periodic game \citep{fiez2021online}.

\paragraph{Optimistic MWU.}
The dynamics of Optimistic-MWU is
\begin{align*}\label{OMWU}
&\tb{x}_1^{t+1} = 
\left(\frac{\tb{x}_{1,i}^t e^{2 \eta (A_t \tb{x}_{2}^t)^i - \eta (A_{t-1} \tb{x}_2^{t-1})^{i}}} { \sum^m_{s=1}\tb{x}_{1,s}^t e^{2 \eta (A_t \tb{x}_{2}^t)^s - \eta (A_{t-1} \tb{x}_2^{t-1})^s}}\right)^{m}_{ i =1},\\
& \\
&\tb{x}_2^{t+1} = 
\left(\frac{\tb{x}_{2,j}^t e^{-2 \eta (A^{\top}_t \tb{x}_1^t)^j + \eta (A^{\top}_{t-1} \tb{x}_1^{t-1})^j}} { \sum^m_{s=1}\tb{x}_{2,s}^t e^{-2 \eta (A^{\top}_t \tb{x}_1^t)^s +\eta (A^{\top}_{t-1} \tb{x}_1^{t-1})^s}}\right)^{n}_{ j =1}.\tag{OMWU}
\end{align*}
Note that in (OMWU), $t$ and $t-1$ steps are used together to update the step at time $t+1$. 
We will use  $(\tb{x}_1^0, \tb{x}_2^0), (\tb{x}_1^{-1}, \tb{x}_2^{-1})$ to denote the initial conditions for (OMWU). 

Optimistic method was proposed in \citep{popov1980modification} as a variant of gradient descent ascent method in saddle-point optimization problem. The last iterate convergence property of Optimistic Gradient Descent-Ascent (OGDA) in unconstrained bilinear game with a time-independent payoff was proved in \citep{daskalakis2017training}. Recently, there are also works analyzing the regret behaviors of OGDA under a time varying setting \citep{anagnostides2023convergence}. However, the study of (OMWU) in the time-varying setting is still missing in the literature, and the current work partially fills that gap.

\paragraph{Extra-gradient MWU.} 

In Extra-MWU dynamics with a step size of $\eta$, each iteration consists of two steps. In the first step, a half step strategies vectors $(\tb{x}_1^{t+\frac{1}{2}},\tb{x}_2^{t+\frac{1}{2}})$ is calculated based on the payoff vectors in the t-th round as follows :
\begin{align*}
&\tb{x}_1^{t+\frac{1}{2}} = 
\left(\frac{\tb{x}_{1,i}^t e^{\eta(A_t \tb{x}_2^t)^i}} { \sum^m_{s=1}\tb{x}_{1,s}^t e^{\eta(A_t \tb{x}_2^t)^s}}\right)^{m}_{ i =1},\\
&\tb{x}_2^{t+\frac{1}{2}} = 
\left(\frac{\tb{x}_{2,j}^t e^{-\eta(A^{\top}_t \tb{x}_1^t)^j}} { \sum^n_{s=1}\tb{x}_{2,s}^t e^{-\eta(A^{\top}_t \tb{x}_1^t)^s}}\right)^{n}_{ j =1}. 
\end{align*}
The second step for calculating the strategies $(\tb{x}_1^{t+1},\tb{x}_2^{t+1})$ is as follows :
\begin{align*}\label{EMWU}
&\tb{x}_1^{t+1} = 
\left(\frac{\tb{x}_{1,i}^t e^{\eta(A_t \tb{x}_2^{t+\frac{1}{2}})^i}} { \sum^m_{s=1}\tb{x}_{1,s}^t e^{\eta(A_t \tb{x}_2^{t+\frac{1}{2}})^s}}\right)^{m}_{i =1},\\
&\tb{x}_2^{t+1} = 
\left(\frac{\tb{x}_{2,j}^t e^{-\eta(A^{\top}_t \tb{x}_1^{t+\frac{1}{2}})^j}} { \sum^n_{s=1}\tb{x}_{2,s}^t e^{-\eta(A_t^\top \tb{x}_1^{t+\frac{1}{2}})^s}}\right)^{n}_{ j =1}.\tag{Extra-MWU}
\end{align*}

Extra-gradient was firstly proposed in \citep{korpelevich1976extragradient} as a modification of the gradient method in saddle-point optimization problem. It is known that  Extra-gradient Descent-Ascent (Extra-GDA) method converge to the equilibrium in the time-independent bilinear zero-sum game with a linear convergence rate \citep{LiangS18}. Convergence of (Extra-GDA) on convex-concave game was analyzed in \citep{nemirovski2004prox,monteiro2010complexity}, and convergence guarantees for special non-convex-non-concave time-independent game of the more general Extra-gradient Mirror Descent was provided in \citep{mertikopoulos2018optimistic}.


\subsection{Results from dynamical systems}

In this paper, we analyze the last-iterate behavior of learning algorithms in periodic games by modeling them as dynamical systems. The resulting systems possess two characteristics that make their analysis challenging: firstly, they are non-autonomous, i.e., the evolution of the system not only depends on its current state but also on the temporal variables; secondly, they are non-linear.  In this section we introduce the necessary backgrounds on this kind of dynamical systems. 

\begin{defn}[Periodic dynamical system]Let $\CX$ be a subset of $\BR^n$. A discrete,
$\CT$-\textit{periodic dynamical} system is a finite sequence ${f_0,...,f_{\CT-1}}$ of maps
where $f_i : \CX \to \CX$ for $i =0,...,\CT-1$. The sequence can be extended to
a periodic infinite by defining $f_i = f_{i \mod \CT}$ for $i \ge \CT$.
The \textit{trajectory} $\{x_n\}$ of a point $x$ is given by the n-fold composition of these p maps, i.e., $x_n = f_{n-1}\circ \cdots \circ f_1 \circ f_0 (x)$.  
\end{defn}

Periodic dynamical systems are non-autonomous. The dynamical behaviors exhibited by non-autonomous systems can be highly intricate, and typically only results pertaining to linear systems are available \citep{carvalho2015non}. However, the study of periodic dynamical systems can be simplified by analyzing an autonomous system derived from the underlying periodic system \citep{franke2003attractors,colonius2014dynamical}. For simplicity, we present a proposition concerning the convergence behaviors of a periodic system that is useful for our analysis.

\begin{prop}[\citep{franke2003attractors}]\label{attrat} Let $\Tilde{f}_i = f_{i+\CT-1} \circ ... \circ f_{i}$, for $ i \in [\CT]$. Then $\Tilde{f}_i$ is a time-independent dynamical system. If for all $x \in \CX$ and each $i\in [\CT]$, it holds that $\lim_{n \to \infty} \Tilde{f}_i^n(x) = x^*$ for some $x^* \in \CX$, then the periodic system defined by
$\{f_i\}^{\CT}_{i=1}$ will converge to $x^*$ for arbitrary initial points $x \in \CX$.
\end{prop}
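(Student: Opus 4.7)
The plan is to reduce the non-autonomous periodic dynamics to $\CT$ autonomous subsequences, each of which is an orbit of one of the return maps $\tilde{f}_i$, and then to invoke the hypothesis on each. First, I would unfold the trajectory of an arbitrary initial point $x \in \CX$ under the periodic system. By $\CT$-periodicity of the indexing together with the definition $\tilde{f}_i = f_{i+\CT-1}\circ \cdots \circ f_i$, the iterate at time $n = k\CT + i$ satisfies
\[
x_{k\CT + i} \;=\; \tilde{f}_i^{\,k}(x_i), \qquad k \ge 0,\ \ i \in \{0,1,\ldots,\CT-1\},
\]
where $x_i := f_{i-1}\circ\cdots\circ f_0(x)$ is the state after the initial $i$ maps (with the empty composition, for $i=0$, interpreted as the identity). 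This is the only place periodicity is used.

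The next step is to apply the hypothesis to each residue class. Since $x^*$ is a global attractor of $\tilde{f}_i$, applying it at the point $x_i \in \CX$ gives $\tilde{f}_i^{\,k}(x_i) \to x^*$ as $k \to \infty$. Hence each of the $\CT$ arithmetic subsequences $\{x_{k\CT + i}\}_{k \ge 0}$ converges to the \emph{same} limit $x^*$. Since these subsequences partition $\{x_n\}$ by the residue of $n$ modulo $\CT$ and there are only finitely many of them, for any $\varepsilon > 0$ one can choose $K$ large enough that $\|x_{k\CT+i}-x^*\| < \varepsilon$ simultaneously for all $i \in \{0,\ldots,\CT-1\}$ and all $k \ge K$. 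Setting $N := K\CT$ yields $\|x_n - x^*\| < \varepsilon$ for every $n \ge N$, so $x_n \to x^*$, as required.

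The main obstacle --- and really the only point requiring genuine care --- is the interpretation and use of the phrase ``$\lim_{n\to\infty}\tilde{f}_i^{\,n}(x) = x^*$ for some $x^* \in \CX$''; for the conclusion to hold this must be read as the existence of a single \emph{common} attractor shared by every return map $\tilde{f}_i$. If one only knew that each $\tilde{f}_i$ individually possessed some attractor $x^*_i$, the $\CT$ subsequences above could tend to different limits and the full trajectory need not converge, as the toy example with constant maps $f_0 \equiv a$, $f_1 \equiv b$ ($a\ne b$) immediately shows: the two return maps are constants $b$ and $a$ respectively, yet the trajectory merely oscillates between $a$ and $b$. Once this common-attractor interpretation is secured, the remainder of the argument is a routine finite partition-of-indices estimate; in particular, no continuity of the $f_i$ and no conjugacy-type identity $\tilde{f}_{i+1}\circ f_i = f_i \circ \tilde{f}_i$ is needed to close the proof.
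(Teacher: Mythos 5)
Your proof is correct. Note that the paper does not supply its own argument for this proposition; it is stated as a cited result from \citep{franke2003attractors}, so there is no in-paper proof to compare against. The decomposition you use --- splitting the orbit $\{x_n\}$ into the $\CT$ arithmetic subsequences $\{x_{k\CT+i}\}_{k\ge 0}$, identifying each as an orbit of the return map $\tilde f_i$ started at $x_i = f_{i-1}\circ\cdots\circ f_0(x)$, and then using that finitely many subsequences partitioning $\BN$ each converge to the common limit $x^*$ --- is the standard and essentially unique elementary argument, and you carry out the index bookkeeping correctly via the periodic extension $f_j = f_{j \bmod \CT}$. Your remark about needing a \emph{single} common attractor $x^*$ across all $i$ (with the constant-map counterexample) is the right caveat: the paper's phrasing uses a fixed symbol $x^*$ and in its application $x^*$ is always the game equilibrium, so this reading is the intended one, but it is worth flagging. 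The observations that continuity of the $f_i$ is not used and that no conjugacy identity between the $\tilde f_i$ is required are also correct.
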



The proposition above demonstrates that in order to establish the convergence of a periodic system, it suffices to demonstrate the convergence of each corresponding autonomous systems $\Tilde{f}_i$.

In the following, we consider the second characteristic of the dynamical systems arising from our learning algorithms : non-linearity. In general, non-linear dynamical systems exhibit complex behaviors such as chaos \citep{hirsch2012differential}, thereby rendering the understanding of their global behavior challenging. Subsequently, we present results pertaining to the local behaviors of non-linear dynamical systems $\phi$ using the technique of \textit{linearization} \citep{galor2007discrete}.
 
\begin{defn}[Stable, Unstable, and Center eigenspaces.] Let $\phi : \BR^n \to \BR^n$ be a continuous differentiable
function, and $\bar{x}$ be a fixed point of $\phi$, i.e., $\phi(x)=x$. let $\mathcal{D} \phi(\bar{x})$ be the Jacobian matrix of $\phi$ at point $\bar{x}$.
 The stable eigenspace of $\bar{x}$ is defined as
    \begin{align*}
        E^s(\bar{x}) = \text{span}  \{ \text{Eigenvectors of} \ & \mathcal{D}(\bar{x}) \ \text{whose eigenvalues} \\
        & \text{have modules} < 1 \}.
    \end{align*}
Similarly, the unstable (rep. center ) eigenspace $E^u(\bar{x})$ (rep. $E^c(\bar{x})$) of $\bar{x}$ is the subspace spanned by eigenvectors of $\mathcal{D}(\bar{x})$ whose eigenvalues have modules $>1$ (rep. $=1$).
\end{defn}

\begin{prop}[ \citep{galor2007discrete}]\label{decomposition} Let $\phi : \BR^n \to \BR^n$ be a continuous differentiable function, and with the concepts defined as above, we have
\begin{align*}
    \dim E^s(\bar{x}) + \dim E^u(\bar{x}) + \dim E^c(\bar{x}) = n.
\end{align*}
\end{prop}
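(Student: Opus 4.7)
The plan is to reduce the claim to a standard decomposition result for the Jacobian $J := \mathcal{D}\phi(\bar{x})$, viewed as an $n \times n$ real matrix. Up to interpretation, the spaces $E^s, E^u, E^c$ are invariant subspaces of $J$, and the proposition amounts to saying that $\BR^n = E^s(\bar{x}) \oplus E^u(\bar{x}) \oplus E^c(\bar{x})$; taking real dimensions then yields the claimed identity. I will treat ``eigenvectors'' in the generalized sense (as is needed to make the statement hold for non-diagonalizable $J$, and as is standard in linearization theory); under this reading each $E^\bullet(\bar{x})$ is the $J$-invariant subspace spanned by generalized eigenvectors whose eigenvalues lie in the prescribed region of $\BC$.

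First, I would complexify: regard $J$ as an operator on $\BC^n$ and apply the Jordan canonical form to decompose
\begin{align*}
\BC^n = \bigoplus_{\lambda \in \sigma(J)} K_\lambda, \qquad K_\lambda := \ker\bigl((J-\lambda I)^{n}\bigr),
\end{align*}
where $\sigma(J)$ is the spectrum of $J$. This gives $\sum_{\lambda} \dim_{\BC} K_\lambda = n$. Next, I would partition $\sigma(J)$ into the three disjoint sets $\Lambda_s = \{|\lambda|<1\}$, $\Lambda_u = \{|\lambda|>1\}$, $\Lambda_c = \{|\lambda|=1\}$, so that
\begin{align*}
\BC^n = V^s \oplus V^u \oplus V^c, \qquad V^\bullet := \bigoplus_{\lambda \in \Lambda_\bullet} K_\lambda.
\end{align*}

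Since $J$ has real entries, complex eigenvalues come in conjugate pairs and $\overline{K_\lambda} = K_{\bar\lambda}$, so each $V^\bullet$ is stable under complex conjugation. Hence the real parts $E^\bullet(\bar{x}) := V^\bullet \cap \BR^n$ satisfy $V^\bullet = E^\bullet(\bar{x}) \otimes_{\BR} \BC$, which gives $\dim_{\BR} E^\bullet(\bar{x}) = \dim_{\BC} V^\bullet$. Because the partition $\{\Lambda_s, \Lambda_u, \Lambda_c\}$ is a disjoint cover of $\sigma(J)$, summing yields
\begin{align*}
\dim E^s(\bar{x}) + \dim E^u(\bar{x}) + \dim E^c(\bar{x}) = \sum_{\lambda \in \sigma(J)} \dim_{\BC} K_\lambda = n,
\end{align*}
as desired.

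The main obstacle is really only bookkeeping: one has to justify that generalized eigenspaces (a) furnish a direct sum decomposition of $\BC^n$, which is the content of the Jordan form, and (b) descend to real subspaces compatibly with the partition by modulus, which uses that complex conjugation preserves $|\cdot|$ so each $\Lambda_\bullet$ is conjugation-invariant. No dynamical content beyond this linear-algebraic fact is needed, which is why the reference to \citep{galor2007discrete} suffices; my write-up would be essentially a one-paragraph appeal to Jordan decomposition together with the real/complex compatibility argument above.
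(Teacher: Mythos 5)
The paper does not prove Proposition~\ref{decomposition}; it is stated as a citation to \citep{galor2007discrete} and used as a black box, so there is no internal proof to compare against. Your argument is correct and is the standard one: complexify, apply the Jordan decomposition to obtain $\BC^n = \bigoplus_{\lambda} K_\lambda$, partition the spectrum by modulus, and use that real $J$ forces each modulus class $\Lambda_\bullet$ to be closed under conjugation so that $V^\bullet$ descends to a real subspace of the same (real) dimension; summing gives $n$. You were also right to flag that the statement as written (``spanned by eigenvectors'') only literally holds for diagonalizable $\mathcal{D}\phi(\bar{x})$, and that the generalized-eigenvector reading is the one intended — this is exactly the convention needed both here and in the later applications of the proposition in the appendix (Lemmas~\ref{lem: eigenvalue} and~\ref{lem: first two converge to zero}), so your interpretation matches the paper's usage.
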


Proposition \ref{decomposition} implies that any point in $\BR^n$ can be decomposed to linear combination of the vectors belonging to the three eigenspaces defined above. These three eigenspaces provide a full characterization on the local behavior of $\phi$ near the fixed point $\bar{x}$ : if a point $x$ is close to $\bar{x}$, and lies in the stable space $E^s(\bar{x})$, it will converge to $\bar{x}$ after sufficient number of iterations of $\phi$. On the other hand, vectors in $E^u(\bar{x})$ or $E^c(\bar{x})$ will not converge to $\bar{x}$.

\section{Main Results}\label{main}

In this section we state our main results. Under the assumption of the games in a periodic game have an unique common equilibrium, we provide an example to show that (OMWU) fails to converge to the equilibrium and even can diverge to the boundary of the simplex, as stated in Theorem \ref{thm: OMWU fails}. Conversely, (Extra-MWU) can converge to the equilibrium, as shown in Theorem \ref{T2}. This distinction provides a separation on the last-iterate convergence behaviors of (OMWU) and (Extra-MWU). 

\begin{thm}\label{thm: OMWU fails}
    For the periodic game defined by payoff matrices
\begin{align}\label{2-periodic game_m}
 A_t=
\begin{cases}
\begin{bmatrix}
    0 & & 1 \\
    1 & & 0
\end{bmatrix}, & t \textnormal{\ \ is \ odd} \\ 
\\
\begin{bmatrix}
    0 & -1 \\
    -1 & 0
\end{bmatrix}, & t\textnormal{\ \ is \ even}
\end{cases}
\end{align}
 and sufficient small step size $\eta$\footnote{Refer to the requirement for $\eta$ in Proposition~\ref{prop: before4.2} in the Appendix.}, (OMWU) has following properties :
\begin{itemize}
    \item For an arbitrary small neighbourhood $\CU$ of the equilibrium $(\tb{x}_1^*,\tb{x}_2^*)$, there exists an initial condition within $\CU$ that causes (OMWU) to fail in converging to $(\tb{x}_1^*,\tb{x}_2^*)$.
    \item If the initial condition $(\tb{x}_1^0, \tb{x}_2^0), (\tb{x}_1^{-1}, \tb{x}_2^{-1}) \ne (\tb{x}_1^*,\tb{x}_2^*)$, then
    \begin{align*}
        \lim_{n \to \infty}\KL \left((\tb{x}_1^*,\tb{x}_2^*),(\tb{x}_1^n,\tb{x}_2^n)\right) = +\infty.
    \end{align*}
    \end{itemize}
    \end{thm}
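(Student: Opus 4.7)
The plan is to reduce the periodic (OMWU) system to a time-independent discrete dynamical system via Proposition~\ref{attrat}, linearize at the common equilibrium, and exhibit an unstable eigenvalue of the Jacobian; the KL blow-up will then be obtained by combining instability with a monotonicity property tailored to the antisymmetric payoff structure $A_2 = -A_1$. First I would identify the common equilibrium: since both $A_1$ and $A_2 = -A_1$ have matching-pennies form, the uniform distribution $(\tb{x}_1^*,\tb{x}_2^*) = ((1/2,1/2),(1/2,1/2))$ is the unique interior Nash equilibrium for each $A_t$. Writing $F_t$ for the one-step (OMWU) map on the augmented state $z_t = (\tb{x}_1^{t-1},\tb{x}_2^{t-1},\tb{x}_1^{t},\tb{x}_2^{t})$, the composition $\tilde F = F_1 \circ F_0$ is an autonomous map and $z^* = (\tb{x}_1^*,\tb{x}_2^*,\tb{x}_1^*,\tb{x}_2^*)$ is a fixed point of $\tilde F$.

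Next I would carry out the spectral analysis at $z^*$. Collapsing the simplex constraints, write $\tb{x}_1^{t} = (1/2+u_t,\,1/2-u_t)$ and $\tb{x}_2^{t} = (1/2+v_t,\,1/2-v_t)$, so that the effective state is the four-dimensional vector $(u_{t-1},v_{t-1},u_{t},v_{t})$. Since the softmax appearing in (OMWU) is smooth at the interior equilibrium, one can Taylor-expand $F_0$ and $F_1$ to first order around $z^*$, obtaining Jacobians whose entries are explicit low-degree polynomials in $\eta$. Multiplying these yields $\mathcal{D}\tilde F(z^*)$, and the key estimate is to show that for sufficiently small $\eta > 0$ its characteristic polynomial has at least one root of modulus strictly greater than $1$. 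By Proposition~\ref{decomposition} this produces a nonzero unstable eigenspace $E^u(z^*)$. Item~1 then follows by a standard local-stable-manifold argument: in any neighborhood $\CU$ of $z^*$ one can select an initial condition with nonzero component along $E^u(z^*)$, forcing the $\tilde F$-orbit to leave a fixed small neighborhood, which by Proposition~\ref{attrat} rules out convergence of the full periodic (OMWU) trajectory.

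To upgrade non-convergence to the KL-divergence blow-up in item~2, I would exploit the antisymmetric structure $A_2 = -A_1$: over one period the two optimistic correction terms at times $t-1$ and $t$ partially telescope, and the two-step composition resembles a single MWU-like step whose effect can be controlled. The plan is then to run a Bailey--Piliouras-style energy argument: show that $\KL((\tb{x}_1^*,\tb{x}_2^*),(\tb{x}_1^n,\tb{x}_2^n))$ is non-decreasing along $\tilde F$ and strictly increases whenever the current state has a nontrivial component along $E^u(z^*)$. Because instability prevents the orbit from staying in any shrinking neighborhood of $z^*$, the KL increments are uniformly bounded below, driving $\KL \to \infty$ and pushing the iterates to the boundary of $\Delta_2 \times \Delta_2$.

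The main obstacle I expect is this last step: turning local instability into a global statement that the KL divergence actually diverges to infinity, rather than the orbit circulating indefinitely in the interior or being attracted to a non-trivial limit cycle. The linearization only governs behavior in a small neighborhood of $z^*$, so a genuinely non-local invariant is required. I would spend most of the effort engineering a Lyapunov-type quantity (a weighted KL divergence or cross-entropy exploiting the $A_2 = -A_1$ symmetry) that is monotone along $\tilde F$ on the entire interior of $\Delta_2 \times \Delta_2$, and verifying that the only interior recurrent point is $z^*$ itself. Together with the spectral step, this forces every non-equilibrium orbit to accumulate only on the simplex boundary, completing item~2.
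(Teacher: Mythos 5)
Your plan for the first bullet matches the paper's Proposition~\ref{prop: OMWU not converge}: reduce to a four-dimensional autonomous map via the free coordinates $(u_{t-1},v_{t-1},u_t,v_t)$, compute the Jacobian of the two-step composition at the uniform equilibrium, and exhibit an eigenvalue of modulus $>1$ (the paper computes the eigenvalues explicitly as $\tfrac{\eta^2}{2}\pm\tfrac{1}{2}\sqrt{(\eta^2+\eta+1)(\eta^2-\eta+1)}+\tfrac12$). That part is sound and is essentially what the paper does.

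The gap is in the second bullet. You acknowledge that local instability does not by itself yield $\KL\to\infty$, and you propose to close it with a global Lyapunov-type quantity (a weighted KL or cross-entropy monotone on the whole interior, with $z^*$ the only interior recurrent point). Two problems. First, the step ``instability prevents the orbit from staying near $z^*$, hence the KL increments are uniformly bounded below'' is not valid: an orbit can leave every neighborhood of $z^*$ and still accumulate on an interior periodic orbit or other invariant set, or approach the boundary where your proposed increment bound must degenerate. A nonlocal argument is needed, but a globally monotone (weighted) KL does not appear to exist here, and the paper does not produce one. Instead, the paper shows (Proposition~\ref{prop: OMWU fails} and Lemma~\ref{lem: close to boundary}) by an explicit coordinate-by-coordinate induction that the second coordinates $\tb{x}_{1,2}^t,\tb{x}_{2,2}^t$ strictly increase every two steps, with a $c\eta^3$ gain, \emph{only as long as the orbit stays $O(\eta^{1/2})$ away from the boundary}; this is a conditional, not global, monotonicity. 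Second, once the orbit is near the boundary, the argument has to change entirely: the composed map $\CG_1\circ\CG_2$ turns out to have a whole one-parameter curve of fixed points on the boundary of the form $(0,0,a,\tfrac{a e^{-3\eta}}{a e^{-3\eta}+(1-a)})$, none of which are game equilibria, and the paper's Proposition~\ref{prop: converge to boundary} establishes convergence to this curve by a second linearization at these boundary fixed points (Lemmas~\ref{lem: fix point}--\ref{lem: first two converge to zero}). Your proposal does not anticipate this continuum of degenerate boundary fixed points at all, and without the boundary-phase analysis the divergence of KL is not established. So the overall two-stage structure (interior drift estimate via explicit induction, then boundary attractor analysis) is the essential missing idea in your plan.
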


It is known that in a time-independent zero-sum game, (OMWU) and its several variants will converge to the equilibrium of the game \cite{daskalakis2018last,daskalakis2018limit}. The proofs for this kind of results are typically divided into two steps :

\begin{itemize}
\item Firstly, when  $(\tb{x}^t_1,\tb{x}^t_2)$ are far from the equilibrium $(\tb{x}^*_1,\tb{x}^*_2)$, the KL-divergence $\KL((\tb{x}_1^*,\tb{x}_2^*),(\tb{x}_1^t,\tb{x}_2^t)) $ decreases at each step, until $(\tb{x}^t_1,\tb{x}^t_2)$ is sufficiently close to $(\tb{x}^*_1,\tb{x}^*_2)$. 
\item Secondly, there exists a sufficient small neighbourhood of $(\tb{x}^*_1,\tb{x}^*_2)$, such that every points in the neighbourhood will eventually converge to this equilibrium.
\end{itemize}

Theorem \ref{thm: OMWU fails} implies both of these two reasons that lead to the last-iterate convergence of (OMWU) in time-independent games fail in the time-varying game defined by (\ref{2-periodic game_m}). Note that the second point in the theorem is stronger than the first point. However, to provide a clear comparison with (OMWU) in time-independent games, we state them individually.


In Figure (\ref{KL-OMWU}), we present the evolution of the KL-divergence between equilibrium and strategies of players when using OMWU.

\begin{figure}[h]
    \centering
    \includegraphics[width=0.43\textwidth]{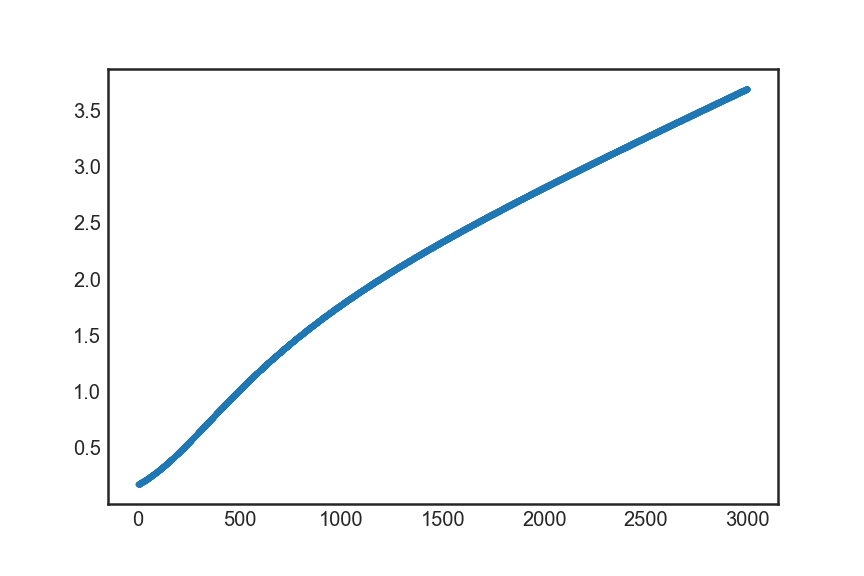}
    \caption{KL-divergence of OMWU in periodic game.}
    \label{KL-OMWU}
\end{figure}




\begin{thm} \label{T2}
For a periodic game defined by the payoff matrices $\{A_t\}^{\CT}_{t=1}$ with an unique\footnote{As games with non-unique equilibrium have a measure of zero in all games, this assumption is not overly restrictive.} common fully mixed equilibrium , (Extra-MWU) will converge to this equilibrium if the step size $\eta$ satisfies $\eta \cdot \max_{t \in [\CT]}\lVert A_t \lVert < 1$. 
\end{thm}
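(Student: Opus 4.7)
The plan is to model Extra-MWU as a $\CT$-periodic dynamical system and apply Proposition~\ref{attrat}. Let $f_t$ denote the one-round Extra-MWU map associated with matrix $A_t$. Because $(\tb{x}_1^*, \tb{x}_2^*)$ is a fully mixed common equilibrium, the vectors $A_t \tb{x}_2^*$ and $A_t^\top \tb{x}_1^*$ are constant across coordinates for every $t$, so the exponential-weights update leaves $(\tb{x}_1^*, \tb{x}_2^*)$ invariant both at the half-step and at the full step. Consequently $(\tb{x}_1^*, \tb{x}_2^*)$ is a fixed point of every $f_t$ and of every composed map $\tilde f_i = f_{i+\CT-1} \circ \cdots \circ f_i$. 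It therefore suffices, by Proposition~\ref{attrat}, to produce a Lyapunov function on the one-step dynamics that drives arbitrary interior initial conditions to this common fixed point.

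The natural Lyapunov candidate is $D_t := \KL((\tb{x}_1^*, \tb{x}_2^*), (\tb{x}_1^t, \tb{x}_2^t))$, and the central step is the per-round contraction
\[
D_{t+1} \leq D_t - \bigl(1 - \eta^2 \|A_t\|^2\bigr)\, \KL\bigl((\tb{x}_1^{t+\frac{1}{2}}, \tb{x}_2^{t+\frac{1}{2}}),(\tb{x}_1^{t}, \tb{x}_2^{t})\bigr),
\]
valid for every $t$ as long as $\eta \|A_t\| < 1$. I would derive this by applying the three-point identity for KL-divergence to both the half-step and the full-step multiplicative updates; writing $z^t := (\tb{x}_1^t, \tb{x}_2^t)$, $z^* := (\tb{x}_1^*, \tb{x}_2^*)$, and letting $g_t$, $g_{t+\frac{1}{2}}$ denote the payoff gradients of the $A_t$-game at $z^t$ and $z^{t+\frac{1}{2}}$, the two identities combine to give the exact equality $D_{t+1} - D_t = \eta\langle g_{t+\frac{1}{2}}, z^* - z^{t+1}\rangle - \KL(z^{t+1}, z^t) - \KL(z^{t+\frac{1}{2}}, z^t) + \eta\langle g_t - g_{t+\frac{1}{2}}, z^{t+1} - z^{t+\frac{1}{2}}\rangle$. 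The inner product $\langle g_{t+\frac{1}{2}}, z^* - z^{t+1}\rangle$ is nonpositive because $(\tb{x}_1^*, \tb{x}_2^*)$ is a saddle point of $A_t$ \emph{individually} (this is exactly where the common-equilibrium hypothesis enters), and the bilinear Lipschitz bound $\|g_t - g_{t+\frac{1}{2}}\| \leq \|A_t\| \cdot \|z^{t+\frac{1}{2}} - z^t\|$, combined with Pinsker's inequality applied to both remaining KL terms, absorbs the last inner product into the KL cross-terms with exactly the factor $(1 - \eta^2 \|A_t\|^2)$. This descent lemma is the main technical obstacle: although the time-independent mirror-prox analysis is well-trodden, care is needed to verify that only the per-matrix saddle-point property (rather than any averaged object) is invoked in the cancellation, so that a single Lyapunov function $D_t$ serves uniformly for all $t$.

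Granted the descent lemma, the rest is routine. The sequence $\{D_t\}$ is monotone non-increasing and bounded below by zero, so it converges and its increments $D_t - D_{t+1}$ are summable; the contraction therefore forces $\KL((\tb{x}_1^{t+\frac{1}{2}}, \tb{x}_2^{t+\frac{1}{2}}),(\tb{x}_1^{t}, \tb{x}_2^{t})) \to 0$, whence $\tb{x}^{t+\frac{1}{2}} - \tb{x}^{t} \to 0$. Since $D_t$ is bounded, the iterates remain in a compact subset of the interior of $\Delta_m \times \Delta_n$, so passing to a subsequential limit $(\bar{\tb{x}}_1, \bar{\tb{x}}_2)$ along any fixed residue class modulo $\CT$ produces a fully mixed point satisfying the fixed-point equation of the corresponding $f_t$; equivalently, $A_t \bar{\tb{x}}_2$ and $A_t^\top \bar{\tb{x}}_1$ are coordinate-constant, so $(\bar{\tb{x}}_1, \bar{\tb{x}}_2)$ is itself a fully mixed equilibrium of every $A_t$. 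Uniqueness of the common equilibrium identifies the limit as $(\tb{x}_1^*, \tb{x}_2^*)$, and monotonicity of $D_t$ upgrades this to convergence of the whole trajectory. Applied to each composed map $\tilde f_i$ and combined with Proposition~\ref{attrat}, this yields global convergence of the periodic Extra-MWU dynamics to the common equilibrium.
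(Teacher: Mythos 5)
Your proposal is correct and follows essentially the same route as the paper's proof: the per-round KL-divergence contraction is derived from the three-point identity for the KL/Bregman divergence together with the observation that the common fully mixed equilibrium is a saddle point of each $A_t$ individually (so the cross term $\langle \tb{y}^{\dagger}, z^{t+\frac{1}{2}} - z^{*}\rangle$ is nonpositive round by round), after which the periodic reduction via Proposition~\ref{attrat} closes the argument. The only superficial differences are that the paper invokes the discrete LaSalle invariance principle (Proposition~\ref{DLIP}) where you give an explicit compactness/subsequence argument that amounts to the same thing, and that your intermediate exact identity has a small index slip (the first cross term should be $\KL(z^{t+1}, z^{t+\frac{1}{2}})$ rather than $\KL(z^{t+1}, z^{t})$, and the saddle-point pairing should be against $z^{t+\frac{1}{2}} - z^{*}$ rather than $z^{*} - z^{t+1}$), neither of which affects the final contraction bound.
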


The last-iterate convergence property of Extra-MWU, and more generally, Extra-gradient mirror descent in time-independent game, was studied in \citep{mertikopoulos2018optimistic}. Note that although they referred to the algorithm they studied optimistic mirror descent, their method aligns with the Extra-gradient paradigm in the sense that the algorithm requires a two-step update in each round. The key property utilized in their proof is that the Bregman divergence (a generalization of the KL-divergence) between a fully mixed equilibrium and current strategies of players, when they use Extra-gradient mirror descent, is a decreasing function. We demonstrate that this property also holds for Extra-MWU in a periodic game if the game series in the periodic game has a common fully mixed equilibrium.

In Figure (\ref{str_EMWU}), we present the trajectories of strategies for a player using the Extra-MWU algorithm. The periodic game here is the same as (\ref{2-periodic game_m}). We can see that the strategy converges to the equilibrium $(0.5,0.5)$ of the player.

\begin{figure}[h]
    \centering
    \includegraphics[width=0.43\textwidth]{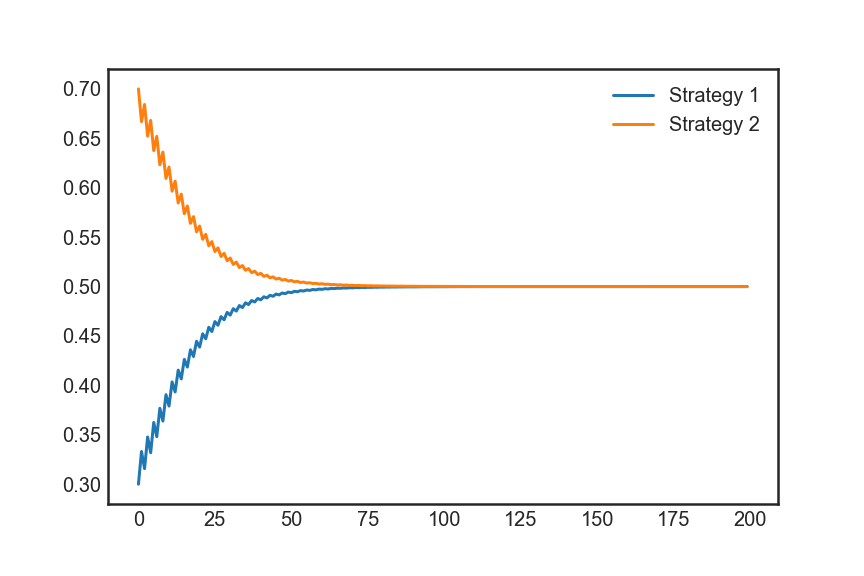}
    \caption{Trajectories of strategies for a player when using Extra-MWU in the periodic game defined in (\ref{2-periodic game_m}).}
    \label{str_EMWU}
\end{figure}

\section{Outline of the Proof}\label{opop}

In this section, we outline the main steps for proving the results stated in Section \ref{main}. Further details are provided in Appendices \ref{A1} and \ref{A2}.

\subsection{Proofs of Theorem \ref{thm: OMWU fails}}
 According to the update rule of (\ref{OMWU}), $\textbf{x}_{1}^{t+2}$ and $\textbf{x}_{2}^{t+2}$  are determined by $\textbf{x}_{1}^{t+1}$, $\textbf{x}_{2}^{t+1}$, $\textbf{x}_{1}^{t}$ and $\textbf{x}_{1}^{t}$. The mixed strategies of both players in the periodic game defined in Theorem~\ref{thm: OMWU fails} lie within the simplex $\Delta_2$,  indicating that $\textbf{x}_{j,2}^t$ can be determined by $\textbf{x}_{j,1}^t$ for $j=1,2$ through the equation $\textbf{x}_{j,2}^t = 1- \textbf{x}_{j,1}^t$. Thus, by tracing the evolution of $\textbf{x}_{j,1}^t$, we can trace the evolution of players' mixed strategies. The dynamics of (OMWU) in Theorem~\ref{thm: OMWU fails} can be described equivalently by the mappings :
 \begin{align*}
     \CG_i : 
     (\tb{x}_{1,1}^{t},\tb{x}_{1,1}^{t+1},\tb{x}_{2,1}^{t},\tb{x}_{2,1}^{t+1}) \to (\tb{x}_{1,1}^{t+1},\tb{x}_{1,1}^{t+2},\tb{x}_{2,1}^{t+1},\tb{x}_{2,1}^{t+2}),
 \end{align*}
$i = 1,2$, where $\CG_1$ is the update rule for even $t$ and $\CG_2$ is the update rule for odd $t$. Furthermore, we have
  \begin{align*}
     (\CG_1\circ\CG_2)^t & \left( (\tb{x}_{1,1}^{-1},\tb{x}_{1,1}^{0},\tb{x}_{2,1}^{-1},\tb{x}_{2,1}^{0}) \right)\\
     &=(\tb{x}_{1,1}^{2t-1},\tb{x}_{1,1}^{2t},\tb{x}_{2,1}^{2t-1},\tb{x}_{2,1}^{2t}),
 \end{align*}
 the divergence of (\ref{OMWU}) can thus be deduced from the divergence of $\CG_1\circ\CG_2$.

With the above construction, the proof of Theorem \ref{thm: OMWU fails} is divided into three parts. Firstly, Proposition~\ref{prop: OMWU not converge} demonstrates the existence of an initial condition in any arbitrary small neighborhood of equilibrium that does not converge to it.
In the second part, Proposition~\ref{prop: OMWU fails} establishes that the KL-divergence monotonically increases until either $\textbf{x}^t_{1}$ or $\textbf{x}^t_{2}$ approaches sufficiently close to the boundary. 
Lastly, in the third part, Proposition~\ref{prop: converge to boundary} illustrates that any point close to the boundary will ultimately converge to it, which lead the KL-divergence tends to infinity.

    \begin{restatable}{prop}{OMWUnotconverge}\label{prop: OMWU not converge}
        In any arbitrary small neighbourhood $\CU$ of the equilibrium $(\tb{x}_1^*,\tb{x}_2^*)$ of (\ref{2-periodic game_m}), there exists an initial condition in $\CU$ such that the trajectory of (OMWU) starting from this initial condition will not converge to $(\tb{x}_1^*,\tb{x}_2^*)$.
    \end{restatable}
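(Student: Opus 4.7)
The plan is to reduce the (OMWU) dynamics to an autonomous dynamical system and exhibit an unstable direction at the equilibrium via linearization, from which the non-convergence statement follows by a standard Hartman--Grobman argument.

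First I would set up the state space as in the outline of the proof: write $p^t := \tb{x}_{1,1}^t$, $q^t := \tb{x}_{2,1}^t$, so that the simplex constraints reduce the dynamics to the variable $(p^{t-1}, q^{t-1}, p^t, q^t) \in (0,1)^4$. The update rule of (\ref{OMWU}) then defines the smooth maps $\CG_1, \CG_2 : (0,1)^4 \to (0,1)^4$ from the outline, and the common equilibrium $\bar{z} := (1/2,1/2,1/2,1/2)$ is a fixed point of the period map $F := \CG_1 \circ \CG_2$. Non-convergence of (OMWU) from some initial condition in every neighborhood of the equilibrium is then equivalent to non-convergence of $F^n$ from such an initial condition.

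Next I would analyze the Jacobian $J := DF(\bar{z}) = D\CG_1(\bar{z})\, D\CG_2(\bar{z})$. The equilibrium lies in the interior of $\Delta_2 \times \Delta_2$ and is preserved by the softmax (OMWU) step at every time by the obvious symmetry $A_{\mathrm{odd}} = -A_{\mathrm{even}}$ together with $\bar{z}_i = 1/2$. Differentiating the softmax and using this symmetry, the Jacobians $D\CG_i(\bar{z})$ take a small-entry closed form that admits the expansion
\begin{equation*}
J = I + \eta M + O(\eta^2),
\end{equation*}
for an explicit matrix $M$ which one can compute by first-order expansion of each exponent around $\bar{z}$ and using the explicit $2\times 2$ payoff structure of (\ref{2-periodic game_m}). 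The key algebraic claim is that $M$ has an eigenvalue with strictly positive real part. The intuition is that the optimistic correction $-\eta A_{t-1} x_2^{t-1}$ in (\ref{OMWU}) uses the \emph{previous} period's payoff matrix, which has the opposite sign from $A_t$; the sign flip converts the correction into an amplifying term rather than a stabilizing one, so the symmetric decay that one sees in the time-independent OMWU analysis is replaced by growth in at least one direction.

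Finally, given an eigenvalue of $M$ with positive real part, for sufficiently small $\eta$ the corresponding eigenvalue of $J$ has modulus strictly greater than $1$, so $E^u(\bar{z})$ is nontrivial by Proposition~\ref{decomposition}. By the Hartman--Grobman theorem (or directly by the local unstable manifold theorem), there is a codimension-at-least-one local submanifold tangent to $E^u(\bar{z})$ consisting of points that do not converge to $\bar{z}$ under iteration of $F$; in particular, for any open neighborhood $\CU$ of $\bar{z}$, we may pick an initial condition in $\CU$ that lies on this unstable manifold, and its $F$-orbit (equivalently, its (OMWU) trajectory on the odd-indexed subsequence) does not converge to $\bar{z}$. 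The main obstacle, and really the only nontrivial step, is the explicit linearization: one must compute $M$ and verify the positive-real-part eigenvalue, which is the concrete calculation I expect to be deferred to the appendix proposition cited in the footnote of Theorem~\ref{thm: OMWU fails}.
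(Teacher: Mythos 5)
Your high-level plan—reduce (OMWU) to the autonomous period map $F = \CG_1 \circ \CG_2$ on $(0,1)^4$, compute the Jacobian at the equilibrium $\bar{z} = (1/2,1/2,1/2,1/2)$, exhibit an eigenvalue of modulus $>1$, and invoke an unstable-manifold argument—is indeed the approach the paper takes. However, the specific perturbation ansatz you propose does not hold, and it would actually fail to detect the instability.

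The problem is the claimed expansion $J = I + \eta M + O(\eta^2)$. At $\eta = 0$, each $\CG_i$ degenerates to the shift map $(z_1,z_2,z_3,z_4) \mapsto (z_2,z_2,z_4,z_4)$, so
\begin{equation*}
J\big|_{\eta=0} = \begin{pmatrix} 0 & 1 & 0 & 0 \\ 0 & 1 & 0 & 0 \\ 0 & 0 & 0 & 1 \\ 0 & 0 & 0 & 1 \end{pmatrix},
\end{equation*}
which is not the identity; it is a singular matrix with eigenvalues $\{1,1,0,0\}$. So the base point of your expansion is wrong. Moreover, the paper's exact (Matlab-assisted) computation gives the eigenvalues of $J$ as
\begin{equation*}
\lambda_{\pm} = \tfrac{1}{2} + \tfrac{\eta^2}{2} \pm \tfrac{1}{2}\sqrt{(\eta^2+\eta+1)(\eta^2-\eta+1)}\,,
\end{equation*}
each with multiplicity two, and for small $\eta$ one has $\lambda_{+} = 1 + \tfrac{3}{4}\eta^2 + O(\eta^4)$ and $\lambda_- = \tfrac{1}{4}\eta^2 + O(\eta^4)$. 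The escape from the unit circle is an $O(\eta^2)$ effect, not $O(\eta)$; the eigenvalue $1$ of $J|_{\eta=0}$ is degenerate (multiplicity two), and a first-order expansion in $\eta$ leaves it unresolved. A matrix $M$ such that $J = J|_{\eta=0} + \eta M + O(\eta^2)$ would not have a positive-real-part eigenvalue governing this perturbation at first order, so your "key algebraic claim" would fail, or at best be inconclusive. The paper sidesteps the degenerate-perturbation issue entirely by computing the eigenvalues of the full Jacobian symbolically and verifying $\lambda_+ > 1$ directly.

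The concluding dynamical step in your proposal (nontrivial $E^u(\bar{z})$, then pick a point on the unstable manifold, which exists in every neighborhood of $\bar{z}$) is correct and arguably cleaner than the argument written in the paper, which appeals to the claim that the stable eigenvectors have a consistent sign in one coordinate and to Proposition~\ref{decomposition} without fully unpacking the unstable-manifold machinery. But the main obstacle you flagged—the explicit linearization—is exactly where your proposed computation goes wrong: one must either compute the characteristic polynomial exactly, or carry the expansion to second order in $\eta$ and do a degenerate perturbation analysis on the two-dimensional eigenvalue-$1$ eigenspace.
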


We prove Proposition \ref{prop: OMWU not converge} by calculating the eigenvalues of the Jacobi matrix of $\CG_2\circ\CG_1$ at the equilibrium, which is a standard technique used in the local analysis of a dynamical system \citep{galor2007discrete}. Similar methods are also used in proving the last-iterate convergence results for several learning algorithms in time-independent games \citep{daskalakis2018last,fasoulakis2022forward}.


        \begin{restatable}{prop}{KLincreasing}
        \label{prop: OMWU fails}
			Under the same conditions stated in Theorem~\ref{thm: OMWU fails}, there exists a constant $c$, which is independent of $\eta$, such that for any $t\ge 3$, 
   \begin{align*}
       \KL ((\textbf{x}_1^*,\textbf{x}_2^*),(\textbf{x}_1^{t+2},\textbf{x}_2^{t+2}))-
       \KL ((\textbf{x}_1^*,\textbf{x}_2^*),(\textbf{x}_1^{t},\textbf{x}_2^{t}))
       \ge c\eta^3
   \end{align*}
  
        unless either $x^t_{1}$ or $x^t_{2}$ is $\mathrm{O}(\eta^\frac{1}{2})$-close to the boundary.
		\end{restatable}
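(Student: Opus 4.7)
My plan is to work in log-odds coordinates $\phi^t_j := \log(\tb{x}^t_{j,1}/\tb{x}^t_{j,2})$ for $j \in \{1,2\}$. In these coordinates (OMWU) becomes affine and the KL divergence against the uniform equilibrium takes the explicit form
\begin{equation*}
\KL = \sum_{j=1,2} \left[\log 2 + \phi_j/2 + \log(1 + e^{-\phi_j})\right],
\end{equation*}
a smooth convex function vanishing quadratically at $\phi = 0$ with Hessian $\tfrac{1}{4} I$. Exploiting $A_{\text{even}} = -A_{\text{odd}}$ for the matrices in~(\ref{2-periodic game_m}), the update reads $\phi^{t+1}_1 - \phi^{t}_1 = -4\eta s_t(\tb{x}^t_{2,1} - \tfrac{1}{2}) + 2\eta s_{t-1}(\tb{x}^{t-1}_{2,1} - \tfrac{1}{2})$ with $s_t = +1$ for odd $t$ and $-1$ for even $t$, and symmetrically after swapping the two player indices. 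Near equilibrium this reduces to $\phi^{t+1}_1 - \phi^t_1 \approx -\eta s_t \phi^t_2 + (\eta/2) s_{t-1}\phi^{t-1}_2$, with controllable cubic corrections coming from the softmax.

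I would compose the updates for one odd step followed by one even step to form the two-step map, Taylor-expand it in $\eta$ and in $\phi$, and substitute into the second-order expansion
\begin{equation*}
\KL^{t+2} - \KL^t = \KL'(\phi^t) \cdot \Delta\phi + \tfrac{1}{2} \Delta\phi^\top \KL''(\phi^t) \Delta\phi + O(\|\Delta\phi\|^3),
\end{equation*}
where $\Delta\phi = \phi^{t+2} - \phi^t$. A careful order-by-order collection in $\eta$ should yield an identity of the shape $\KL^{t+2} - \KL^t = \eta^2\, Q(\phi^t, \phi^{t-1}) + O(\eta^3) + O(\eta\|\phi\|^4)$ for an explicit positive-definite quadratic form $Q$ in $(\phi^t,\phi^{t-1})$.

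The main obstacle is extracting a definite sign from the leading term. The naive $O(\eta)$ contribution to $\Delta\phi$ has the form $\eta(\phi^t_2 - \phi^{t-1}_2)/2$ in the first coordinate (and symmetrically), which when paired with $\KL'(\phi^t) \approx \phi^t/4$ produces the sign-indefinite ``angular momentum'' $\phi^t_2 \phi^{t-1}_1 - \phi^t_1 \phi^{t-1}_2$. The sign flip $A_{\text{even}} = -A_{\text{odd}}$ is essential here: it forces the $O(\eta)$ contributions of the two half-steps within one period to cancel each other, leaving a coherently signed $O(\eta^2)$ residual. Verifying this cancellation cleanly, and checking that the surviving $Q$ is positive definite, is the bulk of the calculation; I expect to handle it by combining the two half-step updates algebraically before taking the KL expansion, rather than expanding each half-step in isolation.

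To close, I would combine the estimate $\KL^{t+2} - \KL^t \ge c'\eta^2 \|(\phi^t, \phi^{t-1})\|^2$ with the linearization from Proposition~\ref{prop: OMWU not converge}, whose two-step Jacobian at equilibrium has an unstable eigenvalue of modulus $1 + \Theta(\eta^2)$. For any nontrivial orbit, iterating this instability for $t \ge 3$ steps guarantees $\|(\phi^t, \phi^{t-1})\|^2 \ge c_1 \eta$, whence $\KL^{t+2} - \KL^t \ge c\eta^3$. The ``$O(\eta^{1/2})$-close to boundary'' exception corresponds precisely to $|\phi^t_j| = \Omega(\log(1/\eta))$, i.e.\ one coordinate of $\tb{x}^t_j$ within $O(\eta^{1/2})$ of $\{0,1\}$; in that regime the cubic softmax correction $O(\eta\|\phi\|^4)$ overtakes the $\eta^2 Q$ term and the clean lower bound fails, so this case is deferred to the separate treatment in Proposition~\ref{prop: converge to boundary}.
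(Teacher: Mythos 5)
Your log-odds change of variables $\phi^t_j=\log(\tb{x}^t_{j,1}/\tb{x}^t_{j,2})$ is indeed the right coordinate system, and the paper effectively works there too: Lemma~\ref{lem: fraction} records exactly the multiplicative updates on $\tb{x}^t_{j,1}/\tb{x}^t_{j,2}$, which become affine in $\phi$. The sign-flip $A_{\mathrm{even}}=-A_{\mathrm{odd}}$ cancellation you highlight is also the right source of the effect. Where the proposal goes astray is in the two pieces that would actually make the bound rigorous.

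First, the Taylor-expansion step is not viable in the full regime of the proposition. The expansion $\KL^{t+2}-\KL^{t}=\eta^2 Q(\phi^t,\phi^{t-1})+O(\eta^3)+O(\eta\lVert\phi\rVert^4)$ only dominates the correction terms when $\lVert\phi\rVert\ll\eta^{1/4}$, but the proposition must hold for all iterates that are merely $\Omega(\eta^{1/2})$-far from the boundary of $\Delta_2$, i.e.\ $\lVert\phi\rVert$ can be of order $1$ or even $\log(1/\eta)$. At such scales the cubic/quartic softmax corrections are larger than $\eta^2 Q$, so the expansion gives no sign information. The paper sidesteps this by never expanding the softmax; it works with exact inequalities on the ratios (Lemma~\ref{lem: ineq}), which are valid on the whole interval $[\tfrac12,1-\eta^{1/2}]$ and are what produce the $\eta^{1/2}$ loss that accounts for the $\eta^3$ (rather than $\eta^{5/2}$ or $\eta^2$) in the statement. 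Moreover, it is not clear that your $Q$ is positive definite as a free quadratic form in $(\phi^t,\phi^{t-1})$. The paper's argument does not rely on such definiteness; instead it runs a genuine induction (Lemma~\ref{lem: close to boundary}) that maintains an ordering invariant between consecutive iterates (e.g.\ $\tb{x}^{t+1}_{2,2}\le\tb{x}^{t}_{2,2}$ and $\tb{x}^{t+1}_{2,2}\ge\tb{x}^{t-1}_{2,2}$) for which the two-step increment is positive. For pairs $(\phi^t,\phi^{t-1})$ that violate this invariant, the increment can have the opposite sign, so the positivity genuinely depends on the trajectory structure and cannot be proved pointwise.

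Second, the closing argument is incorrect. You propose to obtain $\lVert(\phi^t,\phi^{t-1})\rVert^2\gtrsim\eta$ for any $t\ge3$ by invoking the unstable eigenvalue $1+\Theta(\eta^2)$ from Proposition~\ref{prop: OMWU not converge}. But three (or constantly many) iterations of an instability with growth $1+\Theta(\eta^2)$ multiply the distance only by $1+\Theta(\eta^2)$, so an orbit starting arbitrarily close to equilibrium stays arbitrarily close after $t=3$ steps. There is no way to conclude a distance of order $\eta^{1/2}$. In the paper, this is not a deduction but a hypothesis: Proposition~\ref{prop: before4.2} sets $p=\tfrac12\min(\lvert\tb{x}^0_{1,1}-\tfrac12\rvert,\lvert\tb{x}^0_{2,1}-\tfrac12\rvert)$ and then requires $\eta$ small enough that $p\ge16\eta^{1/2}$, i.e.\ the \emph{initial condition} is assumed to be $\Omega(\eta^{1/2})$ away from equilibrium. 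The resulting constant $c$ is then of order $p^2$, hence depends on the initial distance; it is independent of $\eta$ but not universal over all initial conditions. You would need to adopt an analogous hypothesis rather than derive a lower distance bound from the local spectrum.
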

  
We prove Proposition~\ref{prop: OMWU fails} by directly tracing the trajectories of mixed strategies as they evolve under (OMWU). Proposition~\ref{prop: OMWU fails} also implies that if the current mixed strategies used by players are far from boundary of the simplex constrains, under each iterate of (OMWU), they will steadily approach the boundary.
  

    \begin{restatable}{prop}{OMWUlocallyconverging}		\label{prop: converge to boundary}
			There exists a neighborhood $\CW$ of the boundary of the simplex constrains such that for all $(\textbf{x}_{1,1}^{-1},\textbf{x}_{1,1}^{0},\textbf{x}_{2,1}^{-1},\textbf{x}_{2,1}^{0})\in \CW$, we have
   \begin{align*}
          \lim_{n \to \infty} \KL ((\tb{x}_{1,1}^*,&\tb{x}_{1,1}^*,\tb{x}_{2,1}^*,\tb{x}_{2,1}^*), \\ &(\CG_1 \circ \CG_2)^n(\textbf{x}_{1,1}^{-1},\textbf{x}_{1,1}^{0},\textbf{x}_{2,1}^{-1},\textbf{x}_{2,1}^{0})) 
          =+\infty.
   \end{align*}
		\end{restatable}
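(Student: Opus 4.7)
The plan is a local linearization analysis of $\CG_1 \circ \CG_2$ near fixed points on the boundary of $[0,1]^4$, combined with Proposition~\ref{attrat} to lift the conclusion to the periodic system. The key structural observation is that each face $\{\tb{x}_{j,1}^t = 0\}$ (and by symmetry $\{\tb{x}_{j,1}^t = 1\}$) is forward-invariant under (OMWU), since a multiplicative update preserves zero coordinates.

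First, I would restrict $\CG_1 \circ \CG_2$ to a boundary face, say $\{\tb{x}_{1,1}^{t-1} = \tb{x}_{1,1}^t = 0\}$, where player 1 is locked into the pure strategy $(0,1)$. The reduced dynamics for $\tb{x}_2$ against this fixed opponent is itself an (OMWU) iteration with alternating payoff vectors, and a direct computation shows it admits a 2-periodic orbit $\bar{\tb{p}} = (0, 0, a, b)$, which is a fixed point of $\CG_1 \circ \CG_2$. Next, I would perform the transversal stability analysis at $\bar{\tb{p}}$: the multiplicative form of (OMWU) shows that a small perturbation $\tb{x}_{1,1}^t = \epsilon$ evolves, to leading order in $\epsilon$, by the factor $\exp\{2\eta[(A_t \tb{x}_2^t)^1 - (A_t \tb{x}_2^t)^2] - \eta[(A_{t-1}\tb{x}_2^{t-1})^1 - (A_{t-1}\tb{x}_2^{t-1})^2]\}$. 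Composing two such steps at $\bar{\tb{p}}$, and using the sign-flipping structure of the payoff matrices in \eqref{2-periodic game_m} together with $\tb{x}_2^{t+2} = \tb{x}_2^t$ along the orbit, should yield a net transversal contraction factor strictly less than $1$.

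The local stable manifold theorem (a standard consequence of Proposition~\ref{decomposition}) then supplies a neighborhood $\CW_1$ of $\bar{\tb{p}}$ in which every trajectory converges to the face $\{\tb{x}_{1,1} = 0\}$. Repeating this argument for the three analogous fixed points on the other boundary faces (by swapping players and swapping strategy labels, exploiting the symmetries of the game) produces neighborhoods $\CW_2, \CW_3, \CW_4$, and $\CW = \bigcup_{i=1}^4 \CW_i$ is the desired neighborhood of the boundary. Any trajectory starting in $\CW$ satisfies $\tb{x}_{j,i}^n \to 0$ for some index $(j,i)$; since the fully mixed equilibrium has $\tb{x}_{j,i}^* = 1/2 > 0$, the corresponding term $\tb{x}_{j,i}^*\ln(\tb{x}_{j,i}^*/\tb{x}_{j,i}^n)$ in $\KL((\tb{x}_1^*, \tb{x}_2^*), (\tb{x}_1^n, \tb{x}_2^n))$ tends to $+\infty$.

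The main obstacle is verifying the strict negativity of the transversal Lyapunov exponent at $\bar{\tb{p}}$. Because the leading-order contributions of the alternating payoff matrices tend to cancel over a single period, a careful Jacobian calculation retaining terms at second order in $\eta$ is required (in the same spirit as Proposition~\ref{prop: OMWU fails}). A related subtlety is that the reduced dynamics on each boundary face admits a one-parameter family of 2-periodic orbits rather than an isolated attractor, so the transversal stability must be established uniformly across this family in order for the union of basins to cover an honest neighborhood of the entire boundary.
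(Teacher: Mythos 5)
Your plan follows the paper's proof essentially step by step: the paper exhibits the one-parameter family of boundary fixed points $(0,0,a,\tfrac{ae^{-3\eta}}{ae^{-3\eta}+(1-a)})$ of $\CG_1\circ\CG_2$ (Lemma~\ref{lem: fix point}), computes the Jacobian eigenvalues there to be $0,\,0,\,1,\,e^{-2\eta a(1-a)(e^{3\eta}-1)/(a+(1-a)e^{3\eta})}$ with the eigenvalue-$1$ eigenvector tangent to the family and hence of the form $(0,0,\ast,\ast)$ (Lemma~\ref{lem: eigenvalue}), and then applies the local eigenspace decomposition of Proposition~\ref{decomposition} to conclude that the first two coordinates converge to zero, forcing $\KL\to+\infty$ (Lemma~\ref{lem: first two converge to zero}). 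The subtlety you correctly flag --- a nonisolated curve of fixed points whose transversal contraction degenerates as $a\to 0$ or $a\to 1$ --- is present in the paper as well, where it is handled only informally by invoking the decomposition pointwise along the family.
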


By combining Proposition \ref{prop: converge to boundary} and Proposition \ref{prop: OMWU fails}, we can obtain a comprehensive understanding on the dynamics of (OMWU) in the games defined in Theorem \ref{thm: OMWU fails}. Firstly, when the mixed strategies are far away from the boundary of the simplex, they will rapidly 
approach the boundary of the simplex (Proposition \ref{prop: OMWU fails}). Secondly, once they are close enough to the boundary, they will be attracted to it, causing the KL-divergence tend to infinity (Proposition \ref{prop: converge to boundary}). 

We prove Proposition \ref{prop: converge to boundary} by analyzing the eigenvalues and the corresponding stable eigenspace of the Jacobian matrix of $\CG_1 \circ \CG_2$ at its fixed points. Interestingly, we find that these fixed points form a continuous curve, and none of the points on this curve are equilibria. This phenomenon is novel in periodic games because in time-independent games, the dynamical system modeling the learning algorithm usually only has discrete equilibrium points as fixed points \citep{daskalakis2018last}. In Figure (\ref{Fixed points}), we present these curves composed of the fixed points  of $\CG_1 \circ \CG_2$ for different step sizes.


\begin{figure}[h]
    \centering
    \includegraphics[width=0.43\textwidth]{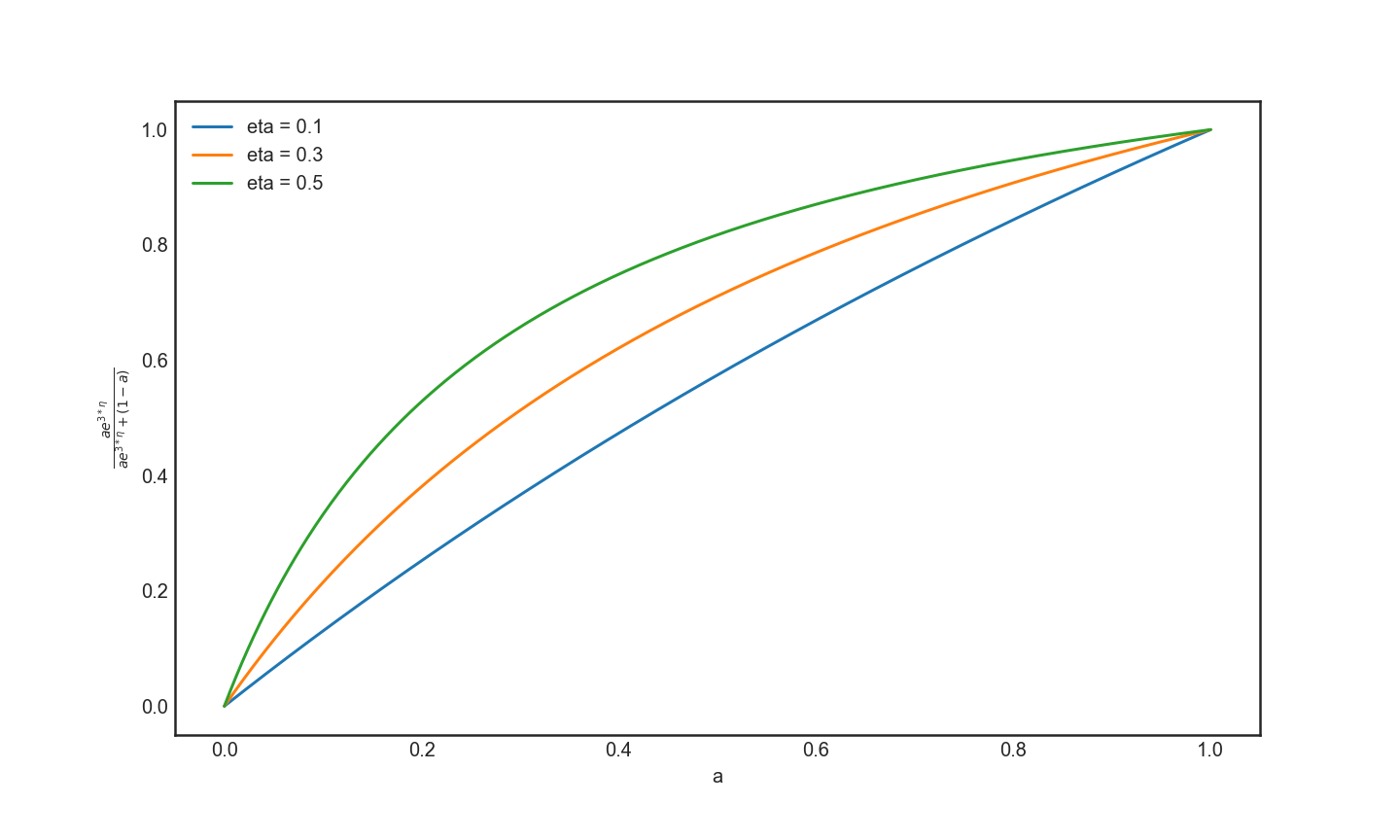}
    \caption{Curves composed of the fixed points  of $\CG_1 \circ \CG_2$.}
    \label{Fixed points}
\end{figure}

\subsection{Proofs of Theorem \ref{T2}}

Recall that in the (\ref{EMWU}) algorithm, each update from $(\tb{x}^t_1,\tb{x}^t_2)$ to $(\tb{x}^{t+1}_1,\tb{x}^{t+1}_2)$ is divided into two steps: Firstly, an intermediate step
$(\tb{x}^{t+\frac{1}{2}}_1,\tb{x}^{t+\frac{1}{2}}_2)$ is calculated based on the players' payoff in the t-th round of the game. Secondly, $(\tb{x}^t_1,\tb{x}^t_2)$ and 
the intermediate step are used together to calculate 
$(\tb{x}^{t+1}_1,\tb{x}^{t+1}_2)$. Since we are discussing the periodic game, the update rule of (\ref{EMWU}) in the current round also depends on the special payoff matrix $A_i$ for $i \in [\CT]$ in that same round. We use 
\begin{align*}
    \CF_{i} : \Delta_m \times \Delta_n &\to \Delta_m \times \Delta_n \\
    (\tb{x}^t_1,\tb{x}^t_2) &\to (\tb{x}^{t+1}_1,\tb{x}^{t+1}_2)
\end{align*}
to denote the dynamical system determined by the  (\ref{EMWU}) algorithm with payoff matrix
$A_i$. Thus the algorithm is described by the $\CT$-periodic dynamical system defined by $\{\CF_i \}^{\CT}_{i =1}$.

From Proposition \ref{attrat}, for such a periodic dynamical system, we can study its convergence property by analyzing the corresponding non-autonomous system defined as follows:
\begin{align*}
    \tilde{\CF}_i = \CF_{i+\CT-1} \circ \CF_{i+\CT-2} \circ ...\circ \CF_{i+1} \circ \CF_{i},
\end{align*}
where $i \in [\CT]$. Furthermore, the periodic system converges to $(\tb{x}_1^*,\tb{x}_2^*)$ if  $\tilde{\CF}_i$ converge to $(\tb{x}_1^*,\tb{x}_2^*)$ for all $i$. Thus, the main step to prove Theorem \ref{T2} is to establish convergence results for $\tilde{\CF}_i$.

For a fixed $(\tb{x}_1,\tb{x}_2)$, $\KL \left((\tb{x}_1,\tb{x}_2),(\tb{x}_1',\tb{x}_2')\right) = 0$ if and only if $ (\tb{x}_1',\tb{x}_2') = (\tb{x}_1,\tb{x}_2)$. Thus to prove $\Tilde{\CF}_i$ converges to the equilibrium $(\tb{x}_1^*,\tb{x}_2^*)$, it is enough to prove 
\begin{align*}
    \lim_{n \to \infty} \KL \left( (\tb{x}_1^*,\tb{x}_2^*), \CF^n_i (\tb{x}_1,\tb{x}_2)\right) = 0, 
\end{align*}
for arbitrary initial point $(\tb{x}_1,\tb{x}_2)$. The following proposition states that in a periodic zero-sum game, the KL-divergence between the equilibrium and the current strategies decreases under an iteration of $\tilde{\CF}_i$.

\begin{prop}\label{decreasing_KL1} Under the same assumption as Theorem \ref{T2}, for any $i \in [\CT]$ and $n$, if the step size $\eta$ in (Extra-MWU) satisfies $\eta \cdot \max_{t \in [\CT]}\lVert A_t \lVert < 1$, then we have
		\begin{align*}
			\KL & \left( (\tb{x}_1^*,\tb{x}_2^*),  \tilde{\CF}_i (\tb{x}_1^{n\CT+i},  \tb{x}_2^{n\CT+i})  \right)  \\
  & \le \KL\left( (\tb{x}_1^*,\tb{x}_2^*), (\tb{x}_1^{n\CT+i},\tb{x}_2^{n\CT+i})  \right),
		\end{align*}
		and the equal holds if and only if $ (\tb{x}_1^{n\CT+i},\tb{x}_2^{n\CT+i})=(\tb{x}_1^*,\tb{x}_2^*)$.
	\end{prop}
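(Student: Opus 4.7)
The plan is to prove a per-step KL-descent bound for each individual map $\CF_t$, then chain the bound across the $\CT$ maps composing $\tilde{\CF}_i$. Setting $\tb{z}^t=(\tb{x}_1^t,\tb{x}_2^t)$, $z^*=(\tb{x}_1^*,\tb{x}_2^*)$, and writing the linear operator $V_t(\tb{z})=(-A_t\tb{x}_2,\,A_t^\top\tb{x}_1)$, the two substeps of (Extra-MWU) are proximal updates in the negative-entropy geometry $\psi$, whose Bregman divergence $D_\psi$ coincides with KL on $\Delta_m\times\Delta_n$. The first-order optimality conditions for these two proximal problems yield, for every $z\in\Delta_m\times\Delta_n$,
\begin{equation*}
\langle \eta V_t(\tb{z}^t)+\nabla\psi(\tb{z}^{t+\frac{1}{2}})-\nabla\psi(\tb{z}^t),\,z-\tb{z}^{t+\frac{1}{2}}\rangle\ge 0,
\end{equation*}
\begin{equation*}
\langle \eta V_t(\tb{z}^{t+\frac{1}{2}})+\nabla\psi(\tb{z}^{t+1})-\nabla\psi(\tb{z}^t),\,z-\tb{z}^{t+1}\rangle\ge 0.
\end{equation*}

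Plugging $z=\tb{z}^{t+1}$ into the first inequality and $z=z^*$ into the second, and combining with the three-point identity for $D_\psi$, one arrives at the standard mirror-prox style bound
\begin{equation*}
D_\psi(z^*,\tb{z}^{t+1})\le D_\psi(z^*,\tb{z}^t)-\eta\langle V_t(\tb{z}^{t+\frac{1}{2}}),\,\tb{z}^{t+\frac{1}{2}}-z^*\rangle - D_\psi(\tb{z}^{t+\frac{1}{2}},\tb{z}^t)-D_\psi(\tb{z}^{t+1},\tb{z}^{t+\frac{1}{2}})+\eta\langle V_t(\tb{z}^{t+\frac{1}{2}})-V_t(\tb{z}^t),\,\tb{z}^{t+1}-\tb{z}^{t+\frac{1}{2}}\rangle.
\end{equation*}
Two facts close the single-step argument. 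First, $V_t$ is skew-symmetric, so $\langle V_t(\tb{z}^{t+\frac{1}{2}}),\tb{z}^{t+\frac{1}{2}}-z^*\rangle=\langle V_t(z^*),\tb{z}^{t+\frac{1}{2}}-z^*\rangle$, which is non-negative because $z^*$ is an equilibrium of the bilinear game with payoff $A_t$ and therefore satisfies the Minty variational inequality. Second, by H\"older together with the Lipschitz bound $\|V_t(u)-V_t(v)\|_\infty\le\|A_t\|\cdot\|u-v\|_1$ and Pinsker's inequality $D_\psi(a,b)\ge\tfrac{1}{2}\|a-b\|_1^2$,
\begin{equation*}
\eta\langle V_t(\tb{z}^{t+\frac{1}{2}})-V_t(\tb{z}^t),\,\tb{z}^{t+1}-\tb{z}^{t+\frac{1}{2}}\rangle\le \eta\|A_t\|\cdot\bigl(D_\psi(\tb{z}^{t+\frac{1}{2}},\tb{z}^t)+D_\psi(\tb{z}^{t+1},\tb{z}^{t+\frac{1}{2}})\bigr).
\end{equation*}
Substituting and invoking $\eta\cdot\max_{t\in[\CT]}\|A_t\|<1$ gives the single-step descent
\begin{equation*}
D_\psi(z^*,\tb{z}^{t+1})\le D_\psi(z^*,\tb{z}^t)-(1-\eta\|A_t\|)\bigl(D_\psi(\tb{z}^{t+\frac{1}{2}},\tb{z}^t)+D_\psi(\tb{z}^{t+1},\tb{z}^{t+\frac{1}{2}})\bigr).
\end{equation*}

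Chaining this inequality through the $\CT$ maps $\CF_i,\CF_{i+1},\ldots,\CF_{i+\CT-1}$ comprising $\tilde{\CF}_i$ yields the proposition's inequality. For the equality case, equality in the final bound forces equality at each intermediate step, hence $D_\psi(\tb{z}^{t+\frac{1}{2}},\tb{z}^t)=D_\psi(\tb{z}^{t+1},\tb{z}^{t+\frac{1}{2}})=0$ for every $t$, i.e., the iterate stays constant throughout the $\CT$ rounds. Since (Extra-MWU) preserves the relative interior of $\Delta_m\times\Delta_n$, any fully mixed fixed point of a single $\CF_t$ must have $A_t\tb{x}_2$ and $A_t^\top\tb{x}_1$ with constant entries, i.e., be an interior equilibrium of the bilinear game with matrix $A_t$; requiring this simultaneously for all $t\in[\CT]$ means the point is a common interior equilibrium, which by the uniqueness hypothesis must be $z^*$.

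The main obstacle is the tightness of the cross-term absorption: the two Bregman penalties $D_\psi(\tb{z}^{t+\frac{1}{2}},\tb{z}^t)$ and $D_\psi(\tb{z}^{t+1},\tb{z}^{t+\frac{1}{2}})$ must swallow exactly the Lipschitz error from $V_t$, and it is this balance that dictates the sharp step-size condition $\eta\cdot\max_{t\in[\CT]}\|A_t\|<1$. Using the same $z^*$ across all $\CT$ rounds is where the common-equilibrium hypothesis is indispensable: without it, the non-negativity of $\langle V_t(\cdot),\,\cdot-z^*\rangle$ would fail at some $t$ and the chained descent would collapse.
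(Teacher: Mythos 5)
Your proposal is correct and follows essentially the same route as the paper: a per-step Bregman (KL) descent estimate for each $\CF_t$ derived from the three-point identity, the common-equilibrium condition $\langle V_t(z^*),\,\tb{z}^{t+\frac12}-z^*\rangle\ge 0$, and absorption of the Lipschitz cross-term into the two intermediate Bregman penalties under $\eta\max_t\|A_t\|<1$, followed by telescoping over the period. The paper presents the same computation in the MWU/Fenchel notation rather than the explicit proximal-VI form you use, and your treatment of the equality case is actually a bit more careful than the paper's, which jumps to a strict inequality without isolating the $\tb{z}^{t+\frac12}=\tb{z}^t$ degeneracy.
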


The proof of Proposition \ref{decreasing_KL1} relies on a detailed analysis of the behavior of the KL-divergence under two-step method of proof of (MWU). Such a result, where the KL-divergence decreases, also plays an important role in proving convergence results for both (OMWU) and (Extra-MWU) in static games \citep{mertikopoulos2018optimistic,daskalakis2018last,fasoulakis2022forward}. 

Proposition \ref{decreasing_KL1} is not sufficient to guarantee the convergence of $\tilde{\CF}_i$ to the equilibrium, as the rate at which the KL-divergence decreases can be slow when the current strategy is close to the equilibrium. To address this issue, we employ the following LaSalle invariance principle.

\begin{prop}[\cite{la1976stability}]\label{DLIP1} Let  $G$ be any set in $\BR^m$. Consider a difference equations system defined by a map $T : G \to G$ that is well defined for any $x \in G$ and continuous at any $x \in G$. Suppose there exists a scalar map $V : \bar{G} \to \BR$ satisfying 
\begin{itemize}
    \item $V(x)$ is continuous at any $x \in \bar{G}$,
    \item $V\left(T(x)\right) - V(x) \le 0$ for any $x \in G$.
\end{itemize}
    For any $x_0 \in G$, if the solution to the following initial-value problem
        $x(n+1) = T(x(n)), x(0) = x_0,$
satisfying that $\{ x(n) \}^{\infty}_{n=1}$ is bounded and $x(n) \in G$ for any $n \in \BN$, then there exists some $c \in \BR$ such that 
\begin{align*}
    x(n) \to M \cap V^{-1}(c)
\end{align*}
as $n \to \infty$, where $V^{-1}(c) = \{ x \in \BR^m \lvert V(x) = c \}$, and $M$ is the largest invariant set in 
\begin{align*}
    E= \{x \in G \ \lvert \  \ V(T(x))-V(x)=0 \}.
\end{align*}
\end{prop}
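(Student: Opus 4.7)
The plan is as follows. Since the hypothesis gives $V(T(x)) - V(x) \le 0$ for every $x \in G$, the sequence $V(x(n))$ is monotonically non-increasing along any trajectory. Boundedness of $\{x(n)\}$ together with continuity of $V$ on $\bar{G}$ implies $V(x(n))$ is bounded below (the closure of the trajectory is a compact subset of $\bar{G}$ on which $V$ attains its infimum), so $V(x(n)) \to c$ for some $c \in \BR$. This already pins down the level set $V^{-1}(c)$ as the target of any convergence argument.

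Next, I would study the omega-limit set $\Omega \subset \bar{G}$ of the trajectory $\{x(n)\}$, defined as the set of subsequential limits. Boundedness and Bolzano--Weierstrass make $\Omega$ nonempty and compact. For any $y \in \Omega$ with $x(n_k) \to y$, continuity of $V$ at $y$ yields $V(y) = \lim_{k} V(x(n_k)) = c$, so $\Omega \subset V^{-1}(c)$. To establish invariance of $\Omega$ under $T$, I would pick $y \in \Omega$ with $x(n_k) \to y$; continuity of $T$ at $y$ gives $T(y) = \lim_k T(x(n_k)) = \lim_k x(n_k+1)$, and the latter limit lies in $\Omega$ by definition, so $T(y) \in \Omega$. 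Combined with $V(T(y)) = c = V(y)$, every such $y$ satisfies $V(T(y)) - V(y) = 0$, placing $y$ in the set $E$. Since $\Omega$ is invariant under $T$ and contained in $E$, it must lie inside the largest invariant subset $M \subset E$, giving $\Omega \subset M \cap V^{-1}(c)$.

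The final step is the standard fact that a bounded sequence in $\BR^m$ approaches its omega-limit set: for any $\varepsilon > 0$, only finitely many $x(n)$ can lie outside an $\varepsilon$-neighborhood of $\Omega$, else Bolzano--Weierstrass would produce an accumulation point outside $\Omega$, contradicting the definition. Hence the distance from $x(n)$ to $\Omega$ tends to zero, and combined with the inclusion above this yields $x(n) \to M \cap V^{-1}(c)$ as required.

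The main technical obstacle is that $T$ is only assumed continuous on $G$, whereas accumulation points of the trajectory might in principle lie on $\partial G$ where $T$ need not even be defined, making the invariance step delicate. In the setting of the intended application (Theorem \ref{T2}), the trajectory remains in a compact subset of the relative interior of the product of simplices on which $T = \tilde{\CF}_i$ is smooth, so the invariance argument goes through without issue; in full generality one needs a mild additional hypothesis (for example, openness of $G$ together with $\Omega \subset G$, or a uniform lower bound on the distance from $x(n)$ to $\partial G$) to close this gap cleanly.
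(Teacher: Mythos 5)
The paper does not prove Proposition~\ref{DLIP1}; it is quoted verbatim as a known result from \citet{la1976stability} (and restated as Proposition~\ref{DLIP} in the appendix, again without proof), so there is no in-paper argument to compare against. Your proof follows the standard textbook route to the discrete-time LaSalle invariance principle: establish that $V(x(n))$ is monotone and bounded, hence converges to some $c$; pass to the $\omega$-limit set $\Omega$; use continuity of $V$ to put $\Omega$ inside $V^{-1}(c)$ and continuity of $T$ to show $\Omega$ is positively invariant; conclude $\Omega\subseteq M\cap V^{-1}(c)$ and invoke the fact that a bounded orbit approaches its $\omega$-limit set. All of these steps are correct as you have written them.

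You have also correctly flagged the only real subtlety: the hypotheses as stated by the paper give $V$ continuous on $\bar G$ but $T$ defined and continuous only on $G$, so if $\Omega$ touches $\partial G$ the invariance step (and even the meaning of $E$ and $M$ at such points) breaks down. LaSalle's own formulation adds the hypothesis that the closure of the positive orbit lies in $G$ (equivalently, that the orbit has compact closure contained in $G$); the paper's restatement silently drops this, and you are right that some such assumption is needed in general. In the application to Theorem~\ref{T2} the trajectory stays in the interior of $\Delta_m\times\Delta_n$ because the KL-divergence to a fully mixed equilibrium is nonincreasing and would blow up on the boundary, so the gap is harmless there, exactly as you observe. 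Overall: correct proof, standard approach, and an accurate diagnosis of the one place where the stated hypotheses are slightly too weak.
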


In our case, $\tilde{\CF}_i$ plays the role of $T$, $\Delta_m \times \Delta_n$ plays the role of $G$, and according to Proposition \ref{decreasing_KL1}, KL-divergence can serve as the scalar map $V$. The LaSalle invariance principle guarantees that the limit point under the iteration of 
$\tilde{\CF}_i$ lies in the set consists of points $(\tb{x}_1,  \tb{x}_2)$ that makes 
\begin{align*}
    \KL   \left( (\tb{x}_1^*,\tb{x}_2^*),  \tilde{\CF}_i (\tb{x}_1,  \tb{x}_2)  \right) 
     =  \KL  \left( (\tb{x}_1^*,\tb{x}_2^*),  (\tb{x}_1,  \tb{x}_2)  \right)
\end{align*}
Moreover, according to Proposition \ref{decreasing_KL1}, the only possible such $(\tb{x}_1,  \tb{x}_2)$ is the equilibrium point, this finish the proof that under the iteration of $\tilde{\CF}_i$, 
all initial points in $\Delta_m \times \Delta_n$ will converge to the equilibrium of the periodic game. Combining this with Proposition \ref{attrat}, we can conclude that (Extra-MWU) will converge to the equilibrium.

\section{Experiments}\label{exexex}

In this section we provide additional numerical experiments to support our theoretical findings. In each experiments we construct periodic games with common equilibrium, and provide numerical results on the mixed strategies and KL-divegence of (Extra-MWU) and (OMWU) on these games. 

\subsection{Experiment 1}

The payoff matrix in this experiment is a 2-periodic game defined by
\begin{align*}
    A_t=
\begin{cases}
\left( (0, 0.25, 0.75), (1.5, 0, 0), (0, 1, 0) \right), & t \textnormal{\ \ is \ odd.} \\ 
\\
\left((0, 0.75, 0.25), (1.5, 0, 0), (0, 0, 1) \right), & t\textnormal{\ \ is \ even.}
\end{cases}
\end{align*}
In Figure (\ref{em1}), we present experimental results for (Extra-MWU). In (a) of Figure (\ref{em1}), we can see the mixed strategy of 2-player converge to the equilibrium point $(1/3 , 1/3, 1/3)$. In (b) of Figure (\ref{em1}), we can see $\KL \left( (\tb{x}_1^*,\tb{x}_2^*), (\tb{x}_1^t,\tb{x}_2^t) \right) \to 0$ as time $\to \infty$. This support the result in Theorem \ref{T2}.

In Figure (\ref{om1}), we present experimental results for (OMWU). In (a) of Figure (\ref{om1}), we can see the mixed strategy of 2-player do not converge to the equilibrium. In (b) of Figure (\ref{om1}), we can see $\KL \left( (\tb{x}_1^*,\tb{x}_2^*), (\tb{x}_1^t,\tb{x}_2^t) \right) \to \infty$ as time $\to \infty$. In (b) of Figure (\ref{om1}), we can see $\KL \left( (\tb{x}_1^*,\tb{x}_2^*), (\tb{x}_1^t,\tb{x}_2^t) \right) \to \infty$ as time $\to \infty$, this implies players' mixed strategies will diverge to the boundary of the simplex, thus the phenomenon here is similar to Theorem \ref{thm: OMWU fails}.

\begin{figure}[h]
    \centering
    \subfigure[Mixed strategy of 2-player]{
        \includegraphics[width=1.5in]{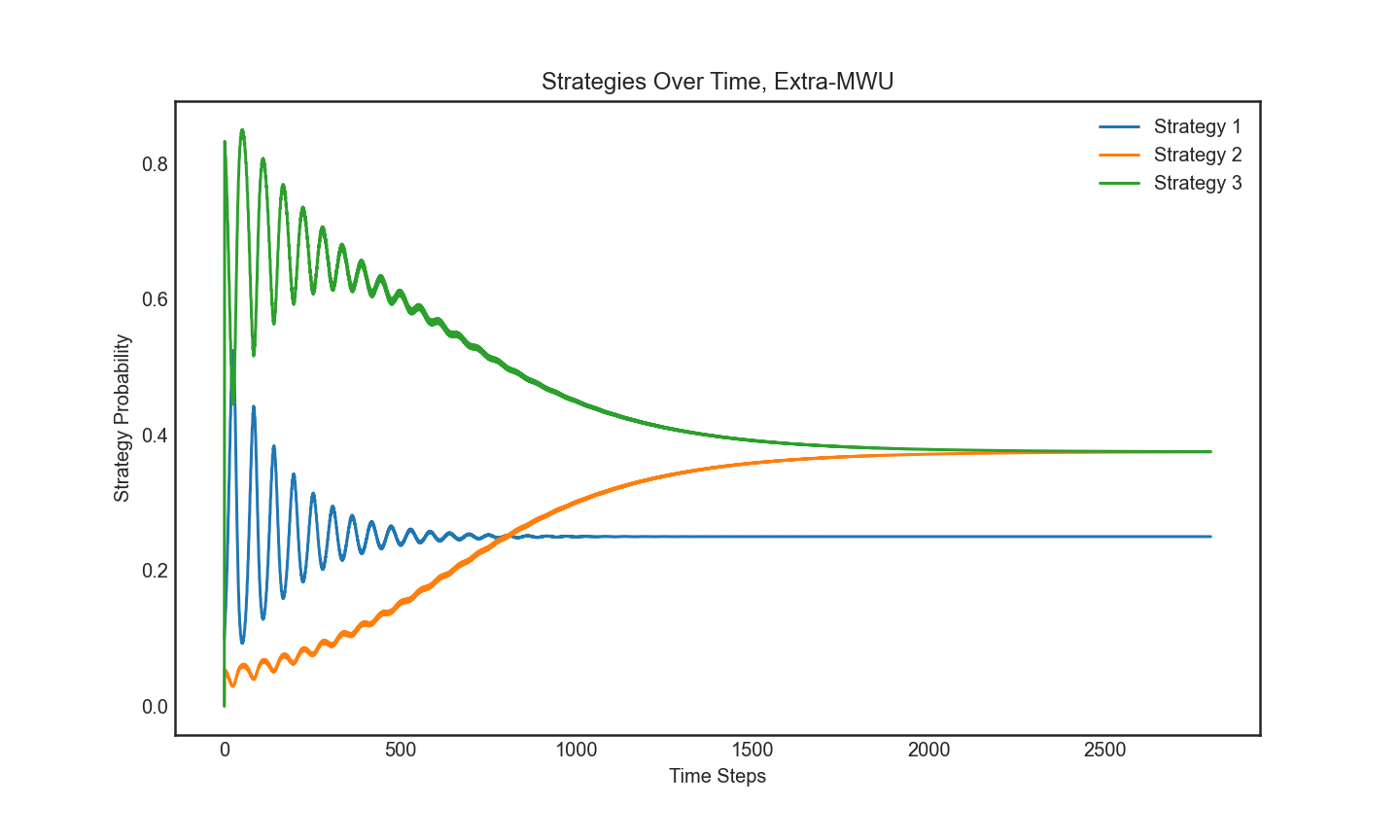}
        \label{}
    }
    \subfigure[KL-divergence of 2-player]{
	\includegraphics[width=1.4in]{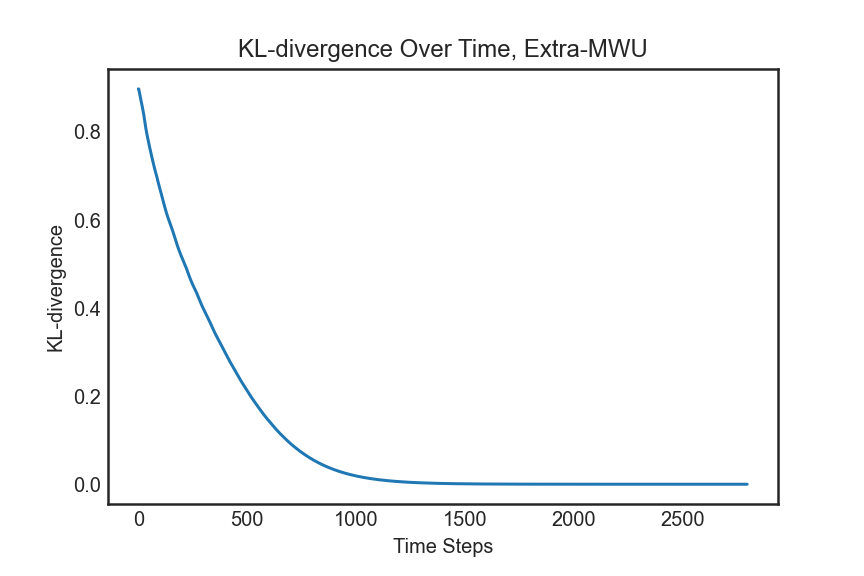}
        \label{}
    }
    \caption{First experimental results for Extra-MWU.}
    \label{em1}
\end{figure}

\begin{figure}[h]
    \centering
    \subfigure[Mixed strategy of 2-player]{
        \includegraphics[width=1.5in]{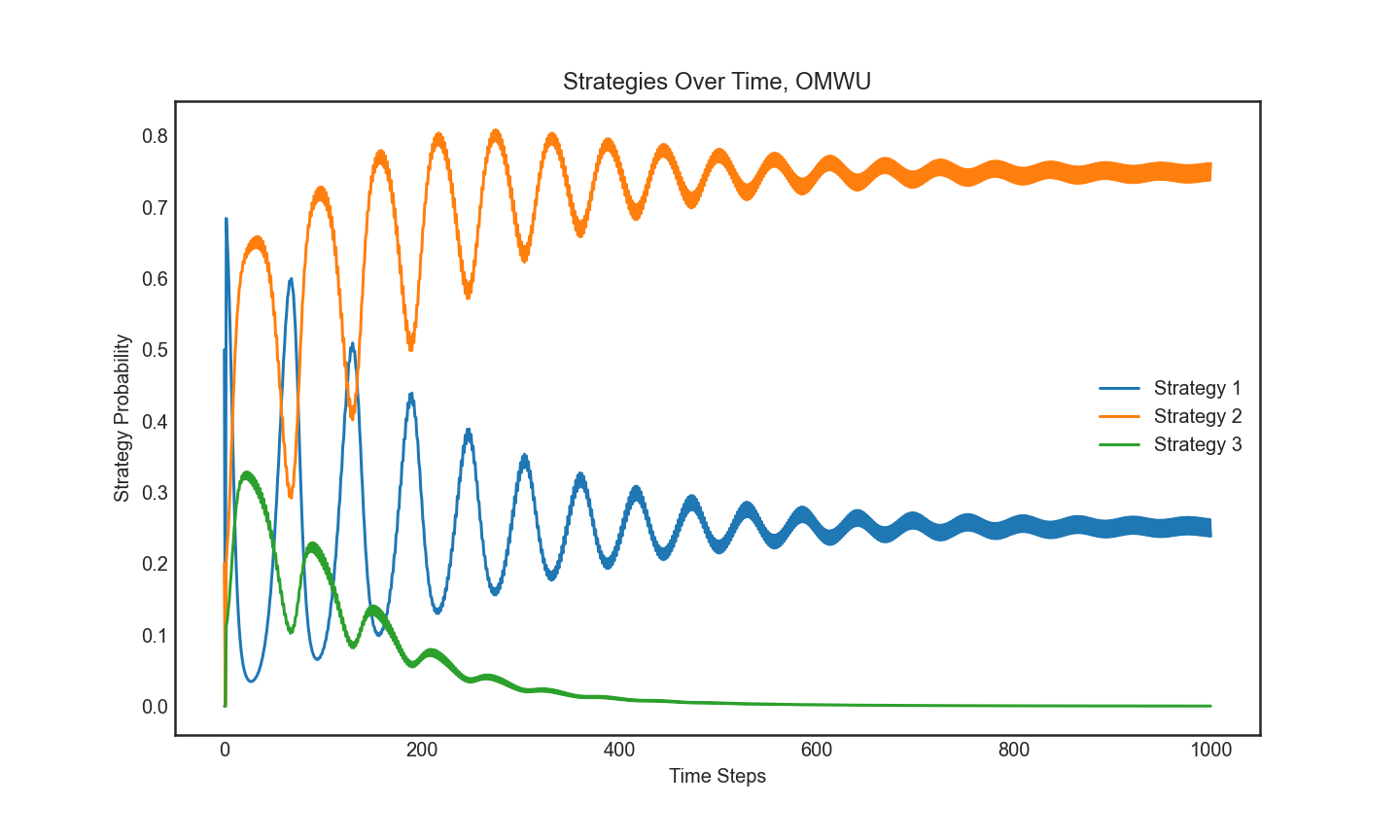}
        \label{}
    }
    \subfigure[KL-divergence of 2-player]{
	\includegraphics[width=1.4in]{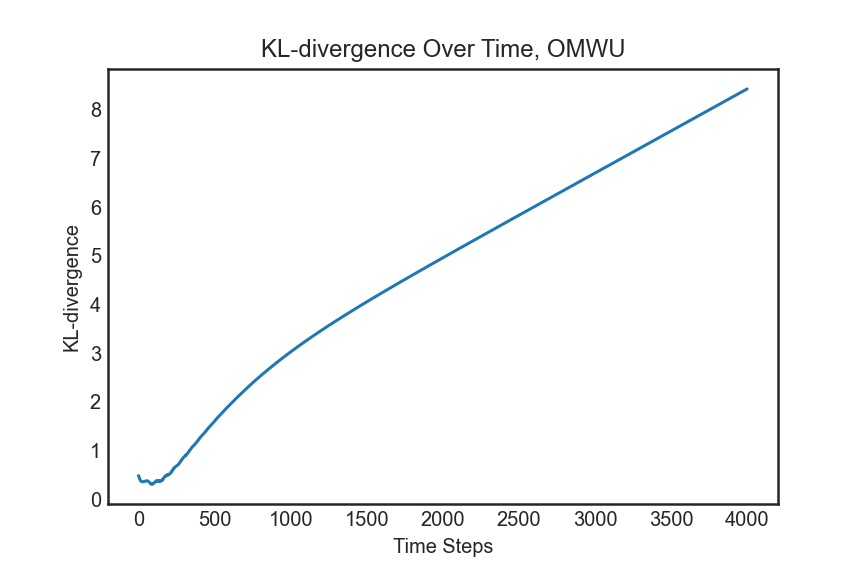}
        \label{}
    }
    \caption{First experimental results for OMWU.}
    \label{om1}
\end{figure}

\subsection{Experiment 2}

The payoff matrix in this experiment is a 4-periodic game defined by
\begin{align*}
    A_t=
\begin{cases}
\left( (0, -1, 1), (1, 0, -1), (-1, 1, 0) \right),\ t \mod 4 = 0 \\ 
\\
\left((0, 1, -1), (-1, 0, 1), (1, -1, 0) \right),\ t \mod 4 = 1 \\
\\
\left((1, -3, 2), (-2, 1, 1), (1, 2, -3) \right),\ t \mod 4 = 2 \\
\\
\left((1, -2, 1), (-2, 1, 1), (1, 1, -2) \right),\ t \mod 4 = 3
\end{cases}
\end{align*}
In Figure (\ref{em2}), we present experimental results for (Extra-MWU). In (a) of Figure (\ref{em2}), we can see the mixed strategy of 2-player converge to the equilibrium point $(0.25 , 0.375, 0.375)$. In (b) of Figure (\ref{em2}), we can see $\KL \left( (\tb{x}_1^*,\tb{x}_2^*), (\tb{x}_1^t,\tb{x}_2^t) \right) \to 0$ as time $\to \infty$. This support the result in Theorem \ref{T2}.

In Figure (\ref{om2}), we present experimental results for (OMWU). (a) of Figure (\ref{om2}) shows mixed strategy do not converge to the equilibrium, and the strategy 3 tends to $0$ as time process. In (b) of Figure (\ref{om2}), we can see $\KL \left( (\tb{x}_1^*,\tb{x}_2^*), (\tb{x}_1^t,\tb{x}_2^t) \right) \to \infty$ as time $\to \infty$. This implies the players mixed strategies tends to the boundary of the simplex constrains.

\begin{figure}[h]
    \centering
    \subfigure[Mixed strategy of 2-player]{
        \includegraphics[width=1.5in]{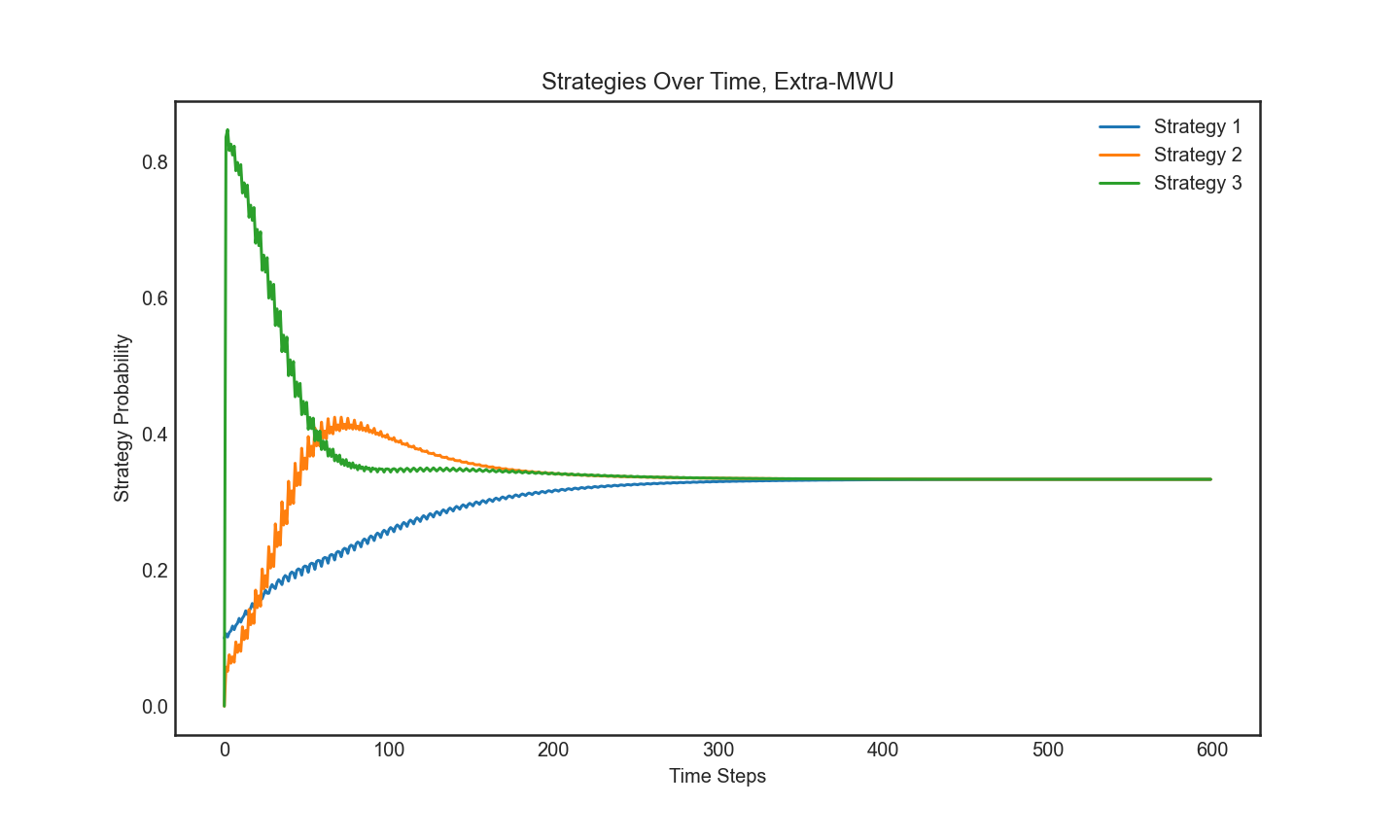}
        \label{}
    }
    \subfigure[KL-divergence ]{
	\includegraphics[width=1.4in]{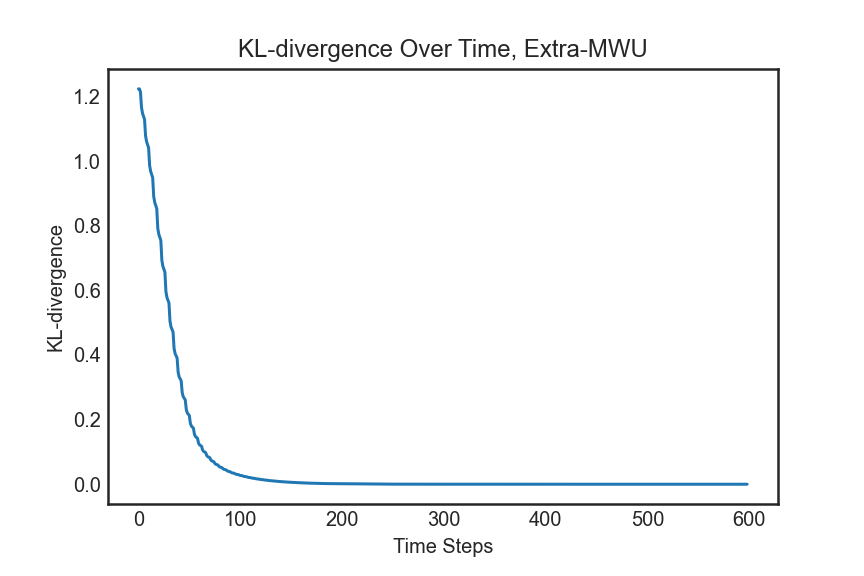}
        \label{}
    }
    \caption{Second experimental results for Extra-MWU.}
    \label{em2}
\end{figure}

\begin{figure}[h]
    \centering
    \subfigure[Mixed strategy of 2-player]{
        \includegraphics[width=1.5in]{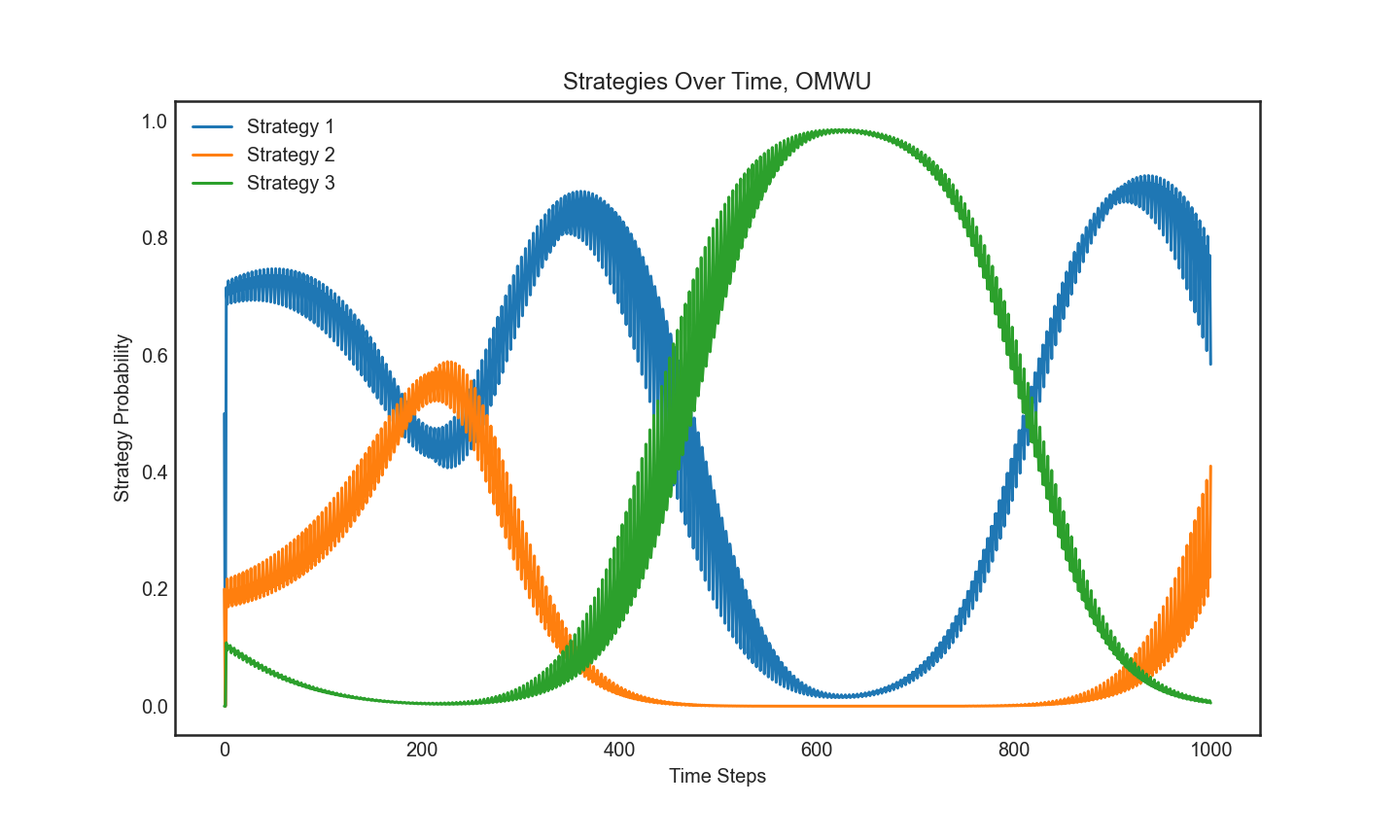}
        \label{}
    }
    \subfigure[KL-divergence ]{
	\includegraphics[width=1.4in]{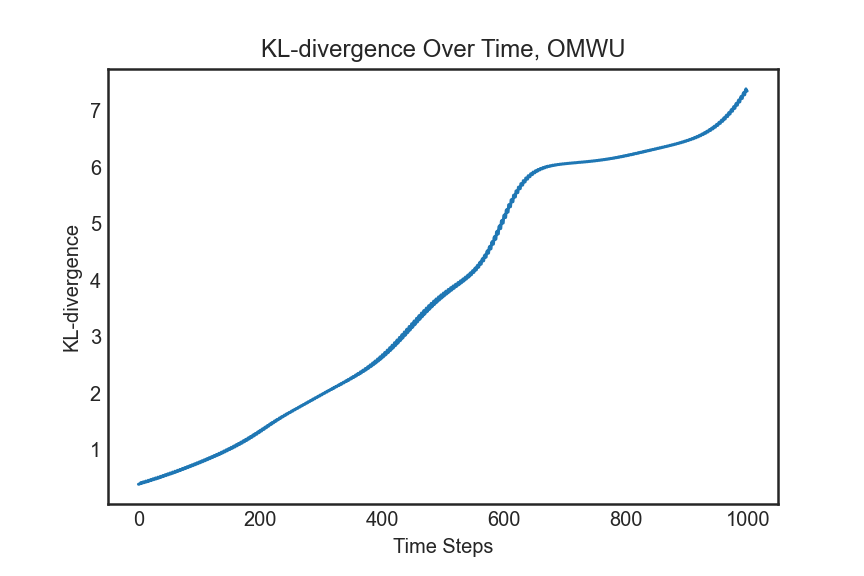}
        \label{}
    }
    \caption{Second experimental results for OMWU.}
    \label{om2}
\end{figure}

\section{Games without a common equilibrium}\label{dc}


Since the main purpose of this work is to study the last-iterate convergence behaviors of learning algorithms in the time-varying games, it is natural to require that there should be a reasonable point towards which we can hope these algorithms will converge. This is why we assume that the game series has a common equilibrium. An interesting question arises when considering what the last-iterate behaviors of (Extra-MWU) and (OMWU) look like in periodic games where there is no common equilibrium.

In the following we present several experiments to provide possible answers to this question. We summary our findings in the following :
\begin{itemize}
    \item In a $\CT$-periodic game without common equilibrium, (Extra-MWU) will converge to a periodic orbit with period $\CT$. This periodic orbit will not contain equilibrium of the periodic game.
    \item In a $\CT$-periodic game without common equilibrium, (OMWU) will diverge to the boundary.
\end{itemize}

In Figure (\ref{ww}), we present numerical results for a $3$-periodic game with payoff matrices 

\begin{align*}
    A_t=
\begin{cases}
\left( (0, -1, 1), (1, 0, -1), (-1, 1, 0) \right),\ t \mod 3 = 0 \\
\\
\left((0, 1, -1), (-1, 0, 1), (1, -1, 0) \right),\ t \mod 3 = 1 \\
\\
\left((0, 0.25, 0.75), (1.5, 0, 0), (0, 1, 0) \right),\ t \mod 3 = 2 \\
\end{cases}
\end{align*}

For $t \mod 3 = 0$ and $t \mod 3 = 1$, the equilibrium is
$$\tb{x}^* = \tb{y}^* = (1/3,1/3,1/3)$$ For $t \mod 3 = 2$, the equilibrium for $A_t$ is $$(\tb{x}^*,\tb{y}^*) =  \left( (0.5,0.25,0.25),(0.25,0.375,0.375) \right) .$$

In (a) of Figure \ref{ww}, the three curves represent $\tb{x}^t_{1,1}$ when $t \mod 3 =0,1$ and $2$. The convergence of these three curves to distinct values suggests that the mixed strategy will converge to a periodic orbit with a period of three in (Extra-MWU). In (b) of Figure \ref{ww}, it can be observed that the green curve, representing the components of the third pure strategy in the mixed strategy, converges to zero over time. This indicates that the mixed strategy tends to approach its boundary in (OMWU). These experimental findings are representative, and similar phenomena also occur in other periodic games without a common equilibrium.

We believe that the existing techniques for establishing the last-iterate convergence property are insufficient in proving that (Extra-MWU) will converge to a periodic orbit in a periodic game without a common equilibrium, as these techniques necessitate the limit state of learning algorithms being a single point \citep{daskalakis2017training,daskalakis2018last,mertikopoulos2018optimistic}. A possible approach to address this problem is to investigate the relationship between Extra-MWU and monotone dynamical systems, which have been shown to converge to periodic orbits in periodic environments \citep{hirsch2006monotone}.

\begin{figure}[h]
    \centering
    \subfigure[Components of 1-strategy, Extra-MWU]{
        \includegraphics[width=1.5in]{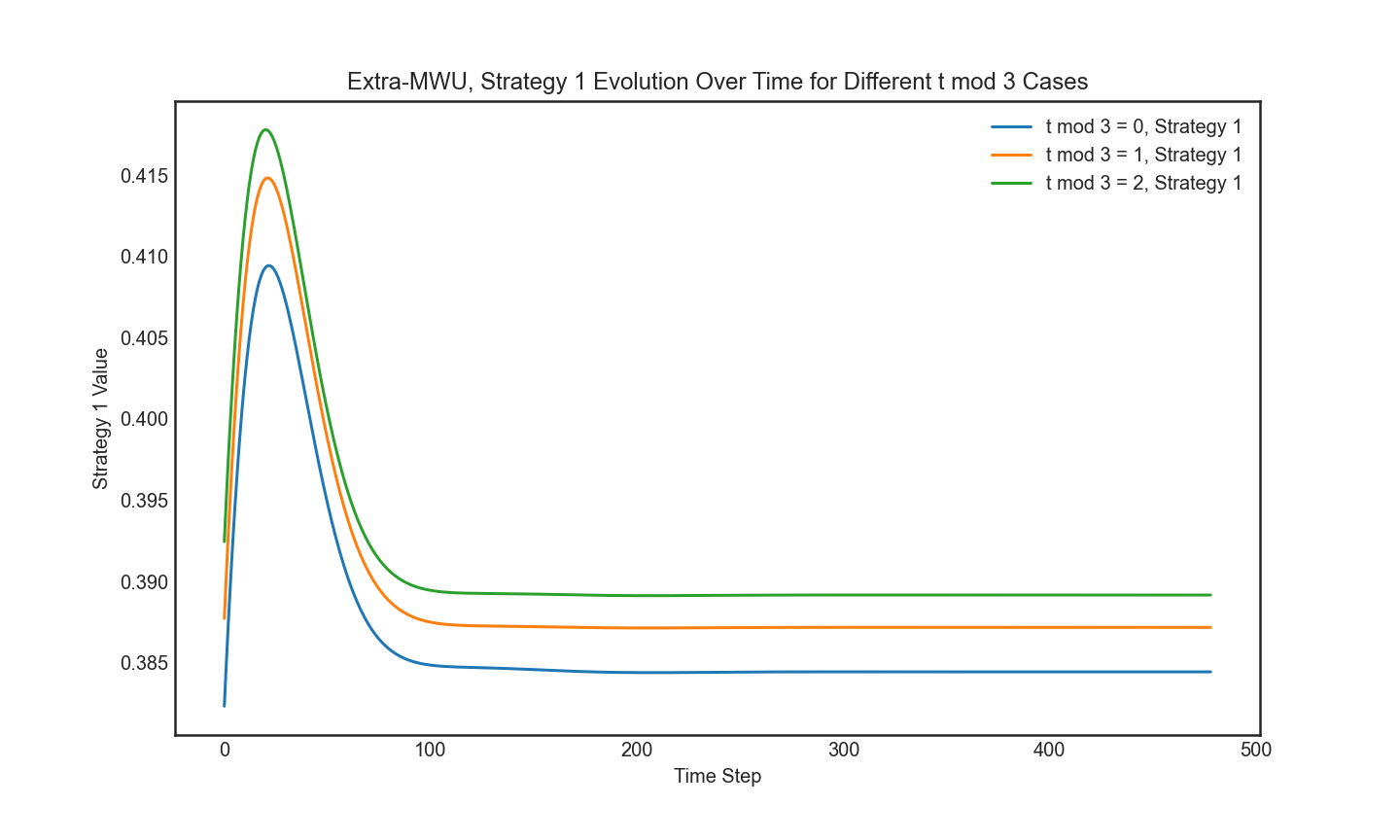}
        \label{}
    }
    \subfigure[Mixed strategy, OMWU]{
	\includegraphics[width=1.5in]{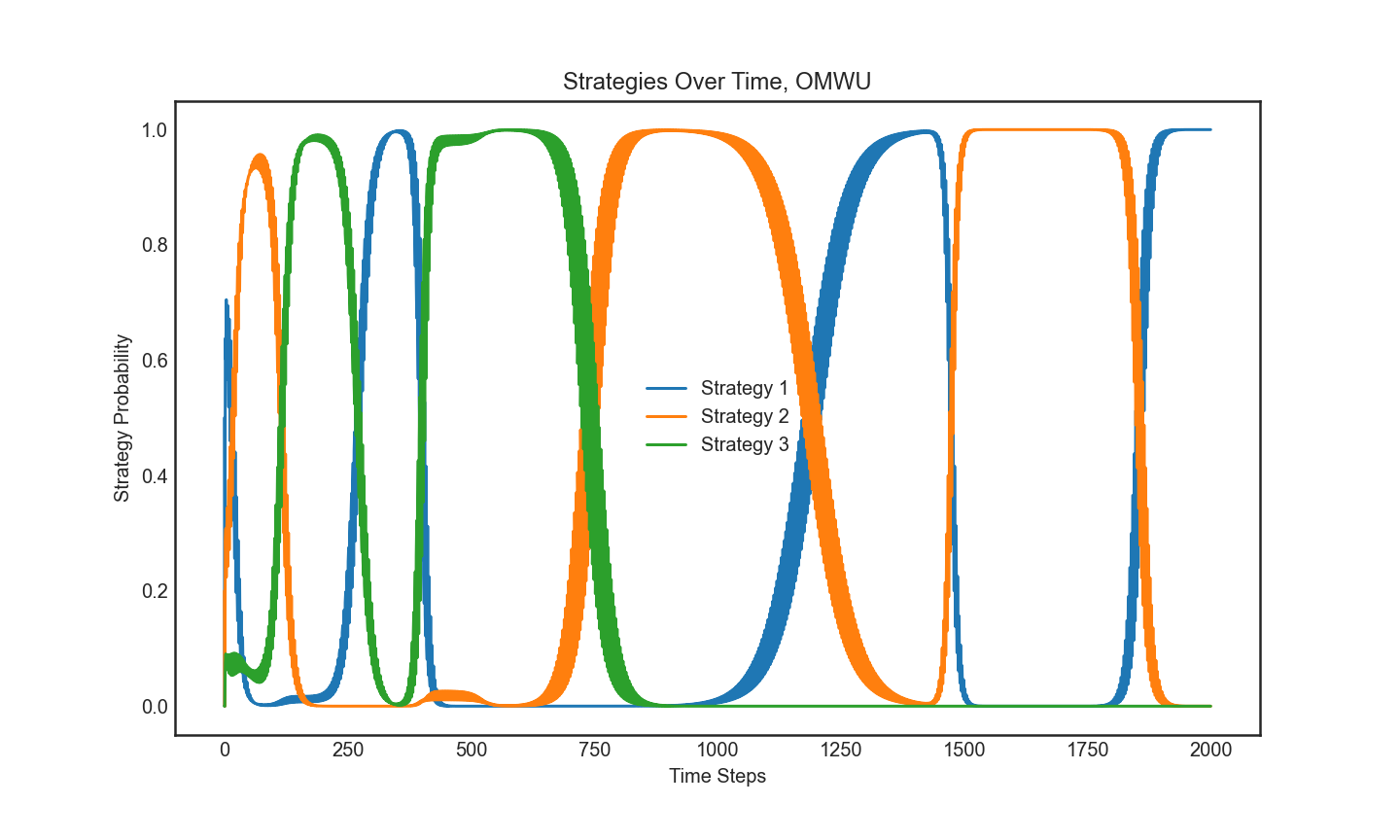}
        \label{}
    }
    \caption{Experimental results for no common equilibrium. }
    \label{ww}
\end{figure}

\section{Conclusion}

In this paper, we investigate the last-iterate behavior of Optimistic Multiplicative Weights Updates (OMWU) and Extra-gradient Multiplicative Weights Updates (Extra-MWU) in periodic zero-sum games with simplex constraints. Our main findings establish a separation in the last-iterate convergence behaviors between OMWU and Extra-MWU, assuming that the game series within the periodic game. This is interesting because it challenges the conventional wisdom that these two algorithms should exhibit similar behaviors \citep{mokhtari2020unified}. Our results also extend the findings of \citep{feng2023last} from the unconstrained setting to the more practical constrained setting. An interesting future direction is to study the dynamical behaviors of these methods in periodic games without common equilibrium. 

\section*{Acknowledgements}

Ioannis Panageas would like to acknowledge startup grant from UCI and UCI ICS Research Award. Part of this work was conducted while Ioannis was visiting Archimedes Research Unit. This work has been partially supported by project MIS 5154714 of the National Recovery and Resilience Plan Greece 2.0 funded by the European Union under the NextGenerationEU Program. Xiao Wang acknowledges Grant 202110458 from Shanghai University of Finance and Economics and support from the Shanghai Research Center for Data Science and Decision Technology.


\bibliography{Bibli}

\newpage

\onecolumn

\title{Supplementary Material}
\maketitle
\pagestyle{empty}  
\thispagestyle{empty} 
\appendix

\section{Proof of Theorem~\ref{thm: OMWU fails}}\label{A1}
We introduce an equivalence of the dynamics of Optimistic MWU which tracing the first pure strategies of the two players in the case where both two players have only two pure strategies.

By the definition of the dynamics sytem of OMWU, when $t$ is even, according to the definition of $A_t$ in Theorem~\ref{thm: OMWU fails}, we have
	\begin{align*}
		\tb{x}_{1,1}^{t+2}&=\frac{\tb{x}_{1,1}^{t+1} e^{2 \eta (A_{t+1} \tb{x}_{2}^{t+1})^1- \eta (A_{t} \tb{x}^{t}_2)^{1}}} { \sum^2_{s=1}\tb{x}_{1,s}^{t+1} e^{2 \eta (A_{t+1} \tb{x}_{2}^{t+1})^s - \eta (A_{t} \tb{x}_2^{t})^s}}\\
		&=\frac{\tb{x}_{1,1}^{t+1} e^{2 \eta  \tb{x}_{2,2}^{t+1}+ \eta  \tb{x}^{t}_{2,2}}} {\tb{x}_{1,1}^{t+1} e^{2 \eta  \tb{x}_{2,2}^{t+1}+ \eta  \tb{x}^{t}_{2,2}}+\tb{x}_{1,2}^{t+1} e^{2 \eta  \tb{x}_{2,1}^{t+1}+ \eta  \tb{x}^{t}_{2,1}}}\\
		&=\frac{\tb{x}_{1,1}^{t+1} e^{2 \eta (1-\tb{x}_{2,1}^{t+1}) + \eta  (1-\tb{x}^{t}_{2,1})}} {\tb{x}_{1,1}^{t+1} e^{2 \eta  (1-\tb{x}_{2,1}^{t+1})+ \eta  (1-\tb{x}^{t}_{2,1})}+(1-\tb{x}_{1,1}^{t+1}) e^{2 \eta  \tb{x}_{2,1}^{t+1}+ \eta  \tb{x}^{t}_{2,1}}}\\
		&=\frac{\tb{x}_{1,1}^{t+1} e^{3\eta-2 \eta \tb{x}_{2,1}^{t+1} - \eta  \tb{x}^{t}_{2,1}}} {\tb{x}_{1,1}^{t+1} e^{3 \eta  -2\eta\tb{x}_{2,1}^{t+1}- \eta  \tb{x}^{t}_{2,1}}+(1-\tb{x}_{1,1}^{t+1}) e^{2 \eta  \tb{x}_{2,1}^{t+1}+ \eta  \tb{x}^{t}_{2,1}}},
	\end{align*}
	where the third equality arises from that $\textbf{x}_1^{t+1}, \ \textbf{x}_1^{t}\in \Delta_2$. 
	By similar computation, it holds that
	\begin{align*}
		\tb{x}_{2,1}^{t+2}=\frac{\tb{x}_{2,1}^{t+1} e^{-3\eta+2 \eta \tb{x}_{1,1}^{t+1} + \eta  \tb{x}^{t}_{1,1}}} {\tb{x}_{2,1}^{t+1} e^{-3 \eta  +2\eta\tb{x}_{1,1}^{t+1}+ \eta  \tb{x}^{t}_{1,1}}+(1-\tb{x}_{2,1}^{t+1}) e^{-2 \eta  \tb{x}_{1,1}^{t+1}- \eta  \tb{x}^{t}_{1,1}}}
	\end{align*}
	
	For the case when $t$ is odd, we have
	\begin{align*}
		&\tb{x}_{1,1}^{t+2}=\frac{\tb{x}_{1,1}^{t+1} e^{-3\eta+2 \eta \tb{x}_{2,1}^{t+1} + \eta  \tb{x}^{t}_{2,1}}} {\tb{x}_{1,1}^{t+1} e^{-3 \eta  +2\eta\tb{x}_{2,1}^{t+1}+ \eta  \tb{x}^{t}_{2,1}}+(1-\tb{x}_{1,1}^{t+1}) e^{-2 \eta  \tb{x}_{2,1}^{t+1}-\eta  \tb{x}^{t}_{2,1}}},\\
		&\tb{x}_{2,1}^{t+2}=\frac{\tb{x}_{2,1}^{t+1} e^{3\eta-2 \eta \tb{x}_{1,1}^{t+1} - \eta  \tb{x}^{t}_{1,1}}} {\tb{x}_{2,1}^{t+1} e^{3 \eta  -2\eta\tb{x}_{1,1}^{t+1}- \eta  \tb{x}^{t}_{1,1}}+(1-\tb{x}_{2,1}^{t+1}) e^{2 \eta  \tb{x}_{1,1}^{t+1}+\eta  \tb{x}^{t}_{1,1}}}.
	\end{align*}
	From the above expression, it can be found that $\tb{x}_{1,1}^{t+2},\ \tb{x}_{2,1}^{t+2}$ can be entirely determined by the vector $\{\tb{x}_{1,1}^{t},\tb{x}_{1,1}^{t+1},\tb{x}_{2,1}^{t},\tb{x}_{2,1}^{t+1}\}$ in the case of when $t$ is odd and when $t$ is even.
	
	Next, we provide two formal functions mapping $(\tb{x}_{1,1}^{t},\tb{x}_{1,1}^{t+1},\tb{x}_{2,1}^{t},\tb{x}_{2,1}^{t+1})$ to $(\tb{x}_{1,1}^{t+1},\tb{x}_{1,1}^{t+2},\tb{x}_{2,1}^{t+1},\tb{x}_{2,1}^{t+2})$. Let 
	\begin{align*}
		\CG_1: &[0,1] \times [0,1] \times [0,1] \times [0,1] \to [0,1] \times [0,1] \times [0,1] \times [0,1]\\
		&(\textbf{z}_1,\textbf{z}_2,\textbf{z}_3,\textbf{z}_4)
		\to \\ &\left(\tb{z}_2,
		\frac{\tb{z}_1 e^{3\eta-2 \eta \tb{z}_4 - \eta  \tb{z}_3}}{\tb{z}_2 e^{3 \eta  -2\eta\tb{z}_4- \eta  \tb{z}_3}+(1-\tb{z}_2) e^{2 \eta  \tb{z}_4+ \eta  \tb{z}_3}}, 
		\right. \\ &\left.
		\tb{z}_4,
		\frac{\tb{z}_4 e^{-3\eta+2 \eta \tb{z}_2 + \eta  \tb{z}_1}}{\tb{z}_4 e^{-3 \eta  +2\eta\tb{z}_2+ \eta  \tb{z}_1}+(1-\tb{z}_4) e^{-2 \eta  \tb{z}_2- \eta  \tb{z}_1}}\right)
	\end{align*} 
	and 
		\begin{align*}
		\CG_2: &[0,1] \times [0,1] \times [0,1] \times [0,1] \to [0,1] \times [0,1] \times [0,1] \times [0,1]\\
		&(\textbf{z}_1,\textbf{z}_2,\textbf{z}_3,\textbf{z}_4)
		\to \\ &\left(\tb{z}_2,
		\frac{\tb{z}_1 e^{-3\eta+2 \eta \tb{z}_4 +\eta  \tb{z}_3}}{\tb{z}_2 e^{-3 \eta  +2\eta\tb{z}_4+ \eta  \tb{z}_3}+(1-\tb{z}_2) e^{-2 \eta  \tb{z}_4+ \eta  \tb{z}_3}}, 
				\right. \\ &\left.
		\tb{z}_4,
		\frac{\tb{z}_4 e^{3\eta-2 \eta \tb{z}_2 - \eta  \tb{z}_1}}{\tb{z}_4 e^{3 \eta  -2\eta\tb{z}_2- \eta  \tb{z}_1}+(1-\tb{z}_4) e^{2 \eta  \tb{z}_2+ \eta  \tb{z}_1}}\right).		
	\end{align*} 
According to their definition, we have
	\begin{align*}
		&\CG_1\left((\tb{x}_{1,1}^t,\tb{x}_{1,1}^{t+1},\tb{x}_{2,1}^t,\tb{x}_{2,1}^{t+1})\right)\\
		=&\left(\tb{x}_{1,1}^{t+1},
		\frac{\tb{x}_{1,1}^{t+1} e^{3\eta-2 \eta \tb{x}_{2,1}^{t+1} - \eta  \tb{x}^{t}_{2,1}}}{\tb{x}_{1,1}^{t+1} e^{3 \eta  -2\eta\tb{x}_{2,1}^{t+1}- \eta  \tb{x}^{t}_{2,1}}+(1-\tb{x}_{1,1}^{t+1}) e^{2 \eta  \tb{x}_{2,1}^{t+1}+ \eta  \tb{x}^{t}_{2,1}}}, 
		\right. \\ &\left.
		\tb{x}_{2,1}^{t+1},
		\frac{\tb{x}_{2,1}^{t+1} e^{-3\eta+2 \eta \tb{x}_{1,1}^{t+1} + \eta  \tb{x}^{t}_{1,1}}}{\tb{x}_{2,1}^{t+1} e^{-3 \eta  +2\eta\tb{x}_{1,1}^{t+1}+ \eta  \tb{x}^{t}_{1,1}}+(1-\tb{x}_{2,1}^{t+1}) e^{-2 \eta  \tb{x}_{1,1}^{t+1}- \eta  \tb{x}^{t}_{1,1}}}\right).
	\end{align*}
	and
			\begin{align*}
		&\CG_2\left((\tb{x}_{1,1}^t,\tb{x}_{1,1}^{t+1},\tb{x}_{2,1}^t,\tb{x}_{2,1}^{t+1})\right)\\
		=&\left(\tb{x}_{1,1}^{t+1},
		\frac{\tb{x}_{1,1}^{t+1} e^{-3\eta+2 \eta \tb{x}_{2,1}^{t+1} +\eta  \tb{x}^{t}_{2,1}}}{\tb{x}_{1,1}^{t+1} e^{-3 \eta  +2\eta\tb{x}_{2,1}^{t+1}+ \eta  \tb{x}^{t}_{2,1}}+(1-\tb{x}_{1,1}^{t+1}) e^{-2 \eta  \tb{x}_{2,1}^{t+1}- \eta  \tb{x}^{t}_{2,1}}}, 
		\right. \\ &\left.
		\tb{x}_{2,1}^{t+1},
		\frac{\tb{x}_{2,1}^{t+1} e^{3\eta-2 \eta \tb{x}_{1,1}^{t+1} - \eta  \tb{x}^{t}_{1,1}}}{\tb{x}_{2,1}^{t+1} e^{3 \eta  -2\eta\tb{x}_{1,1}^{t+1}- \eta  \tb{x}^{t}_{1,1}}+(1-\tb{x}_{2,1}^{t+1}) e^{2 \eta  \tb{x}_{1,1}^{t+1}+ \eta  \tb{x}^{t}_{1,1}}} \right).
	\end{align*}

	Combining our computation for $\tb{x}_{1,1}^{t+2}$ and $\tb{x}_{2,1}^{t+2}$ above, when $t$ is even
		\begin{align*}
		\CG_1\left((\tb{x}_{1,1}^t,\tb{x}_{1,1}^{t+1},\tb{x}_{2,1}^t,\tb{x}_{2,1}^{t+1})\right)=(\tb{x}_{1,1}^{t+1},\tb{x}_{1,1}^{t+2},\tb{x}_{2,1}^{t+1},\tb{x}_{2,1}^{t+2})
	\end{align*}
	and when $t$ is odd, we have
	\begin{align*}
		\CG_2\left((\tb{x}_{1,1}^t,\tb{x}_{1,1}^{t+1},\tb{x}_{2,1}^t,\tb{x}_{2,1}^{t+1})\right)
		=(\tb{x}_{1,1}^{t+1},\tb{x}_{1,1}^{t+2},\tb{x}_{2,1}^{t+1},\tb{x}_{2,1}^{t+2}).
	\end{align*}

	By the property of $\CG_1$ and $\CG_2$, it holds that
	\begin{align*}
		(\tb{x}_{1,1}^{2t-1},\tb{x}_{1,1}^{2t},\tb{x}_{2,1}^{2t-1},\tb{x}_{2,1}^{2t})=(\CG_1\circ \CG_2)^t\left((\tb{x}_{1,1}^{-1},\tb{x}_{1,1}^{0},\tb{x}_{2,1}^{-1},\tb{x}_{2,1}^{0})\right)
	\end{align*}
	
	and
	
	\begin{align*}
		(\tb{x}_{1,1}^{2t},\tb{x}_{1,1}^{2t+1},\tb{x}_{2,1}^{2t},\tb{x}_{2,1}^{2t+1})=\CG_2\circ(\CG_1\circ \CG_2)^t\left((\tb{x}_{1,1}^{-1},\tb{x}_{1,1}^{0},\tb{x}_{2,1}^{-1},\tb{x}_{2,1}^{0})\right).
	\end{align*}
The following proposition addresses the behavior of points in the neighborhood of equilibrium for OMWU in periodic game when comparing the convergence in the OMWU in static games.

\OMWUnotconverge*
\begin{proof}
    By computation\footnote{We utilize Matlab for computation.}, the eigenvalues of the Jacobi matrix of $\CG_1\circ \CG_2$ at equilibrium $ \left( 0.5,0.5,0.5,0.5\right) $ are
    \begin{itemize}
        \item $\frac{\eta^2}{2}-\frac{\sqrt{ (\eta^2+\eta+1)(\eta^2-\eta+1) }}{2}+\frac{1}{2}$,
        \item $\frac{\eta^2}{2}-\frac{\sqrt{ (\eta^2+\eta+1)(\eta^2-\eta+1) }}{2}+\frac{1}{2}$,
        \item $\frac{\eta^2}{2}+\frac{\sqrt{ (\eta^2+\eta+1)(\eta^2-\eta+1) }}{2}+\frac{1}{2}$,
        \item $\frac{\eta^2}{2}+\frac{\sqrt{ (\eta^2+\eta+1)(\eta^2-\eta+1) }}{2}+\frac{1}{2}$.
    \end{itemize}
    It can be computed that $\frac{\eta^2}{2}+\frac{\sqrt{ (\eta^2+\eta+1)(\eta^2-\eta+1) }}{2}+\frac{1}{2}>1$ and $\frac{\eta^2}{2}-\frac{\sqrt{ (\eta^2+\eta+1)(\eta^2-\eta+1) }}{2}+\frac{1}{2}<1$. The eigenvectors in the eigenspace corresponding to $\frac{\eta^2}{2}-\frac{\sqrt{ (\eta^2+\eta+1)(\eta^2-\eta+1) }}{2}+\frac{1}{2}$ can be verified to consistently have negative second elements. Thus, the decomposition of any vector in the simplex constrains must contains the eigenvectors corresponding to $\frac{\eta^2}{2}+\frac{\sqrt{ (\eta^2+\eta+1)(\eta^2-\eta+1) }}{2}+\frac{1}{2}$.
    Therefore, by Proposition~\ref{decomposition}, the iterations of these vectors will diverge from the equilibrium.
\end{proof}

Subsequently, we present compelling evidence demonstrating that the KL-divergence tends towards infinity. 
With the initial conditions as stated in Theorem \ref{thm: OMWU fails}, the proof is divided into two stages : 
\begin{itemize}	
		\item \textbf{Stage 1} : we will show that when the current position is far from the boundary of $[0,1]$, the KL-divergence will stably increase.
		Thus both of these variables approach to the boundary.
		Formally, our goal in this stage is to demonstrate the following proposition.
        \KLincreasing*
		\item \textbf{Stage 2} : 
  The convergence of either $x^t_{1}$ or $x^t_{2}$ to the boundary results in the divergence of at least one of them towards infinity, leading to an unbounded KL-divergence. This phenomenon is described by the following proposition.

  when either $x^t_{1}$ or $x^t_{2}$ close to boundary, at least one of them will converge to the boundary, which leads to the KL-divergence goes to infinity. This can be described by the following proposition.

		\OMWUlocallyconverging*
	\end{itemize}

\subsection{Proof of Stage 1}
In this subsection, we first illustrate that under a stronger conditions than those in Proposition \ref{prop: OMWU fails}, the second elements of the iterative vectors $\tb{x}_1$ and $\tb{x}_2$ increase with every two time iteration.
Following that, we prove that the strong condition can be generalized to the condition in Proposition \ref{prop: OMWU fails}. Before presenting the main lemma in this subsection, we first introduce some necessary transition expressions in the dynamic system.

	\begin{lem}\label{lem: fraction}
		If $ 2\mid t $, then 
		\begin{enumerate}
			\item \label{fraction 1} $\frac{\tb{x}^{t+1}_{1,1}}{\tb{x}^{t+1}_{1,2}}=\frac{\tb{x}^{t-1}_{1,1}}{\tb{x}^{t-1}_{1,2}}\cdot e^{-2\eta(2\tb{x}^{t}_{2,2}-\tb{x}^{t-1}_{2,2}-\tb{x}^{t-2}_{2,2})}$;
			\item \label{fraction 2} $\frac{\tb{x}^{t+1}_{2,1}}{\tb{x}^{t+1}_{2,2}}=\frac{\tb{x}^{t-1}_{2,1}}{\tb{x}^{t-1}_{2,2}}\cdot e^{2\eta(2\tb{x}^{t}_{1,2}-\tb{x}^{t-1}_{1,2}-\tb{x}^{t-2}_{1,2})}  $;
			\item \label{fraction 3} $ \frac{\tb{x}^{t+2}_{1,1}}{\tb{x}^{t+2}_{1,2}}=\frac{\tb{x}^{t}_{1,1}}{\tb{x}^{t}_{1,2}}\cdot e^{2\eta(2\tb{x}^{t+1}_{2,2}-\tb{x}^{t}_{2,2}-\tb{x}^{t-1}_{2,2})} $;
			\item \label{fraction 4} $\frac{\tb{x}^{t+2}_{2,1}}{\tb{x}^{t+2}_{2,2}}=\frac{\tb{x}^{t}_{2,1}}{\tb{x}^{t}_{2,2}}\cdot e^{-2\eta(2\tb{x}^{t+1}_{1,2}-\tb{x}^{t}_{1,2}-\tb{x}^{t-1}_{1,2})}  $;
			\item \label{fraction 5} $\frac{\tb{x}^{t+2}_{1,1}}{\tb{x}^{t+2}_{1,2}}=\frac{\tb{x}^{t+1}_{1,1}}{\tb{x}^{t+1}_{1,2}}\cdot e^{-3\eta+2\eta(2\tb{x}^{t+1}_{2,2}+\tb{x}^{t}_{2,2})}$;
			\item \label{fraction 6} $\frac{\tb{x}^{t+1}_{2,1}}{\tb{x}^{t+1}_{2,2}}=\frac{\tb{x}^{t}_{2,1}}{\tb{x}^{t}_{2,2}}\cdot e^{-3\eta+2\eta(2\tb{x}^{t}_{1,2}+\tb{x}^{t-1}_{1,2})}$;
		\end{enumerate}
	\end{lem}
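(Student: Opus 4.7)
}
The plan is to exploit the fact that in any MWU-style update the component-wise ratio $\tb{x}^{t+1}_{j,1}/\tb{x}^{t+1}_{j,2}$ is obtained from $\tb{x}^{t}_{j,1}/\tb{x}^{t}_{j,2}$ just by multiplication by an exponential whose argument is the row-difference of the current (and, for OMWU, also the previous) payoff vector, because the normalizing denominators in the OMWU update are common to both coordinates and cancel on division. All six identities will therefore follow from direct algebra once this cancellation is carried out and the special shape of the payoff matrices in \eqref{2-periodic game_m} is substituted in.

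For the single-step identities (items \ref{fraction 5} and \ref{fraction 6}) I would start by writing down the OMWU expression for $\tb{x}^{t+2}_{1,1}$ and $\tb{x}^{t+2}_{1,2}$ in the step $t+1\to t+2$. Since $t$ is even, this step is performed with $A_{t+1}$ (the odd-time matrix, with $+1$ off-diagonal) and reference matrix $A_t$ (the even-time matrix, with $-1$ off-diagonal). Dividing the two expressions gives
\begin{align*}
\frac{\tb{x}^{t+2}_{1,1}}{\tb{x}^{t+2}_{1,2}}=\frac{\tb{x}^{t+1}_{1,1}}{\tb{x}^{t+1}_{1,2}}\exp\!\Bigl(2\eta\bigl[(A_{t+1}\tb{x}^{t+1}_2)^1-(A_{t+1}\tb{x}^{t+1}_2)^2\bigr]-\eta\bigl[(A_t\tb{x}^t_2)^1-(A_t\tb{x}^t_2)^2\bigr]\Bigr).
\end{align*}
A direct computation of the row-differences and the substitution $\tb{x}^s_{2,1}=1-\tb{x}^s_{2,2}$ turn the row-differences into $2\tb{x}^{t+1}_{2,2}-1$ and $1-2\tb{x}^t_{2,2}$ respectively, and collecting terms gives the claimed exponent $-3\eta+2\eta(2\tb{x}^{t+1}_{2,2}+\tb{x}^t_{2,2})$. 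Identity \ref{fraction 6} is proved the same way, starting one step earlier and swapping the roles of the two players (which changes the overall sign because the update for player $2$ has a minus in front of $A_t^\top$).

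For the two-step identities (items \ref{fraction 1}--\ref{fraction 4}) I would compose two one-step updates of the kind just described. For item \ref{fraction 1}, go from $t-1$ to $t$ (step uses $A_{t-1}$ odd and $A_{t-2}$ even since $t$ is even) and then from $t$ to $t+1$ (step uses $A_t$ even and $A_{t-1}$ odd). Multiplying the two exponential factors and again using $\tb{x}^s_{2,1}=1-\tb{x}^s_{2,2}$, the constant $\pm3\eta$ contributions from the two steps cancel, the $\tb{x}^{t-1}_{2,2}$ coefficients combine from $+4\eta$ and $-2\eta$ into $+2\eta$, and the result reduces to $\exp\bigl(-2\eta(2\tb{x}^{t}_{2,2}-\tb{x}^{t-1}_{2,2}-\tb{x}^{t-2}_{2,2})\bigr)$, which is precisely the claimed identity. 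Items \ref{fraction 2}--\ref{fraction 4} are obtained by the analogous composition, either by replacing player $1$ by player $2$ (which flips the overall sign) or by shifting the time index by one (which shifts the step-parity and therefore the pairing of matrices).

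The main obstacle is purely bookkeeping: two different payoff matrices alternate, one with $+1$ off-diagonal and one with $-1$, and at each time step the OMWU rule consults \emph{both} the current matrix and the previous one, so I must keep careful track of which matrix plays the role of $A_t$ and which plays the role of $A_{t-1}$ depending on the parity of $t$. Nothing deep happens beyond this: once the correct sign pattern and the correct parity pairing are fixed, each identity is a short algebraic simplification using $\tb{x}^s_{j,1}+\tb{x}^s_{j,2}=1$ and the multiplicativity of the exponential.
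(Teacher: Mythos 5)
Your proposal is correct and follows essentially the same route as the paper's proof: the paper also writes out the OMWU update coordinate-wise, cancels the common normalizing denominator by taking the ratio of the two coordinates to obtain the single-step multiplicative identities (your items~\ref{fraction 5} and~\ref{fraction 6} plus four more of the same type), and then multiplies consecutive single-step identities to get the two-step ones. Your sign and parity bookkeeping (which matrix is ``current'' vs.\ ``lagged'' in each step, and the $\pm 3\eta$ cancellation) matches what the paper does explicitly in equations \eqref{r: 1 t+1}--\eqref{r: 2 t+2}.
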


\begin{proof}
		By the definition of dynamic system of OMWU, when $t$ is even, 
		\begin{align}
			\tb{x}_{1,1}^{t+1} = &
			\frac{\tb{x}_{1,1}^t e^{2 \eta (A_t \tb{x}_{2}^t)^1 - \eta (A_{t-1} \tb{x}_2^{t-1})^{1}}} { \sum^2_{s=1}\tb{x}_{1,s}^t e^{2 \eta (A_t \tb{x}_{2}^t)^s - \eta (A_{t-1} \tb{x}_2^{t-1})^s}}\nonumber\\
			=&\frac{\tb{x}_{1,1}^t e^{-2 \eta \tb{x}_{2,2}^t - \eta  \tb{x}^{t-1}_{2,2}}} { \tb{x}_{1,1}^t e^{-2 \eta \tb{x}_{2,2}^t - \eta  \tb{x}^{t-1}_{2,2}}
				+\tb{x}_{1,2}^t e^{-2 \eta \tb{x}_{2,1}^t - \eta  \tb{x}^{t-1}_{2,1}}}\label{x11t+1}
		\end{align}
		and
		\begin{align}
			\tb{x}_{1,2}^{t+1} = &
			\frac{\tb{x}_{1,2}^t e^{2 \eta (A_t \tb{x}_{2}^t)^2 - \eta (A_{t-1} \tb{x}_2^{t-1})^{2}}} { \sum^2_{s=1}\tb{x}_{1,s}^t e^{2 \eta (A_t \tb{x}_{2}^t)^s - \eta (A_{t-1} \tb{x}_2^{t-1})^s}}\nonumber\\
			=&\frac{\tb{x}_{1,2}^t e^{-2 \eta \tb{x}_{2,1}^t - \eta  \tb{x}^{t-1}_{2,1}}} { \tb{x}_{1,1}^t e^{-2 \eta \tb{x}_{2,2}^t - \eta  \tb{x}^{t-1}_{2,2}}
				+\tb{x}_{1,2}^t e^{-2 \eta \tb{x}_{2,1}^t - \eta  \tb{x}^{t-1}_{2,1}}}\nonumber\\
				=&\frac{\tb{x}_{1,2}^t e^{-3\eta+2 \eta \tb{x}_{2,2}^t + \eta  \tb{x}^{t-1}_{2,2}}} { \tb{x}_{1,1}^t e^{-2 \eta \tb{x}_{2,2}^t - \eta  \tb{x}^{t-1}_{2,2}}
					+\tb{x}_{1,2}^t e^{-2 \eta \tb{x}_{2,1}^t - \eta  \tb{x}^{t-1}_{2,1}}}\label{x12t+1}.
		\end{align}
		By the similar computation, we have
		\begin{align}
			\tb{x}_{1,1}^{t} = &\frac{\tb{x}_{1,1}^{t-1} e^{2 \eta \tb{x}_{2,2}^{t-1} + \eta  \tb{x}^{t-2}_{2,2}}} { \tb{x}_{1,1}^{t-1} e^{2 \eta \tb{x}_{2,2}^{t-1} + \eta  \tb{x}^{t-2}_{2,2}}
				+\tb{x}_{1,2}^{t-1} e^{3\eta -2 \eta \tb{x}_{2,2}^{t-1} - \eta  \tb{x}^{t-2}_{2,2}}}\label{x11t},\\
			\tb{x}_{1,2}^{t} = &\frac{\tb{x}_{1,2}^{t-1} e^{3\eta -2 \eta \tb{x}_{2,2}^{t-1} - \eta  \tb{x}^{t-2}_{2,2}}} { \tb{x}_{1,1}^{t-1} e^{2 \eta \tb{x}_{2,2}^{t-1} + \eta  \tb{x}^{t-2}_{2,2}}
				+\tb{x}_{1,2}^{t-1} e^{3\eta -2 \eta \tb{x}_{2,2}^{t-1} - \eta  \tb{x}^{t-2}_{2,2}}}\label{x12t}~,\\
			\tb{x}_{1,1}^{t+2} = &\frac{\tb{x}_{1,1}^{t+1} e^{2 \eta \tb{x}_{2,2}^{t+1} + \eta  \tb{x}^{t}_{2,2}}} { \tb{x}_{1,1}^{t+1} e^{2 \eta \tb{x}_{2,2}^{t+1} + \eta  \tb{x}^{t}_{2,2}}
				+\tb{x}_{1,2}^{t+1} e^{3\eta -2 \eta \tb{x}_{2,2}^{t+1} - \eta  \tb{x}^{t}_{2,2}}}\label{x11t+2}~,\\
			\tb{x}_{1,2}^{t+2} = &\frac{\tb{x}_{1,2}^{t+1} e^{3\eta -2 \eta \tb{x}_{2,2}^{t+1} - \eta  \tb{x}^{t}_{2,2}}} {\tb{x}_{1,1}^{t+1} e^{2 \eta \tb{x}_{2,2}^{t+1} + \eta  \tb{x}^{t}_{2,2}}
				+\tb{x}_{1,2}^{t+1} e^{3\eta -2 \eta \tb{x}_{2,2}^{t+1} - \eta  \tb{x}^{t}_{2,2}}}\label{x12t+2}~,\\
			\tb{x}_{2,1}^{t} = &\frac{\tb{x}_{2,1}^{t-1} e^{-2 \eta \tb{x}_{1,2}^{t-1} - \eta  \tb{x}^{t-2}_{1,2}}} { \tb{x}_{2,1}^{t-1} e^{-2 \eta \tb{x}_{1,2}^{t-1} - \eta  \tb{x}^{t-2}_{1,2}}
				+\tb{x}_{2,2}^{t-1} e^{-3\eta +2 \eta \tb{x}_{1,2}^{t-1} + \eta  \tb{x}^{t-2}_{1,2}}}\label{x21t}~,\\
			\tb{x}_{2,2}^{t} = &\frac{\tb{x}_{2,2}^{t-1} e^{-3\eta +2 \eta \tb{x}_{1,2}^{t-1} + \eta  \tb{x}^{t-2}_{1,2}}} { \tb{x}_{2,1}^{t-1} e^{-2 \eta \tb{x}_{1,2}^{t-1} - \eta  \tb{x}^{t-2}_{1,2}}
				+\tb{x}_{2,2}^{t-1} e^{-3\eta +2 \eta \tb{x}_{1,2}^{t-1} + \eta  \tb{x}^{t-2}_{1,2}}}\label{x22t}~,\\
			\tb{x}_{2,1}^{t+1} = &\frac{\tb{x}_{2,1}^{t} e^{2 \eta \tb{x}_{1,2}^{t} + \eta  \tb{x}^{t-1}_{1,2}}} { \tb{x}_{2,1}^{t} e^{2 \eta \tb{x}_{1,2}^{t} + \eta  \tb{x}^{t-1}_{1,2}}
				+\tb{x}_{2,2}^{t} e^{3\eta-2 \eta \tb{x}_{1,2}^{t} - \eta  \tb{x}^{t-1}_{1,2}}}\label{x21t+1}~,\\
			\tb{x}_{2,2}^{t+1} = &\frac{\tb{x}_{2,2}^{t} e^{3\eta-2 \eta \tb{x}_{1,2}^{t} - \eta  \tb{x}^{t-1}_{1,2}}} { \tb{x}_{2,1}^{t} e^{2 \eta \tb{x}_{1,2}^{t} + \eta  \tb{x}^{t-1}_{1,2}}
				+\tb{x}_{2,2}^{t} e^{3\eta-2 \eta \tb{x}_{1,2}^{t} - \eta  \tb{x}^{t-1}_{1,2}}}\label{x22t+1}~,\\
				\tb{x}_{2,1}^{t+2} = &\frac{\tb{x}_{2,1}^{t+1} e^{-2 \eta \tb{x}_{1,2}^{t+1} - \eta  \tb{x}^{t}_{1,2}}} { \tb{x}_{2,1}^{t+1} e^{-2 \eta \tb{x}_{1,2}^{t+1} - \eta  \tb{x}^{t}_{1,2}}
					+\tb{x}_{2,2}^{t+1} e^{-3\eta+2 \eta \tb{x}_{1,2}^{t+1} + \eta  \tb{x}^{t}_{1,2}}}\label{x21t+2}~,\\
				\tb{x}_{2,2}^{t+2} = &\frac{\tb{x}_{2,2}^{t+1} e^{-3\eta+2 \eta \tb{x}_{1,2}^{t+1} + \eta  \tb{x}^{t}_{1,2}}} { \tb{x}_{2,1}^{t+1} e^{-2 \eta \tb{x}_{1,2}^{t+1} - \eta  \tb{x}^{t}_{1,2}}
					+\tb{x}_{2,2}^{t+1} e^{-3\eta+2 \eta \tb{x}_{1,2}^{t+1} + \eta  \tb{x}^{t}_{1,2}}}\label{x22t+2}.
		\end{align}
		Then we have
		\begin{align}
			&\frac{\tb{x}_{1,1}^{t+1}}{\tb{x}_{1,2}^{t+1}}=\frac{\tb{x}_{1,1}^{t}}{\tb{x}_{1,2}^{t}}\cdot e^{3\eta-2\eta(2 \tb{x}_{2,2}^t +\tb{x}^{t-1}_{2,2})}\label{r: 1 t+1},\\
			&\frac{\tb{x}_{1,1}^{t}}{\tb{x}_{1,2}^{t}}=\frac{\tb{x}_{1,1}^{t-1}}{\tb{x}_{1,2}^{t-1}}\cdot e^{-3\eta+2\eta( 2\tb{x}_{2,2}^{t-1} +  \tb{x}^{t-2}_{2,2})}\label{r: 1 t},\\
			&\frac{\tb{x}_{1,1}^{t+2}}{\tb{x}_{1,2}^{t+2}}=\frac{\tb{x}_{1,1}^{t+1}}{\tb{x}_{1,2}^{t+1}}\cdot e^{-3\eta+2\eta( 2\tb{x}_{2,2}^{t+1} +  \tb{x}^{t}_{2,2})}\label{r: 1 t+2},\\
			&\frac{\tb{x}_{2,1}^{t}}{\tb{x}_{2,2}^{t}}=\frac{\tb{x}_{2,1}^{t-1}}{\tb{x}_{2,2}^{t-1}}\cdot e^{3\eta-2\eta( 2\tb{x}_{1,2}^{t-1} +  \tb{x}^{t-2}_{1,2})}\label{r: 2 t},\\
			&\frac{\tb{x}_{2,1}^{t+1}}{\tb{x}_{2,2}^{t+1}}=\frac{\tb{x}_{2,1}^{t}}{\tb{x}_{2,2}^{t}}\cdot e^{-3\eta+2\eta( 2\tb{x}_{1,2}^{t} +  \tb{x}^{t-1}_{1,2})}\label{r: 2 t+1},\\
			&\frac{\tb{x}_{2,1}^{t+2}}{\tb{x}_{2,2}^{t+2}}=\frac{\tb{x}_{2,1}^{t+1}}{\tb{x}_{2,2}^{t+1}}\cdot e^{3\eta-2\eta( 2\tb{x}_{1,2}^{t+1} +  \tb{x}^{t}_{1,2})}\label{r: 2 t+2}.		
		\end{align}
		Equation~\eqref{r: 1 t+1} follows from the ratio of equation~\eqref{x11t+1} to equation~\eqref{x12t+1}.
		
		Equation~\eqref{r: 1 t} follows from the ratio of equaiton~\eqref{x11t} to equation~\eqref{x12t}.

		Equation~\eqref{r: 1 t+2} follows from the ratio of equaiton~\eqref{x11t+2} to equation~\eqref{x12t+2}, implying item \ref{fraction 5} in the lemma.
		
		Equation~\eqref{r: 2 t}  follows from the ratio of equaiton~\eqref{x21t} to equation~\eqref{x22t}.
		
		Equation~\eqref{r: 2 t+1}  follows from the ratio of equaiton~\eqref{x21t+1} to equation~\eqref{x22t+1}, implying item \ref{fraction 6} in the lemma.
		
		Equation~\eqref{r: 2 t+2}  follows from the ratio of equaiton~\eqref{x21t+2} to equation~\eqref{x22t+2}.
		
		It holds that
		\begin{align*}
			&\frac{\tb{x}_{1,1}^{t+1}}{\tb{x}_{1,2}^{t+1}}=\frac{\tb{x}_{1,1}^{t-1}}{\tb{x}_{1,2}^{t-1}}\cdot e^{-2\eta(2 \tb{x}_{2,2}^t -\tb{x}^{t-1}_{2,2}-\tb{x}^{t-2}_{2,2})},\\
			&\frac{\tb{x}_{1,1}^{t+2}}{\tb{x}_{1,2}^{t+2}}=\frac{\tb{x}_{1,1}^{t}}{\tb{x}_{1,2}^{t}}\cdot e^{2\eta( 2\tb{x}_{2,2}^{t+1} -  \tb{x}^{t}_{2,2}-\tb{x}^{t-1}_{2,2})},\\
			&\frac{\tb{x}_{2,1}^{t+1}}{\tb{x}_{2,2}^{t+1}}=\frac{\tb{x}_{2,1}^{t-1}}{\tb{x}_{2,2}^{t-1}}\cdot e^{2\eta( 2\tb{x}_{1,2}^{t} -  \tb{x}^{t-1}_{1,2}-  \tb{x}^{t-2}_{1,2})},\\
			&\frac{\tb{x}_{2,1}^{t+2}}{\tb{x}_{2,2}^{t+2}}=\frac{\tb{x}_{2,1}^{t}}{\tb{x}_{2,2}^{t}}\cdot e^{-2\eta( 2\tb{x}_{1,2}^{t+1} -  \tb{x}^{t}_{1,2}-  \tb{x}^{t-1}_{1,2})}.
		\end{align*}
		The first equation comes from the combination of equation~\ref{r: 1 t+1} and equation~\ref{r: 1 t}, implying item \ref{fraction 1} in the lemma.
		
		The second equation comes from the combination of equation~\ref{r: 1 t+1} and equation~\ref{r: 1 t+2}, implying item \ref{fraction 3} in the lemma.
		
		The third equation comes from the combination of equation~\ref{r: 2 t} and equation~\ref{r: 2 t+1}, implying item \ref{fraction 2} in the lemma.
		
		The fourth equation comes from the combination of equation~\ref{r: 2 t+1} and equation~\ref{r: 2 t+2}, implying item \ref{fraction 4} in the lemma.
		
	\end{proof}

	The following lemma is a simple but useful tool in our proof.
	\begin{lem}\label{lem: ineq}
		Let $ u,v\in [\frac{1}{2},1-\eta^\frac{1}{2}] $, $ 0\le w\le 1 $, if
		\begin{align}
			\frac{1-v}{v}\le \frac{1-u}{u}\cdot e^{w},
		\end{align}
		then we have $ u-v\le w $. If 
		\begin{align}
			\frac{1-v}{v}\ge \frac{1-u}{u}\cdot e^{w},
		\end{align}
		then we have $ u-v\ge \frac{1}{2}w \eta^\frac{1}{2} $.
	\end{lem}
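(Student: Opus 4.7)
The plan is to reformulate both inequalities in terms of the logit function $\phi(x) = \log\frac{x}{1-x}$ and then invoke the mean value theorem. Since $\frac{1-x}{x} = e^{-\phi(x)}$, the first hypothesis $\frac{1-v}{v} \le \frac{1-u}{u}\, e^w$ is equivalent to $\phi(u) - \phi(v) \le w$, and the second hypothesis is equivalent to $\phi(u) - \phi(v) \ge w$. This replaces the multiplicative ratio condition by an additive gap on a strictly increasing function, which is much easier to handle.

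Next I will apply the mean value theorem to $\phi$. Because $\phi'(x) = \frac{1}{x(1-x)}$, whenever $u \ne v$ there exists some $x^*$ strictly between $u$ and $v$ with
\[
\phi(u) - \phi(v) \;=\; \frac{u - v}{x^*(1-x^*)},
\]
or equivalently $u - v = (\phi(u) - \phi(v)) \cdot x^*(1-x^*)$. Since $u, v \in [\tfrac{1}{2}, 1-\eta^{1/2}]$, the intermediate point $x^*$ also lies in $[\tfrac{1}{2}, 1-\eta^{1/2}]$.

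For the first claim, if $u \ge v$ then $\phi(u) - \phi(v) \ge 0$ and the universal bound $x^*(1-x^*) \le \tfrac{1}{4}$ gives $u - v \le w/4 \le w$; and if $u < v$, the inequality $u - v \le w$ is trivially satisfied. For the second claim, the assumption $\phi(u) - \phi(v) \ge w \ge 0$ forces $u \ge v$, and since $x(1-x)$ is decreasing on $[\tfrac{1}{2}, 1]$ we get $x^*(1-x^*) \ge (1-\eta^{1/2})\,\eta^{1/2} \ge \tfrac{\eta^{1/2}}{2}$ (using $\eta^{1/2} \le \tfrac{1}{2}$, which is implicit in the setting). Hence $u - v \ge \tfrac{1}{2} w \eta^{1/2}$, as claimed.

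I do not expect a serious technical obstacle. The delicate point is precisely the lower bound $x^*(1-x^*) \ge \eta^{1/2}/2$ used in the second claim: this is what dictates the constraint $v \le 1 - \eta^{1/2}$ in the hypothesis. If $v$ were allowed to approach the boundary of $[0,1]$, the logit $\phi$ would become arbitrarily steep there, so a given additive gap $w$ on $\phi$ would translate into an arbitrarily small gap $u - v$, and the second conclusion would fail; the lemma is essentially a quantitative record of this trade-off.
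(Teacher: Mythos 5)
Your proof is correct, and it takes a genuinely different route from the paper's. The paper works directly with the ratio inequalities: it invokes the elementary bounds $e^w \le 1 + 2w$ (valid for $0 \le w \le 1$) and $e^w \ge 1 + w$, cross-multiplies, and then bounds $v(1-u)$ (resp.\ $2v(1-u)$) from below (resp.\ above) using the interval hypotheses on $u$ and $v$. You instead recast both hypotheses as additive bounds on the logit difference $\phi(u) - \phi(v)$ with $\phi(x) = \log\frac{x}{1-x}$, and then apply the mean value theorem with $\phi'(x) = \frac{1}{x(1-x)}$; the interval hypotheses enter only as bounds on $x^*(1-x^*)$. Both proofs are sound. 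Your approach is cleaner in a couple of respects: the first conclusion comes out as the slightly sharper $u - v \le w/4$ essentially for free, and the argument does not actually use the hypothesis $w \le 1$ (whereas the paper needs it to justify $e^w \le 1 + 2w$). You also make the geometric role of the constraint $v \le 1 - \eta^{1/2}$ explicit, which is a nice explanatory bonus. The one point worth spelling out is the implicit assumption $\eta^{1/2} \le \tfrac{1}{2}$: it is indeed automatic, since the hypothesis interval $[\tfrac{1}{2}, 1-\eta^{1/2}]$ is non-empty only when $1-\eta^{1/2} \ge \tfrac{1}{2}$. Also, when $u = v$ the MVT gives no $x^*$, but then both conclusions are immediate (in the second case the hypothesis forces $w = 0$), so that degenerate case is harmless.
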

	\begin{proof}
		First, we consider the case when $ \frac{1-v}{v}\le \frac{1-u}{u}\cdot e^{w} $, 
		\begin{align*}
			\frac{1-v}{v}&\le \frac{1-u}{u}\cdot e^{w}\\
			&\le \frac{1-u}{u}\cdot (1+2w),
		\end{align*}
		where the inequality comes from $ e^w\le 1+2w $  when $ 0<w\le 1 $.
		This simplifies to:
		\begin{align*}
			u-v&\le 2v(1-u)w\\
			&\le 2\cdot 1\cdot \frac{1}{2} \cdot w =w,
		\end{align*}
		where the second line arises from $ v\le 1 $ and $ u\ge \frac{1}{2} $.
		
		Then we consider the second part of the lemma,
		\begin{align*}
			\frac{1-v}{v}&\ge \frac{1-u}{u}\cdot e^{w}\\
			&\ge \frac{1-u}{u}\cdot (1+w),
		\end{align*}
		where the inequality comes from $ e^w\ge 1+w $.
		This simplifies to:
		\begin{align*}
			u-v&\ge v(1-u)w\\
			&\ge \frac{1}{2}w\eta^\frac{1}{2},
		\end{align*}
		where the second inequality follows from $ v\ge \frac{1}{2} $ and $ u\le 1-\eta^\frac{1}{2} $.
	\end{proof}
The following lemma shows the effect of initial conditions on $\textbf{x}_{1,2}^1 $ and $\textbf{x}_{2,2}^1$.
 	\begin{lem}\label{lem: initialx1}
	    For the dynamic system in Theorem~\ref{thm: OMWU fails}, when $\textbf{x}_{1,2}^0,\textbf{x}_{2,2}^0>\frac{1}{2}+2p $ for $p\in(0,\frac{1}{4}) $, and $\eta$ is sufficiently small such that $p\ge 16\eta^\frac{1}{2}$, then for any possible $\textbf{x}_{1,1}^{-1},\textbf{x}_{1,2}^{-1}$, we have
     \begin{align*}
     \textbf{x}_{1,2}^1,\textbf{x}_{2,2}^1>\frac{1}{2}+p.
     \end{align*}
	\end{lem}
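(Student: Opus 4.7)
The plan is to translate the target bound $\tb{x}_{j,2}^1 > \tfrac{1}{2} + p$ into the equivalent statement $\tb{x}_{j,1}^1/\tb{x}_{j,2}^1 < (1-2p)/(1+2p)$ and then estimate the ratio through a single (OMWU) step. Because the two players face symmetric simplex constraints in $\Delta_2$, the same calculation applied to each player suffices.

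Specializing the (OMWU) update to $t = 0$ (even) with the payoff matrices of Theorem~\ref{thm: OMWU fails} (compare equations (r: 1 t+1) and (r: 2 t+1) in the proof of Lemma~\ref{lem: fraction}) yields
\begin{align*}
\frac{\tb{x}_{1,1}^1}{\tb{x}_{1,2}^1} &= \frac{\tb{x}_{1,1}^0}{\tb{x}_{1,2}^0} \cdot \exp\!\bigl(3\eta - 2\eta(2\tb{x}_{2,2}^0 + \tb{x}_{2,2}^{-1})\bigr),\\
\frac{\tb{x}_{2,1}^1}{\tb{x}_{2,2}^1} &= \frac{\tb{x}_{2,1}^0}{\tb{x}_{2,2}^0} \cdot \exp\!\bigl(-3\eta + 2\eta(2\tb{x}_{1,2}^0 + \tb{x}_{1,2}^{-1})\bigr).
\end{align*}
The hypothesis $\tb{x}_{1,2}^0, \tb{x}_{2,2}^0 > \tfrac{1}{2} + 2p$ implies $\tb{x}_{j,1}^0/\tb{x}_{j,2}^0 < (1-4p)/(1+4p)$ for both $j$. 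For player~1 it also yields $2\tb{x}_{2,2}^0 + \tb{x}_{2,2}^{-1} \geq 1 + 4p$, so its exponent is at most $\eta(1 - 8p) \leq \eta$. For player~2, the previous-step term $\tb{x}_{1,2}^{-1}$ is unconstrained, but the crude bound $2\tb{x}_{1,2}^0 + \tb{x}_{1,2}^{-1} \leq 3$ still forces its exponent below $3\eta$.

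Combining these bounds, in both cases
\[
\frac{\tb{x}_{j,1}^1}{\tb{x}_{j,2}^1} < \frac{1-4p}{1+4p}\, e^{3\eta},
\]
so the proof reduces to the scalar inequality $\tfrac{1-4p}{1+4p}\, e^{3\eta} \leq \tfrac{1-2p}{1+2p}$. After cross-multiplication this becomes $(e^{3\eta} - 1)(1 - 2p - 8p^2) \leq 4p$. Using $e^{3\eta} - 1 \leq 6\eta$ for $\eta$ sufficiently small and $1 - 2p - 8p^2 < 1$ for $p \in (0, 1/4)$, it suffices to verify $6\eta \leq 4p$, which follows immediately from $p \geq 16\eta^{1/2}$ once $\eta \leq 1$. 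Translating the ratio bound back to a component bound gives $\tb{x}_{j,2}^1 > \tfrac{1}{2} + p$ for $j=1,2$, as desired.

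The only delicate point is that $\tb{x}^{-1}_{i,2}$ is completely unconstrained, and for player~2 it can push the exponent of the multiplicative correction to a positive value of order $3\eta$ rather than the favourable $\eta(1-8p)$ available for player~1. The assumption $p \geq 16\eta^{1/2}$ is exactly what ensures that this $O(\eta)$ erosion of the initial margin is negligible compared with the reserve of $p$ that the hypothesis $\tfrac{1}{2}+2p$ provides, so that after one step at least a margin of $p$ survives.
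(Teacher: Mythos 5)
Your proposal is correct, but it follows a genuinely different route from the paper's. The paper's proof is a direct one-line estimate on the closed-form expression for $\tb{x}_{1,2}^1$: the exponent in the numerator is bounded below by $-3\eta$ (using $\tb{x}_{2,1}^0,\tb{x}_{2,1}^{-1}\in[0,1]$), while both exponents in the denominator are nonpositive so the denominator is at most $\tb{x}_{1,1}^0+\tb{x}_{1,2}^0=1$; this gives $\tb{x}_{1,2}^1\ge \tb{x}_{1,2}^0 e^{-3\eta}\ge (\tfrac{1}{2}+2p)(1-3\eta)\ge\tfrac{1}{2}+p$, with the last step using $p\ge 16\eta^{1/2}$, and the symmetric argument handles $\tb{x}_{2,2}^1$. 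You instead pass through the multiplicative ratio formulas of the form $\tb{x}_{j,1}^{t+1}/\tb{x}_{j,2}^{t+1}=(\tb{x}_{j,1}^t/\tb{x}_{j,2}^t)\,e^{(\cdots)}$ (the ones the paper establishes in the proof of Lemma~\ref{lem: fraction}), translate the target bound into a ratio inequality $\tb{x}_{j,1}^1/\tb{x}_{j,2}^1 < (1-2p)/(1+2p)$, and reduce everything to the scalar verification $\tfrac{1-4p}{1+4p}e^{3\eta}\le\tfrac{1-2p}{1+2p}$. This is correct --- I checked the cross-multiplication $(1-2p-8p^2)(e^{3\eta}-1)\le 4p$ and the use of $p\ge 16\eta^{1/2}$ --- and it is actually more aligned with how the rest of the appendix (Lemmas~\ref{lem: fraction}, \ref{lem: ineq}, and \ref{lem: close to boundary}) manipulates these quantities; but for this particular lemma the paper chose the simpler direct bound, which avoids both the ratio formulas and the $(1\pm 4p)/(1\pm 2p)$ algebra. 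Both arguments need the same slack from $p\ge 16\eta^{1/2}$ to absorb the $O(\eta)$ erosion of the initial margin, and your final remark correctly identifies that as the crux.
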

\begin{proof}
    By equality~\eqref{x11t+1} with $t=0$, it holds that
    \begin{align*}
        \tb{x}_{1,2}^{1} &= 
			\frac{\tb{x}_{1,2}^0 e^{-2 \eta \tb{x}_{2,1}^0 - \eta  \tb{x}^{-1}_{2,1}}}{ \tb{x}_{1,1}^0 e^{-2 \eta \tb{x}_{2,2}^0 - \eta  \tb{x}^{-1}_{2,2}}
				+\tb{x}_{1,2}^0 e^{-2 \eta \tb{x}_{2,1}^0 - \eta  \tb{x}^{-1}_{2,1}} }\\
    &\ge 
			\frac{\tb{x}_{1,2}^0 e^{-3 \eta}}{ \tb{x}_{1,1}^0 
				+\tb{x}_{1,2}^0 }  \\
    &= \tb{x}_{1,2}^0 e^{-3 \eta}\ge (\frac{1}{2}+2p)\cdot (1-3\eta)\ge \frac{1}{2}+p.
    \end{align*}
    The last inequality comes from $p\ge 16\eta^\frac{1}{2}$. Applying the same method, we can also conclude that $\tb{x}_{2,2}^{1} \ge \frac{1}{2}+p$.
\end{proof}
	Next, we give the key lemma for the proof of Proposition~\ref{prop: OMWU fails}. Note that in comparison to  Theorem~\ref{thm: OMWU fails}, the following lemma asks a stronger condition: the initial points $\textbf{x}_{1,2}^0$ and $\textbf{x}_{2,2}^0$ are larger than $\frac{1}{2}$. Later, we will demonstrate how this condition can be relaxed.


	\begin{lem}\label{lem: close to boundary}
		For the same periodic game as Theorem~\ref{thm: OMWU fails} defined by payoff matrices 
		\begin{align}\label{2-periodic game}
			A_t=
			\begin{cases}
				\begin{bmatrix}
					0 & & 1 \\
					1 & & 0
				\end{bmatrix}, & t \textnormal{\ \ is \ odd} \\ 
				\\
				\begin{bmatrix}
					0 & -1 \\
					-1 & 0
				\end{bmatrix}, & t\textnormal{\ \ is \ even}
			\end{cases}
		\end{align}
		Let $0<p<\frac{1}{4}$, and $\eta$ sufficiently small so that $ p\ge 16\eta^\frac{1}{2} $. And the initial points satisfy the condition
  \begin{align*}
      \textbf{x}_{1,2}^{0},\textbf{x}_{2,2}^{0}\ge \frac{1}{2}+2p.
  \end{align*}
  For $t$ is even and $t\ge 4$, if for any $k<t$, $\tb{x}_{1,2}^k, \tb{x}_{2,2}^k \le 1-\sqrt{\eta}$, it holds that
		\begin{enumerate}
			\item\label{result 1} $ \frac{3}{4}p\eta^3\le \tb{x}^{t+1}_{2,2}- \tb{x}^{t-1}_{2,2}\le 12\eta^2 $,	
			\item $ \frac{3}{2}p\eta^\frac{3}{2} \le \tb{x}^{t}_{2,2}-\tb{x}^{t+1}_{2,2}\le 3\eta $,
			\item\label{result 3} $ \frac{3}{4}p\eta^3\le \tb{x}^{t+2}_{1,2}-\tb{x}^{t}_{1,2} \le 12\eta^2 $,
			\item\label{result 4} $  \frac{3}{4}p\eta^3\le \tb{x}^{t+1}_{1,2}-\tb{x}^{t-1}_{1,2} $,
			\item\label{result 5} $  \frac{3}{4}p\eta^3\le \tb{x}^{t+2}_{2,2}-\tb{x}^{t}_{2,2} $,		
			
	        \item $ \frac{3}{2}p\eta^\frac{3}{2} \le \tb{x}^{t+1}_{1,2}-\tb{x}^{t+2}_{1,2}\le 3\eta $.
	\end{enumerate}
	\end{lem}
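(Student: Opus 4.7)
The plan is to prove the six items simultaneously by induction on the even index $t\ge 4$, since each item's inductive step relies on previously-established items for the \emph{other} player's coordinate. The inductive hypothesis I would carry forward is that items (1)--(6) hold at every even index from $4$ up to $t$, together with the quantitative one-sided bounds $\tb{x}_{1,2}^k,\tb{x}_{2,2}^k \in [\tfrac{1}{2}+p,\,1-\eta^{1/2}]$ for every $k\le t+1$. The lower half is maintained because the monotone subsequences in items (1), (3), (4) and (5) only grow, and the one-step dips in items (2) and (6) are bounded by $3\eta\le p$ under the hypothesis $p\ge 16\eta^{1/2}$, so the initial slack $\tfrac{1}{2}+p$ provided by Lemma~\ref{lem: initialx1} is never exhausted; the upper half is a hypothesis of the lemma.

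For the base case $t=4$, I would combine the assumed initial condition $\tb{x}_{1,2}^0,\tb{x}_{2,2}^0\ge\tfrac{1}{2}+2p$ with Lemma~\ref{lem: initialx1} (which upgrades this to $\tb{x}_{1,2}^1,\tb{x}_{2,2}^1\ge\tfrac{1}{2}+p$) and then iterate the explicit update equations~\eqref{x11t+1}--\eqref{x22t+2} to produce closed-form expressions for $\tb{x}^2,\dots,\tb{x}^6$. Each of the six inequalities for $t=4$ then reduces to a Taylor expansion in $\eta$ around $1/2$ using only these explicit expressions and the uniform bound $p\ge 16\eta^{1/2}$.

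The inductive step is where Lemma~\ref{lem: fraction} and Lemma~\ref{lem: ineq} combine. I illustrate on item (1): the identity in Lemma~\ref{lem: fraction}(2) gives
\begin{align*}
\frac{\tb{x}_{2,1}^{t+1}}{\tb{x}_{2,2}^{t+1}}=\frac{\tb{x}_{2,1}^{t-1}}{\tb{x}_{2,2}^{t-1}}\,e^{2\eta(2\tb{x}_{1,2}^{t}-\tb{x}_{1,2}^{t-1}-\tb{x}_{1,2}^{t-2})}.
\end{align*}
Applying item (6) at step $t-2$ (which yields $\tb{x}_{1,2}^{t-1}-\tb{x}_{1,2}^{t}\ge\tfrac{3}{2}p\eta^{3/2}$) together with item (3) at step $t-2$ (which yields $\tb{x}_{1,2}^{t}-\tb{x}_{1,2}^{t-2}\le 12\eta^2$), the exponent lies in $[-6\eta^2,\,-3p\eta^{5/2}+24\eta^3]$, whose upper endpoint is $\le -\tfrac{3}{2}p\eta^{5/2}$ thanks again to $p\ge 16\eta^{1/2}$. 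Since $\tb{x}_{2,2}^{t-1}\in[\tfrac{1}{2}+p,\,1-\eta^{1/2}]$ by the hypothesis, Lemma~\ref{lem: ineq}'s second half converts this into $\tb{x}_{2,2}^{t+1}-\tb{x}_{2,2}^{t-1}\ge\tfrac{1}{2}\cdot\tfrac{3}{2}p\eta^{5/2}\cdot\eta^{1/2}=\tfrac{3}{4}p\eta^3$, matching the claim exactly; the upper bound $12\eta^2$ follows from Lemma~\ref{lem: ineq}'s first half applied with the absolute-value bound $6\eta^2$ on the exponent. Items (3), (4), (5) are the mirror statements and follow from Lemma~\ref{lem: fraction}(3), (1), (4) by the same pattern. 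Items (2) and (6), which compare consecutive iterates, use the single-step identities Lemma~\ref{lem: fraction}(6) and (5) whose exponents reduce to $\eta(4\tb{x}^{\cdot}_{1,2}+2\tb{x}^{\cdot}_{1,2}-3)\in[6p\eta,\,3\eta]$, and Lemma~\ref{lem: ineq} then delivers the $\Theta(p\eta^{3/2})$ lower bound and $3\eta$ upper bound.

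The main obstacle is purely bookkeeping. The six items are tightly coupled through Lemma~\ref{lem: fraction}, which mixes values at four consecutive indices across both players, so the inductive hypothesis must simultaneously provide the matching bounds in each of several magnitude regimes ($\eta$, $p\eta^{3/2}$, $\eta^2$, $p\eta^3$). The delicate step is ensuring that the constants do not degrade from one induction stage to the next: the cancellation of the dominant $-\tfrac{3}{2}p\eta^{3/2}$ term in the exponent of item (1) against the smaller $12\eta^2$ term requires the explicit threshold $p\ge 16\eta^{1/2}$, and one must also verify that the $O(\eta^3)$ correction arising from the $\tb{x}_{1,2}^t-\tb{x}_{1,2}^{t-2}$ contribution never overtakes the targeted $p\eta^{5/2}$ scale of the exponent for all admissible $\eta$.
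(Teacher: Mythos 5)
Your proposal follows essentially the same route as the paper's proof: simultaneous induction on even $t\ge 4$, with Lemma~\ref{lem: initialx1} establishing the initial slack, Lemma~\ref{lem: fraction} supplying the multiplicative ratio identities, and Lemma~\ref{lem: ineq} converting bounds on the exponents into bounds on the differences. Your worked-out derivation of item~(1) in the inductive step is in fact nearly verbatim the paper's argument for ``induction 1'': you bound the exponent $2\eta(2\tb{x}_{1,2}^{t}-\tb{x}_{1,2}^{t-1}-\tb{x}_{1,2}^{t-2})$ by combining item~(6) and item~(3) at $t-2$, observe the cancellation via $p\ge 16\eta^{1/2}$, and feed the result into the second half of Lemma~\ref{lem: ineq}; the paper does exactly this with $k=t-2$. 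Your explanation of why the invariant $\tb{x}_{i,2}^k\ge\tfrac12+p$ survives (monotone even subsequences plus $3\eta\le p$ bounds on the odd dips) also matches the paper's use of items~(3) and~(6) to deduce $\tb{x}_{1,2}^{k+1}\ge\tb{x}_{1,2}^{k+2}\ge\tfrac12+p$.

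Two points where you are imprecise relative to what the paper actually does. First, your base case: you propose producing ``closed-form expressions for $\tb{x}^2,\dots,\tb{x}^6$'' and arguing by ``a Taylor expansion in $\eta$ around $1/2$.'' The paper does not Taylor-expand around $1/2$ (the initial points sit at $\ge\tfrac12+2p$, not at $\tfrac12$, and the off-step initialization $\tb{x}^{-1}$ is unconstrained). Instead it proves the base case with the \emph{same} mechanism as the induction step: it first derives preliminary bounds on $\tb{x}^1_{1,2}-\tb{x}^2_{1,2}$, $\tb{x}^1_{2,2}-\tb{x}^3_{2,2}$, $\tb{x}^3_{2,2}-\tb{x}^2_{2,2}$, $\tb{x}^4_{1,2}-\tb{x}^2_{1,2}$, $\tb{x}^3_{1,2}-\tb{x}^4_{1,2}$ by applying Lemma~\ref{lem: fraction} items~(2),~(3),~(5),~(6) and Lemma~\ref{lem: ineq}, then uses these to prove the six items at $t=4$. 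Your description here is vague enough that it is not clear a bare expansion would deliver the $p$-dependent lower bounds of order $p\eta^3$ and $p\eta^{3/2}$. Second, you gloss over the within-step dependency ordering by saying items (3)--(6) ``follow by the same pattern.'' In the paper, item~(3) at $t$ uses the \emph{newly established} items~(1) and~(2) at $t$ (not items at $t-2$), and similarly item~(5) uses the new item~(4) together with item~(6) at $t-2$; the six items must be proved in the order (1), (2), (3), (4), (5), (6) within each step. Neither of these is a fundamental gap, but both require the same lemma-based bookkeeping you already exhibit for item~(1) rather than a shortcut.
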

	\begin{proof}
By Lemma~\ref{lem: initialx1}, we obtain
\begin{align}\label{ineq: x1222}
         \textbf{x}_{1,2}^1,\textbf{x}_{2,2}^1>\frac{1}{2}+p.
\end{align}
		We will do induction on $ t $.\\
  \\
		\textbf{Base case:}\\
		In this part, our goal is to prove the following:
		\begin{enumerate}
			\item \label{base case 1}$ \frac{3}{4}p\eta^3\le \tb{x}^{5}_{2,2}- \tb{x}^{3}_{2,2}\le 12\eta^2 $,
			\item \label{base case 2}$ \frac{3}{2}p\eta^\frac{3}{2} \le \tb{x}^{4}_{2,2}-\tb{x}^{5}_{2,2}\le 3\eta $,
			\item \label{base case 3}$ \frac{3}{4}p\eta^3\le \tb{x}^{6}_{1,2}-\tb{x}^{4}_{1,2} \le 12\eta^2 $,

			\item \label{base case 4}$  \frac{3}{4}p\eta^3\le \tb{x}^{5}_{1,2}-\tb{x}^{3}_{1,2} $,
			\item \label{base case 5}$  \frac{3}{4}p\eta^3 \le \tb{x}^{6}_{2,2}-\tb{x}^{4}_{2,2} $,			
			
			\item \label{base case 6}$ \frac{3}{2}p\eta^\frac{3}{2} \le \tb{x}^{5}_{1,2}-\tb{x}^{6}_{1,2}\le 3\eta $.
		\end{enumerate}
		We will prove above in accordance with the sequence of the given order.
		
		We will begin by providing an estimate for $ \tb{x}^{2}_{1,2}$.
		From item~\ref{fraction 5} of lemma~\ref{lem: fraction} with $ t=0 $, we obtain
		\begin{align*}
			\frac{\tb{x}^{2}_{1,1}}{\tb{x}^{2}_{1,2}}=\frac{\tb{x}^{1}_{1,1}}{\tb{x}^{1}_{1,2}}\cdot e^{-3\eta+2\eta(2\tb{x}^{1}_{2,2}+\tb{x}^{0}_{2,2})}.
		\end{align*}
		From $ \tb{x}^{0}_{2,2},\tb{x}^{1}_{2,2}\in [\frac{1}{2}+p,1] $, it holds that
		\begin{align*}
			\frac{\tb{x}^{1}_{1,1}}{\tb{x}^{1}_{1,2}}\cdot e^{6p\eta}
			\le \frac{\tb{x}^{2}_{1,1}}{\tb{x}^{2}_{1,2}}
			\le\frac{\tb{x}^{1}_{1,1}}{\tb{x}^{1}_{1,2}}\cdot e^{3\eta}.
		\end{align*}
		Because $ \tb{x}^{1}_{1,1}+\tb{x}^{1}_{1,2}=1 $, $ \tb{x}^{2}_{1,1}+ \tb{x}^{2}_{1,2}=1$, combining with lemma~\ref{lem: ineq},
		\begin{align*}
			3p\eta^\frac{3}{2}\le \tb{x}^{1}_{1,2}-\tb{x}^{2}_{1,2}\le 3\eta,
		\end{align*}
		which leads to
		\begin{align}\label{ineq: x122}
			\tb{x}^{2}_{1,2}\ge \tb{x}^{1}_{1,2}-3\eta\ge \frac{1}{2}+p-3\eta \ge \frac{1}{2}+\frac{1}{2}p,
		\end{align}
		where the frist inequality follows from $\textbf{x}_{1,2}^1\in [\frac{1}{2}+p,1]$, and the last inequality arises from $p\ge 16\eta^\frac{1}{2}$.
		
		  Then we provide the estimate for $\textbf{x}_{2,2}^1-\textbf{x}_{2,2}^3$, by item \ref{fraction 2} in Lemma~\ref{lem: fraction}, 
		\begin{align*}
			\frac{\textbf{x}_{2,1}^3}{\textbf{x}_{2,2}^3}=\frac{\textbf{x}_{2,1}^1}{\textbf{x}_{2,2}^1}\cdot e^{2\eta(2\textbf{x}_{1,2}^2-\textbf{x}_{1,2}^1-\textbf{x}_{1,2}^0)}.
		\end{align*}
		From inequality~\eqref{ineq: x122}, we have $\tb{x}^{2}_{1,2}\ge \frac{1}{2}$, thus together with $\textbf{x}_{1,2}^1$, $\textbf{x}_{1,2}^0\in[\frac{1}{2}+p,1]$,
		\begin{align*}
			-1\le 2\textbf{x}_{1,2}^2-\textbf{x}_{1,2}^1-\textbf{x}_{1,2}^0\le 1.
		\end{align*}
		Combining with Lemma~\ref{lem: ineq}, we have
		\begin{align}\label{ineq: x2321}
		-2\eta\le	\textbf{x}_{2,2}^1-\textbf{x}_{2,2}^3\le 2\eta.
		\end{align}

		Then we show the estimate for $\textbf{x}_{2,2}^3-\textbf{x}_{2,2}^2$, by item \ref{fraction 6} in Lemma~\ref{lem: fraction}, we have
		\begin{align*}
			\frac{\textbf{x}_{2,1}^3}{\textbf{x}_{2,2}^3}=\frac{\textbf{x}_{2,1}^2}{\textbf{x}_{2,2}^2}\cdot e^{-3\eta +2\eta(2\textbf{x}_{1,2}^2+\textbf{x}_{1,2}^1)}.
		\end{align*}
		From inequality~\eqref{ineq: x122}, we have
		\begin{align*}
		\frac{\textbf{x}_{2,1}^2}{\textbf{x}_{2,2}^1}\cdot e^{4p\eta}\le	\frac{\textbf{x}_{2,1}^3}{\textbf{x}_{2,2}^3}\le \frac{\textbf{x}_{2,1}^2}{\textbf{x}_{2,2}^1}\cdot e^{3\eta}.
		\end{align*}
	According to Lemma~\ref{lem: ineq}, we obtain
	\begin{align}\label{ineq: x2322}
		2p\eta^\frac{3}{2}\le \textbf{x}_{2,2}^3-\textbf{x}_{2,2}^2\le 3\eta.
	\end{align}

		Now we provide the estimate for $\textbf{x}_{1,2}^4-\textbf{x}_{1,2}^2$. From item \ref{fraction 3} in Lemma~\ref{lem: fraction}, 
		\begin{align*}
			\frac{\textbf{x}_{1,1}^4}{\textbf{x}_{1,2}^4}=\frac{\textbf{x}_{1,1}^2}{\textbf{x}_{1,2}^2}\cdot e^{2\eta(2\textbf{x}_{2,2}^3-\textbf{x}_{2,2}^2-\textbf{x}_{2,2}^1)}.
		\end{align*}
			From inequalities~\eqref{ineq: x2321} and \eqref{ineq: x2322}, it holds that
		\begin{align*}
			 \frac{\textbf{x}_{1,1}^2}{\textbf{x}_{1,2}^2}\cdot e^{-2\eta^2}\le \frac{\textbf{x}_{1,1}^4}{\textbf{x}_{1,2}^4}\le \frac{\textbf{x}_{1,1}^2}{\textbf{x}_{1,2}^2}\cdot e^{8\eta^2}.
		\end{align*}
		By Lemma~\ref{lem: ineq}, we have
		\begin{align}\label{ineq: x1412}
			-8\eta^2\le \textbf{x}_{1,2}^4-\textbf{x}_{1,2}^2\le 2\eta^2.
		\end{align}
		
		Next we provide the estimate of $\textbf{x}_{1,2}^3-\textbf{x}_{1,2}^4$, by item \ref{fraction 5} in Lemma~\ref{lem: fraction},
		\begin{align*}
			\frac{\textbf{x}_{1,1}^4}{\textbf{x}_{1,2}^4}
			=\frac{\textbf{x}_{1,1}^3}{\textbf{x}_{1,2}^3}\cdot e^{-3\eta+2\eta(2\textbf{x}_{2,2}^3+\textbf{x}_{2,2}^2)}.
		\end{align*}
		According to the estimate of $\textbf{x}_{2,2}^2$ and $\textbf{x}_{2,2}^3$ in the inequalities~\eqref{ineq: x2321} and \eqref{ineq: x2322}, it holds that
		\begin{align*}
			\frac{\textbf{x}_{1,1}^3}{\textbf{x}_{1,2}^3}\cdot e^{3p\eta}\le 
			\frac{\textbf{x}_{1,1}^4}{\textbf{x}_{1,2}^4}
			\le\frac{\textbf{x}_{1,1}^3}{\textbf{x}_{1,2}^3}\cdot e^{3\eta}.
		\end{align*}
		Using Lemma~\ref{lem: ineq}, we have
		\begin{align}\label{ineq: x1314}
			\frac{3}{2}p\eta^\frac{3}{2} \le \textbf{x}_{1,2}^3-\textbf{x}_{1,2}^4\le 3\eta.
		\end{align}

		\textit{proof of item \ref{base case 1}.}\\
		According to item \ref{fraction 2} lemma~\ref{lem: fraction} with $ t=4 $, we have
		\begin{align*}
			\frac{\tb{x}^{5}_{2,1}}{\tb{x}^{5}_{2,2}}&=\frac{\tb{x}^{3}_{2,1}}{\tb{x}^{3}_{2,2}}\cdot e^{2\eta(2\tb{x}^{4}_{1,2}-\tb{x}^{3}_{1,2}-\tb{x}^{2}_{1,2})}.
		\end{align*}
		Combining with the inequalities~(\ref{ineq: x1412}) and \eqref{ineq: x1314}, it holds that
		\begin{align*}
			\frac{\tb{x}^{5}_{2,1}}{\tb{x}^{3}_{2,2}}\cdot e^{\frac{3}{2}p\eta^\frac{5}{2}}\le \frac{\tb{x}^{3}_{2,1}}{\tb{x}^{3}_{2,2}}\le \frac{\tb{x}^{5}_{2,1}}{\tb{x}^{5}_{2,2}}\cdot e^{12\eta^2}.
		\end{align*}
		By lemma~\ref{lem: ineq}, we have
		\begin{align}\label{ineq: y3y1}
			\frac{3}{4}p\eta^3 \le \tb{x}^{5}_{2,2}-\tb{x}^{3}_{2,2}\le 12\eta^2.		
		\end{align}

		\textit{proof of item \ref{base case 2}.}\\
		From item \ref{fraction 6} in lemma~\ref{lem: fraction} with $ t=4 $,
		\begin{align*}
			\frac{\tb{x}^{5}_{2,1}}{\tb{x}^{5}_{2,2}}=\frac{\tb{x}^{4}_{2,1}}{\tb{x}^{4}_{2,2}}\cdot e^{-3\eta+2\eta(2\tb{x}^{4}_{1,2}+\tb{x}^{3}_{1,2})}.
		\end{align*}
		Using inequalities~(\ref{ineq: x1412}) and \eqref{ineq: x1314},
		\begin{align*}
			\frac{\tb{x}^{4}_{2,1}}{\tb{x}^{4}_{2,2}}\cdot e^{3p\eta}\le \frac{\tb{x}^{5}_{2,1}}{\tb{x}^{5}_{2,2}}\le \frac{\tb{x}^{4}_{2,1}}{\tb{x}^{4}_{2,2}}\cdot e^{3\eta}.
		\end{align*}
		Then we can use lemma~\ref{lem: ineq}, and obtain
		\begin{align}\label{ineq: y2y3}
			\frac{3}{2}p\eta^\frac{3}{2}\le \tb{x}^{4}_{2,2}-\tb{x}^{5}_{2,2}\le 3\eta.
		\end{align}

		\textit{proof of item \ref{base case 3}.}\\
		From item \ref{fraction 3} in lemma~\ref{lem: fraction} with $ t=4 $,
		\begin{align*}
			\frac{\tb{x}^{6}_{1,1}}{\tb{x}^{6}_{1,2}}=\frac{\tb{x}^{4}_{1,1}}{\tb{x}^{4}_{1,2}}\cdot e^{2\eta(2\tb{x}^{5}_{2,2}-\tb{x}^{4}_{2,2}-\tb{x}^{3}_{2,2})} .
		\end{align*}
		According to inequalities (\ref{ineq: y3y1}) and (\ref{ineq: y2y3}), we have
		\begin{align*}
			-3\eta \le -(\tb{x}^{4}_{2,2}-\tb{x}^{5}_{2,2})+(\tb{x}^{5}_{2,2}-\tb{x}^{3}_{2,2})
			\le -\frac{3}{2}p\eta^\frac{3}{2}+12\eta^2,
		\end{align*}
		which is equivalent to
		\begin{align*}
			-3\eta \le -(\tb{x}^{4}_{2,2}-\tb{x}^{5}_{2,2})+(\tb{x}^{5}_{2,2}-\tb{x}^{3}_{2,2})\le -\frac{3}{4}p\eta^\frac{3}{2},
		\end{align*} 
		where the right-side inequality follows from $ p\ge 16\eta^\frac{1}{2} $. Then
		\begin{align*}
			\frac{\tb{x}^{6}_{1,1}}{\tb{x}^{6}_{1,2}}\cdot e^{2\eta\cdot \frac{3}{4}p\eta^\frac{3}{2}} \le \frac{\tb{x}^{4}_{1,1}}{\tb{x}^{4}_{1,2}}\le \frac{\tb{x}^{6}_{1,1}}{\tb{x}^{6}_{1,2}}\cdot e^{2\eta\cdot 3\eta}, \\
			\frac{\tb{x}^{6}_{1,1}}{\tb{x}^{6}_{1,2}}\cdot e^{ \frac{3}{2}p\eta^\frac{5}{2}} \le \frac{\tb{x}^{4}_{1,1}}{\tb{x}^{4}_{1,2}}\le \frac{\tb{x}^{6}_{1,1}}{\tb{x}^{6}_{1,2}}\cdot e^{6\eta^2}.
		\end{align*}
		By lemma~\ref{lem: ineq}, it can be concluded that
		\begin{align*}
			\frac{3}{4}p\eta^3 \le \tb{x}^{6}_{1,2}-\tb{x}^{4}_{1,2}\le 6\eta^2\le 12\eta^2.
		\end{align*}
		\textit{proof of item \ref{base case 4}.}\\
		By (\ref{ineq: y3y1}), (\ref{ineq: y2y3}), \eqref{ineq: x2321} and \eqref{ineq: x2322}, we have
		\begin{align*}
			\tb{x}^{4}_{2,2}-\frac{3}{2}p\eta^\frac{3}{2}\ge \tb{x}^{5}_{2,2}\ge \tb{x}^{3}_{2,2}\ge \textbf{x}_{2,2}^2.
		\end{align*}
		From item \ref{fraction 1} in lemma~\ref{lem: fraction},
		\begin{align*}
			\frac{\tb{x}^{5}_{1,1}}{\tb{x}^{5}_{1,2}}=\frac{\tb{x}^{3}_{1,1}}{\tb{x}^{3}_{1,2}}\cdot e^{-2\eta(2\tb{x}^{4}_{2,2}-\tb{x}^{3}_{2,2}-\tb{x}^{2}_{2,2})}.
		\end{align*}
		Therefore, we obtain
        \begin{align*}
            \frac{\tb{x}^{5}_{1,1}}{\tb{x}^{5}_{1,2}}\cdot e^{6p\eta^\frac{5}{2}}\le\frac{\tb{x}^{3}_{1,1}}{\tb{x}^{3}_{1,2}}.
        \end{align*}
  By Lemma~\ref{lem: ineq},
		\begin{align}\label{ineq: x3x1}
			\tb{x}^{5}_{1,2}- \tb{x}^{3}_{1,2}\ge 3p\eta^3.
		\end{align} 
		\textit{proof of item \ref{base case 5}.}\\
		By (\ref{ineq: x1314}) and (\ref{ineq: x3x1}),
		\begin{align*}
			\tb{x}^{5}_{1,2}\ge \tb{x}^{3}_{1,2}\ge \tb{x}^{4}_{1,2}+\frac{3}{2}p\eta^\frac{3}{2}.
		\end{align*}
		From item \ref{fraction 4} in lemma \ref{lem: fraction} with $ t=4 $,
		\begin{align*}
			\frac{\tb{x}^{6}_{2,1}}{\tb{x}^{6}_{2,2}}=\frac{\tb{x}^{4}_{2,1}}{\tb{x}^{4}_{2,2}}\cdot e^{-2\eta(2\tb{x}^{5}_{1,2}-\tb{x}^{4}_{1,2}-\tb{x}^{3}_{1,2})}.
		\end{align*}
		Then it holds that
        \begin{align*}
            \frac{\tb{x}^{6}_{2,1}}{\tb{x}^{6}_{2,2}}\cdot e^{3p\eta^\frac{5}{2}} \le \frac{\tb{x}^{4}_{2,1}}{\tb{x}^{4}_{2,2}}.
        \end{align*}
        Combining with Lemma~\ref{lem: ineq}, it holds that
		\begin{align*}
			\tb{x}^{6}_{2,2}-\tb{x}^{4}_{2,2}\ge \frac{3}{2}p\eta^3.
		\end{align*}
		\textit{proof of item \ref{base case 6}.}\\
		By (\ref{ineq: y3y1}), (\ref{ineq: y2y3}) and \eqref{ineq: x2321}, 
		\begin{align*}
			\tb{x}^{4}_{2,2}\ge \tb{x}^{5}_{2,2}\ge \tb{x}^{3}_{2,2}\ge \frac{1}{2}+\frac{1}{2}p.
		\end{align*}
		According to item \ref{fraction 5} of lemma \ref{lem: fraction},
		\begin{align*}
			\frac{\tb{x}^{6}_{1,1}}{\tb{x}^{6}_{1,2}}=\frac{\tb{x}^{5}_{1,1}}{\tb{x}^{5}_{1,2}}\cdot e^{-3\eta+2\eta(2\tb{x}^{5}_{2,2}+\tb{x}^{4}_{2,2})},
		\end{align*}
		which leads to
		\begin{align*}
			\frac{\tb{x}^{5}_{1,1}}{\tb{x}^{5}_{1,2}}\cdot e^{3p\eta}\le \frac{\tb{x}^{6}_{1,1}}{\tb{x}^{6}_{1,2}}\le \frac{\tb{x}^{5}_{1,1}}{\tb{x}^{5}_{1,2}}\cdot e^{3\eta}.
		\end{align*}
		Combining with lemma~\ref{lem: ineq}, we have
		\begin{align*}
			\frac{3}{2}p\eta^\frac{3}{2}\le \tb{x}^{5}_{1,2}-\tb{x}^{6}_{1,2}\le 3\eta.	
		\end{align*}

\textbf{Induction step:}

		Now assume that the result in the lemma holds for $ t=k $, 
		\begin{enumerate}
			\item \label{info 1}$ \frac{3}{4}p\eta^3\le \tb{x}^{k+1}_{2,2}- \tb{x}^{k-1}_{2,2}\le 12\eta^2 $,	
			\item \label{info 2}$ \frac{3}{2}p\eta^\frac{3}{2} \le \tb{x}^{k}_{2,2}-\tb{x}^{k+1}_{2,2}\le 3\eta $,
			\item \label{info 3}$ \frac{3}{4}p\eta^3\le \tb{x}^{k+2}_{1,2}-\tb{x}^{k}_{1,2} \le 12\eta^2 $,

			\item \label{info 4}$  \frac{3}{4}p\eta^3\le \tb{x}^{k+1}_{1,2}-\tb{x}^{k-1}_{1,2} $,
			\item \label{info 5}$  \frac{3}{4}p\eta^3\le \tb{x}^{k+2}_{2,2}-\tb{x}^{k}_{2,2} $,		
			
			\item \label{info 6}$ \frac{3}{2}p\eta^\frac{3}{2} \le \tb{x}^{k+1}_{1,2}-\tb{x}^{k+2}_{1,2}\le 3\eta $.
		\end{enumerate}	
		Then we prove the case of $ t=k+2 $, just as follows,
		\begin{enumerate}
			\item \label{induction 1} $ \frac{3}{4}p\eta^3\le \tb{x}^{k+3}_{2,2}- \tb{x}^{k+1}_{2,2}\le 12\eta^2 $,	
			\item \label{induction 2}$ \frac{3}{2}p\eta^\frac{3}{2} \le \tb{x}^{k+2}_{2,2}-\tb{x}^{k+3}_{2,2}\le 3\eta $,
			\item \label{induction 3}$ \frac{3}{4}p\eta^3\le \tb{x}^{k+4}_{1,2}-\tb{x}^{k+2}_{1,2} \le 12\eta^2 $,

			\item \label{induction 4}$  \frac{3}{4}p\eta^3\le \tb{x}^{k+3}_{1,2}-\tb{x}^{k+1}_{1,2} $,
			\item \label{induction 5}$  \frac{3}{4}p\eta^3\le \tb{x}^{k+4}_{2,2}-\tb{x}^{k+2}_{2,2} $,			
			
			\item \label{induction 6}$ \frac{3}{2}p\eta^\frac{3}{2} \le \tb{x}^{k+3}_{1,2}-\tb{x}^{k+4}_{1,2}\le 3\eta $.
		\end{enumerate}
		\textit{proof of item \ref{induction 1}.}\\
		According to item \ref{fraction 2} in lemma~\ref{lem: fraction} with $ t\rightarrow k+2 $, we have
		\begin{align*}
			\frac{\tb{x}^{k+3}_{2,1}}{\tb{x}^{k+3}_{2,2}}&=\frac{\tb{x}^{k+1}_{2,1}}{\tb{x}^{k+1}_{2,2}}\cdot e^{2\eta(2\tb{x}^{k+2}_{1,2}-\tb{x}^{k+1}_{1,2}-\tb{x}^{k}_{1,2})}.
		\end{align*}
		Combining with the item~\ref{info 3} and \ref{info 6} in induction assumption, it holds that
		\begin{align*}
			\frac{\tb{x}^{k+3}_{2,1}}{\tb{x}^{k+3}_{2,2}}\cdot e^{\frac{3}{2}p\eta^\frac{5}{2}}\le \frac{\tb{x}^{k+1}_{2,1}}{\tb{x}^{k+1}_{2,2}}\le \frac{\tb{x}^{k+3}_{2,1}}{\tb{x}^{k+3}_{2,2}}\cdot e^{6\eta^2}.
		\end{align*}
		By lemma~\ref{lem: ineq}, we have
		\begin{align}\label{ineq: ty3y1}
			\frac{3}{4}p\eta^3 \le \tb{x}^{k+3}_{2,2}-\tb{x}^{k+1}_{2,2}\le 6\eta^2 \le 12\eta^2.		
		\end{align}

		\textit{proof of item \ref{induction 2}.}\\
		From item \ref{fraction 6} in lemma~\ref{lem: fraction} with $ t\rightarrow k+2 $,
		\begin{align*}
			\frac{\tb{x}^{k+3}_{2,1}}{\tb{x}^{k+3}_{2,2}}=\frac{\tb{x}^{k+2}_{2,1}}{\tb{x}^{k+2}_{2,2}}\cdot e^{-3\eta+2\eta(2\tb{x}^{k+2}_{1,2}+\tb{x}^{k+1}_{1,2})}.
		\end{align*}
		By induction assumption, we have $ \tb{x}^{k+2}_{1,2}\ge \tb{x}^{k}_{1,2}\ge \tb{x}^{k-2}_{1,2}\ge \cdots \ge \tb{x}^{0}_{1,2}\ge \frac{1}{2}+p $,
		Using item~\ref{info 6} in the induction assumption, we have $ \tb{x}^{k+1}_{1,2}\ge \tb{x}^{k+2}_{1,2}\ge \frac{1}{2}+p $, then 
		\begin{align*}
			\frac{\tb{x}^{k+2}_{2,1}}{\tb{x}^{k+2}_{2,2}}\cdot e^{6p\eta}\le \frac{\tb{x}^{k+3}_{2,1}}{\tb{x}^{k+3}_{2,2}}\le \frac{\tb{x}^{k+2}_{2,1}}{\tb{x}^{k+2}_{2,2}}\cdot e^{3\eta}.
		\end{align*}
		Then we can use lemma~\ref{lem: ineq}, and obtain
		\begin{align}\label{ineq: ty2y3}
			\frac{3}{2}p\eta^\frac{3}{2}\le \tb{x}^{k+2}_{2,2}-\tb{x}^{k+3}_{2,2}\le 3\eta.
		\end{align}

		\textit{proof of item \ref{induction 3}.}\\
		From item \ref{fraction 3} in lemma~\ref{lem: fraction} with $ t\rightarrow k+2 $,
		\begin{align*}
			\frac{\tb{x}^{k+4}_{1,1}}{\tb{x}^{k+4}_{1,2}}=\frac{\tb{x}^{k+2}_{1,1}}{\tb{x}^{k+2}_{1,2}}\cdot e^{2\eta(2\tb{x}^{k+3}_{2,2}-\tb{x}^{k+2}_{2,2}-\tb{x}^{k+1}_{2,2})} .	
		\end{align*}
		and
		\begin{align*}
			2\tb{x}^{k+3}_{2,2}-\tb{x}^{k+2}_{2,2}-\tb{x}^{k+1}_{2,2}=-(\tb{x}^{k+2}_{2,2}-\tb{x}^{k+3}_{2,2})+(\tb{x}^{k+3}_{2,2}-\tb{x}^{k+1}_{2,2}).
		\end{align*}
		According to inequalities (\ref{ineq: ty3y1}) and (\ref{ineq: ty2y3}), we have
		\begin{align*}
			-3\eta \le -(\tb{x}^{k+2}_{2,2}-\tb{x}^{k+3}_{2,2})+(\tb{x}^{k+3}_{2,2}-\tb{x}^{k+1}_{2,2})\le -\frac{3}{2}p\eta^\frac{3}{2}+12\eta^2,
		\end{align*}
		which is equivalent to
		\begin{align*}
			-3\eta \le -(\tb{x}^{k+2}_{2,2}-\tb{x}^{k+3}_{2,2})+(\tb{x}^{k+3}_{2,2}-\tb{x}^{k+1}_{2,2})\le -\frac{3}{4}p\eta^\frac{3}{2}.
		\end{align*} 
		The right side inequality follows from $ p\ge 16\eta^\frac{1}{2} $. Then
		\begin{align*}
			\frac{\tb{x}^{k+4}_{1,1}}{\tb{x}^{k+4}_{1,2}}\cdot e^{2\eta\cdot \frac{3}{4}p\eta^\frac{3}{2}} \le \frac{\tb{x}^{k+2}_{1,1}}{\tb{x}^{k+2}_{1,2}}\le \frac{\tb{x}^{k+4}_{1,1}}{\tb{x}^{k+4}_{1,2}}\cdot e^{2\eta\cdot 3\eta}, \\
			\frac{\tb{x}^{k+4}_{1,1}}{\tb{x}^{k+4}_{1,2}}\cdot e^{ \frac{3}{2}p\eta^\frac{5}{2}} \le \frac{\tb{x}^{k+2}_{1,1}}{\tb{x}^{k+2}_{1,2}}\le \frac{\tb{x}^{k+4}_{1,1}}{\tb{x}^{k+4}_{1,2}}\cdot e^{6\eta^2}.
		\end{align*}
		It can be concluded that
		\begin{align*}
			\frac{3}{4}p\eta^3 \le \tb{x}^{k+4}_{1,2}-\tb{x}^{k+2}_{1,2}\le 6\eta^2\le 12\eta^2.
		\end{align*}

		\textit{proof of item \ref{induction 4}.}\\
		According to item \ref{info 2} and item \ref{info 5} in induction assumption, we have $ \tb{x}^{k+2}_{2,2}\ge \tb{x}^{k}_{2,2}\ge \tb{x}^{k+1}_{2,2}+\frac{3}{2}p\eta^\frac{3}{2} $.
		From item \ref{fraction 1} in lemma~\ref{lem: fraction},
		\begin{align*}
			\frac{\tb{x}^{k+3}_{1,1}}{\tb{x}^{k+3}_{1,2}}=\frac{\tb{x}^{k+1}_{1,1}}{\tb{x}^{k+1}_{1,2}}\cdot e^{-2\eta(2\tb{x}^{k+2}_{2,2}-\tb{x}^{k+1}_{2,2}-\tb{x}^{k}_{2,2})}
		\end{align*}
		Therefore, we obtain
        \begin{align*}
            \frac{\tb{x}^{k+3}_{1,1}}{\tb{x}^{k+3}_{1,2}}\cdot e^{3p\eta^\frac{5}{2}}
            \le \frac{\tb{x}^{k+1}_{1,1}}{\tb{x}^{k+1}_{1,2}}.
        \end{align*}
        Using Lemma~\ref{lem: ineq}, 
		\begin{align}\label{ineq: tx3x1}
			\tb{x}^{k+3}_{1,2}- \tb{x}^{k+1}_{1,2}\ge \frac{3}{2}p\eta^3.
		\end{align}

		\textit{proof of item \ref{induction 5}.}\\
		By item \ref{info 6} in induction assumption and (\ref{ineq: tx3x1}),
		\begin{align*}
			\tb{x}^{k+3}_{1,2}\ge \tb{x}^{k+1}_{1,2}\ge \tb{x}^{k+2}_{1,2}+\frac{3}{2}p\eta^\frac{3}{2} .
		\end{align*}
		From item \ref{fraction 4} in lemma \ref{lem: fraction},
		\begin{align*}
			\frac{\tb{x}^{k+4}_{2,1}}{\tb{x}^{k+4}_{2,2}}=\frac{\tb{x}^{k+2}_{2,1}}{\tb{x}^{k+2}_{2,2}}\cdot e^{-2\eta(2\tb{x}^{k+3}_{1,2}-\tb{x}^{k+2}_{1,2}-\tb{x}^{k+1}_{1,2})}.
		\end{align*}
        Therefore, we obtain
        \begin{align*}
            \frac{\tb{x}^{k+4}_{2,1}}{\tb{x}^{k+4}_{2,2}}\cdot e^{3p\eta^\frac{5}{2}}
            \le \frac{\tb{x}^{k+2}_{2,1}}{\tb{x}^{k+2}_{2,2}}.
        \end{align*}
		Then it holds that
		\begin{align*}
			\tb{x}^{k+4}_{2,2}-\tb{x}^{k+2}_{2,2}\ge\frac{3}{2}p\eta^3.
		\end{align*}

		\textit{proof of item \ref{induction 6}.}\\
		By (\ref{ineq: ty3y1}), (\ref{ineq: ty2y3}) and induction assumption, 
		\begin{align*}
			\tb{x}^{k+2}_{2,2}\ge \tb{x}^{k+3}_{2,2}\ge \tb{x}^{k+1}_{2,2}\ge \frac{1}{2}+p.
		\end{align*}
		According to item \ref{fraction 5} in lemma \ref{lem: fraction},
		\begin{align*}
			\frac{\tb{x}^{k+4}_{1,1}}{\tb{x}^{k+4}_{1,2}}=\frac{\tb{x}^{k+3}_{1,1}}{\tb{x}^{k+3}_{1,2}}\cdot e^{-3\eta+2\eta(2\tb{x}^{k+3}_{2,2}+\tb{x}^{k+2}_{2,2})},
		\end{align*}
		which leads to
		\begin{align*}
			\frac{\tb{x}^{k+3}_{1,1}}{\tb{x}^{k+3}_{1,2}}\cdot e^{6p\eta}\le \frac{\tb{x}^{k+4}_{1,1}}{\tb{x}^{k+4}_{1,2}}\le \frac{\tb{x}^{k+3}_{1,1}}{\tb{x}^{k+3}_{1,2}}\cdot e^{3\eta}.
		\end{align*}
		Combining with lemma~\ref{lem: ineq},
		\begin{align*}
			\frac{3}{2}p\eta^\frac{3}{2}\le 3p\eta^\frac{3}{2}\le \tb{x}^{k+3}_{1,2}-\tb{x}^{k+4}_{1,2}\le 3\eta.		
		\end{align*}
	\end{proof}
It is natural that the closer the points are to the boundary, the larger the KL-divergence. The following lemma quantifies the KL-divergence of the points to the equilibrium and the distance between the points and the boundary in $\ell_1$ norm.
\begin{lem}\label{lem: kl and boundary}
    For two point $\textbf{x}=(\textbf{x}_1,\textbf{x}_2)\in \Delta_2 $, $\textbf{x}'=(\textbf{x}_1',\textbf{x}_2')\in \Delta_2$, and $c\ge 0$ such that given $p\in (0,\frac{1}{2})$
    \begin{align*}
        \textbf{x}_2-c\ge\textbf{x}'_2\ge \frac{1}{2}+p,
    \end{align*}
    then we have
    \begin{align*}
        \KL((\frac{1}{2},\frac{1}{2}), (\textbf{x}_1,\textbf{x}_2))-\KL((\frac{1}{2},\frac{1}{2}), (\textbf{x}_1',\textbf{x}_2'))\ge pc.
    \end{align*}
\end{lem}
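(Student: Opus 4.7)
The plan is to reduce the two-variable KL-difference to a one-variable calculus problem, bound the derivative from below on the relevant range, and then invoke the mean value theorem. Concretely, since $\textbf{x},\textbf{x}' \in \Delta_2$, we can write $\textbf{x}_1 = 1-\textbf{x}_2$ and $\textbf{x}'_1 = 1-\textbf{x}'_2$, so that
\begin{align*}
    \mathrm{KL}\!\left((\tfrac{1}{2},\tfrac{1}{2}),(1-x,x)\right) \;=\; -\tfrac{1}{2}\ln\!\left((1-x)x\right) - \ln 2 \;=:\; h(x).
\end{align*}
The target inequality then becomes $h(\textbf{x}_2) - h(\textbf{x}'_2) \ge pc$.

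Next, I would compute the derivative directly: $h'(x) = \frac{2x-1}{2x(1-x)}$. For any $x \ge \tfrac{1}{2}+p$ one has the numerator bound $2x-1 \ge 2p$, while the denominator satisfies $2x(1-x) \le \tfrac{1}{2}$ because $x(1-x)$ is maximized at $x = \tfrac{1}{2}$ with value $\tfrac{1}{4}$. Combining these gives $h'(x) \ge 4p \ge p$ throughout the interval $[\tfrac{1}{2}+p,\,1)$. In particular, $h$ is strictly increasing on this interval.

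The conclusion then follows from the mean value theorem: since $\textbf{x}_2 \ge \textbf{x}'_2 \ge \tfrac{1}{2}+p$, there exists $\xi \in [\textbf{x}'_2,\textbf{x}_2]$ with $h(\textbf{x}_2) - h(\textbf{x}'_2) = h'(\xi)(\textbf{x}_2 - \textbf{x}'_2)$. Because $\xi \ge \tfrac{1}{2}+p$ we have $h'(\xi) \ge p$ by the derivative bound above, and the hypothesis $\textbf{x}_2 - c \ge \textbf{x}'_2$ gives $\textbf{x}_2 - \textbf{x}'_2 \ge c$. Multiplying these two nonnegative quantities yields $h(\textbf{x}_2) - h(\textbf{x}'_2) \ge pc$, which is the desired bound.

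No real obstacle is anticipated; the only care needed is to keep the derivative bound independent of the upper endpoint (which is why bounding $x(1-x)$ by its maximum $\tfrac{1}{4}$, rather than tracking $\textbf{x}_2$, is the cleanest route), and to observe that the lemma's assumption $\textbf{x}'_2 \ge \tfrac{1}{2}+p$ together with $\textbf{x}'_1 = 1-\textbf{x}'_2 > 0$ keeps both KL terms finite so that $h$ is well-defined at both endpoints.
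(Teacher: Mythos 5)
Your proof is correct, and in fact slightly stronger than the stated bound: your argument yields a lower bound of $4pc$ rather than $pc$. The paper's proof is purely algebraic: it writes the KL-difference as $\frac{1}{2}\ln\frac{(1-\textbf{x}'_2)\textbf{x}'_2}{(1-\textbf{x}_2)\textbf{x}_2}$, replaces $\textbf{x}'_2$ by $\textbf{x}_2-c$ using the monotone decrease of $t\mapsto t(1-t)$ on $[\frac{1}{2},1]$, expands the resulting ratio as $1+\frac{c(2\textbf{x}_2-1-c)}{(1-\textbf{x}_2)\textbf{x}_2}$, pushes the constraint $c\le\textbf{x}_2-\frac{1}{2}$ into the numerator, and finally invokes an elementary $\ln(1+t)$ estimate to reach $pc$. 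Your route instead linearizes: you reduce to the scalar function $h(x)=-\frac{1}{2}\ln(x(1-x))-\ln 2$, compute $h'(x)=\frac{2x-1}{2x(1-x)}$, observe that $h'(x)\ge 4p$ on $[\frac{1}{2}+p,1)$ because the numerator is at least $2p$ while the denominator is at most $\frac{1}{2}$, and then apply the Mean Value Theorem together with $\textbf{x}_2-\textbf{x}'_2\ge c$. The MVT version is shorter, dispenses with the product expansion and the $\ln(1+t)$ bound, and makes the source of the constant transparent; the paper's version stays at the level of elementary $\ln$ manipulations and requires more careful bookkeeping of the inequalities. As you note, the only edge case is $\textbf{x}_2=1$, where the left KL term is $+\infty$ and the inequality holds trivially, so the MVT argument is safely confined to the open interval $[\frac{1}{2}+p,1)$.
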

\begin{proof}
    By the definition of KL-divergence, it holds that
    \begin{align*}
        \KL\left((\frac{1}{2},\frac{1}{2}), (\textbf{x}_1,\textbf{x}_2)\right)=
        \frac{1}{2}\ln(\frac{1}{2\textbf{x}_1})+        \frac{1}{2}\ln(\frac{1}{2\textbf{x}_2}).
    \end{align*}
    Thus,
    \begin{align*}
        &\KL\left((\frac{1}{2},\frac{1}{2}), (\textbf{x}_1,\textbf{x}_2)\right)-\KL\left((\frac{1}{2},\frac{1}{2}), (\textbf{x}_1',\textbf{x}_2')\right)\\&=
        \frac{1}{2}\ln\left(\frac{\textbf{x}_1'\textbf{x}_2'}{\textbf{x}_1\textbf{x}_2}\right)= \frac{1}{2}\ln \left(\frac{(1-\textbf{x}_2')\textbf{x}_2'}{(1-\textbf{x}_2)\textbf{x}_2}\right)\\
        &\ge \frac{1}{2}\ln\left(\frac{(1-\textbf{x}_2+c)(\textbf{x}_2-c)}{(1-\textbf{x}_2)\textbf{x}_2}\right)\\
        &=\frac{1}{2}\ln\left(1+\frac{2\textbf{x}_2-1-c}{(1-\textbf{x}_2)\textbf{x}_2}\right)\\
        &\ge \frac{1}{2}\ln\left(1+\frac{2\textbf{x}_2-1-(\textbf{x}_2-\frac{1}{2})}{(1-\textbf{x}_2)\textbf{x}_2}\right)\\
        &\ge \frac{1}{2}\ln(1+4pc)\ge pc.
    \end{align*}
    The second equality arises from $\textbf{x}=(\textbf{x}_1,\textbf{x}_2)\in \Delta_2$ and $\textbf{x}'=(\textbf{x}_1',\textbf{x}_2')\in \Delta_2$. The third line follows from the decreasing monotonicity with respect to $\textbf{x}_2'$. The fifth line and the last line are a result of $\textbf{x}_2-c\ge \frac{1}{2}+p$.   
\end{proof}

When we apply Lemma~\ref{lem: kl and boundary} to Lemma~\ref{lem: close to boundary}, the following proposition is derived.
\begin{prop}\label{prop: KL increasing}
    For the periodic game defined in \ref{2-periodic game_m}, let $p\in (0,\frac{1}{4})$ and sufficient small $\eta$ such that $p\ge 16\eta^\frac{1}{2}$, when $t\ge 3$, if 
    \begin{align*}
\textbf{x}_{1,2}^0,\textbf{x}_{2,2}^0\ge \frac{1}{2}+2p,
    \end{align*}
    and for any $k<t$, $\textbf{x}_{1,2}^k,\textbf{x}_{2,2}^k\le 1-\sqrt{\eta}$, then it holds that
    \begin{align*}
        \KL\left((\textbf{x}^*_1,\textbf{x}_2^*), (\textbf{x}_{1}^{t+2},\textbf{x}_{2}^{t+2})\right)-
         \KL\left((\textbf{x}^*_1,\textbf{x}_2^*), (\textbf{x}_{i,1}^{t},\textbf{x}_{i,2}^{t})\right)
        \ge \frac{3}{4}p^2\eta^3.
    \end{align*}
\end{prop}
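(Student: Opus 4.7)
The plan is to combine Lemma \ref{lem: close to boundary}, which gives a uniform lower bound $\tb{x}_{j,2}^{t+2} - \tb{x}_{j,2}^t \ge \tfrac{3}{4}p\eta^3$ on the two-step growth of the second coordinate for each player $j \in \{1,2\}$, with Lemma \ref{lem: kl and boundary}, which converts such a coordinate-wise shift into a matching increase in KL-divergence relative to the uniform equilibrium. The equilibrium of the periodic game (\ref{2-periodic game_m}) is $\tb{x}_1^* = \tb{x}_2^* = (1/2,1/2)$, so the joint KL-divergence decomposes as
\[
\KL\big((\tb{x}_1^*,\tb{x}_2^*),(\tb{x}_1^t,\tb{x}_2^t)\big) = \KL\big((1/2,1/2),\tb{x}_1^t\big) + \KL\big((1/2,1/2),\tb{x}_2^t\big),
\]
so it suffices to produce the stated gain of $\tfrac{3}{4}p^2\eta^3$ on a single summand. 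In fact both summands contribute symmetrically, yielding a bound twice as strong as the one claimed, which gives the small slack we need.

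Concretely I would first extract $\tb{x}_{j,2}^{t+2} - \tb{x}_{j,2}^t \ge \tfrac{3}{4}p\eta^3$ for $j=1,2$ from Lemma \ref{lem: close to boundary}, splitting on the parity of $t \ge 3$: for even $t \ge 4$ this is items 3 and 5 of the lemma taken at parameter $t$, while for odd $t \ge 3$ I apply the lemma at the even parameter $t+1 \ge 4$ and use items 4 and its player-$2$ analogue, each of which asserts exactly $\tb{x}_{j,2}^{(t+1)+1} - \tb{x}_{j,2}^{(t+1)-1} \ge \tfrac{3}{4}p\eta^3$. Next I verify the standing hypothesis $\tb{x}_{j,2}^t \ge 1/2 + p$ needed by Lemma \ref{lem: kl and boundary}: Lemma \ref{lem: initialx1} delivers $\tb{x}_{j,2}^1 \ge 1/2 + p$ from the initial data, and from there the even- and odd-indexed subsequences inherit this bound by iterating the monotonicity items 3 and 4. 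Finally I invoke Lemma \ref{lem: kl and boundary} with $\tb{x}=\tb{x}_j^{t+2}$, $\tb{x}'=\tb{x}_j^t$, and $c = \tfrac{3}{4}p\eta^3$, obtaining
\[
\KL\big((1/2,1/2),\tb{x}_j^{t+2}\big) - \KL\big((1/2,1/2),\tb{x}_j^t\big) \ge pc = \tfrac{3}{4}p^2\eta^3,
\]
and then sum over $j \in \{1,2\}$.

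The main technical work has already been discharged in Lemma \ref{lem: close to boundary}, so no real obstacle remains: the proof of Proposition \ref{prop: KL increasing} is essentially a bookkeeping step. The only point of care is the small parity case split needed to line up the proposition's parameter $t$ with the even-parameter convention of Lemma \ref{lem: close to boundary}; as noted above, both parities are covered using items 3--5 of the lemma at parameter $t$ or $t+1$ respectively, and the interior assumption $\tb{x}_{j,2}^k \le 1-\sqrt{\eta}$ propagates from the hypothesis of the proposition to the hypothesis of Lemma \ref{lem: close to boundary} for the relevant range of $k$.
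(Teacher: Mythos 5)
Your proposal follows the paper's own route: invoke Lemma~\ref{lem: close to boundary} for a uniform two-step coordinate gain, decompose the joint KL-divergence into the two players' summands, and convert each coordinate gain into a KL gain via Lemma~\ref{lem: kl and boundary}. Your explicit parity case split (using items~3,5 at parameter $t$ for even $t$ and items~1,4 at parameter $t+1$ for odd $t$) is a clean articulation of a step the paper leaves implicit.

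There is one quantitative slip. You claim $\tb{x}_{j,2}^t \ge 1/2 + p$ as the hypothesis for Lemma~\ref{lem: kl and boundary}, but the lemmas only deliver $\tb{x}_{j,2}^t > 1/2 + \tfrac{1}{2}p$: Lemma~\ref{lem: initialx1} gives $\tb{x}_{j,2}^1 > 1/2 + p$, but the base-case computations inside the proof of Lemma~\ref{lem: close to boundary} then degrade this to $\tb{x}_{j,2}^2 \ge 1/2 + \tfrac{1}{2}p$ (inequality~\eqref{ineq: x122}, where a $-3\eta$ is absorbed), and the monotone even- and odd-indexed subsequences only anchor at $\tb{x}_{j,2}^4$ and $\tb{x}_{j,2}^3$ respectively, which are $> 1/2 + \tfrac{1}{2}p$ rather than $\ge 1/2 + p$. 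With the correct anchor the per-player KL gain is $\tfrac{1}{2}p \cdot \tfrac{3}{4}p\eta^3 = \tfrac{3}{8}p^2\eta^3$, not $\tfrac{3}{4}p^2\eta^3$. This means your remark that a single summand already suffices (and that summing gives ``twice as strong'' a bound) is wrong: summing over both players is \emph{necessary}, and it yields $\tfrac{3}{4}p^2\eta^3$ on the nose, with no slack. The conclusion still holds, and this is essentially how the paper proceeds (its own proof anchors at $1/2+\tfrac12 p$ and sums over the two players), so the error is in the accounting rather than the strategy. You should also be explicit that for odd $t$ you apply Lemma~\ref{lem: close to boundary} at parameter $t+1$, which formally needs the interior hypothesis $\tb{x}_{j,2}^k \le 1-\sqrt{\eta}$ up to $k=t$ while the proposition only supplies it for $k<t$; the paper is equally silent on this point, but it is worth flagging rather than asserting that the hypothesis ``propagates.''
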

\begin{proof}
    Under the conditions in the lemma, according to item \ref{result 1}, \ref{result 3}, \ref{result 4}, and \ref{result 5} in Lemma~\ref{lem: close to boundary}, we have for any $t\ge 3$, and $i=1,2$
    \begin{align*}
        \textbf{x}_{i,2}^{t+2}-\textbf{x}_{i,2}^{t}\ge \frac{3}{8}p\eta^3.
    \end{align*}
    Futhermore, when $t$ is even,
    \begin{align*}
        \textbf{x}_{i,2}^{t+2}>\textbf{x}_{i,2}^{t}>\cdots> \textbf{x}_{i,2}^{4}>\frac{1}{2}+\frac{1}{2}p,
    \end{align*}
    where the last inequality comes from the proof in Lemma~\ref{lem: close to boundary} and the condition $p\ge 16\eta^\frac{1}{2}$.
    And when $t$ is odd,
    \begin{align*}
        \textbf{x}_{i,2}^{t+2}>\textbf{x}_{i,2}^{t}>\cdots> \textbf{x}_{i,2}^{3}>\frac{1}{2}+\frac{1}{2}p.
    \end{align*}
    Then using Lemma~\ref{lem: kl and boundary}, we have that
    \begin{align*}
        \KL\left((\textbf{x}^*_1,\textbf{x}_2^*), (\textbf{x}_{1}^{t+2},\textbf{x}_{2}^{t+2})\right)-
         \KL\left((\textbf{x}^*_1,\textbf{x}_2^*), (\textbf{x}_{1}^{t},\textbf{x}_{2}^{t})\right)
        \ge \frac{3}{8}p^2\eta^3.
    \end{align*}
\end{proof}

Proposition~\ref{prop: KL increasing} auctually states that when $\textbf{x}_{1,2}^{0},\textbf{x}_{1,2}^{0}> \frac{1}{2}$, and the iterative points never enter the neighborhood of boundary, it holds that KL-divergence will always increase with an constant depending on $\eta$ and the distance of $(\textbf{x}_{1}^{0},\textbf{x}_{2}^{0})$ to the equilibrium. Note that we have no constraint on the value of $(\textbf{x}_{1}^{-1},\textbf{x}_{2}^{-1})$.

In fact, the condition $\textbf{x}_{1,2}^{0},\textbf{x}_{1,2}^{0}> \frac{1}{2}$ is not necessary. 
All possible cases of $(\textbf{x}_1^0,\textbf{x}_2^0)$ are following
    \begin{enumerate}\label{cases: intial condition}
    \item $\textbf{x}_{1,2}^{0}>\frac{1}{2},\textbf{x}_{2,2}^{0}>\frac{1}{2}$,
        \item $\textbf{x}_{1,2}^{0}<\frac{1}{2},\textbf{x}_{2,2}^{0}>\frac{1}{2}$,
        \item $\textbf{x}_{1,2}^{0}<\frac{1}{2},\textbf{x}_{2,2}^{0}<\frac{1}{2}$,
        \item $\textbf{x}_{1,2}^{0}>\frac{1}{2},\textbf{x}_{2,2}^{0}<\frac{1}{2}$,
    \end{enumerate}
    Proposition~\ref{prop: KL increasing} states the result that under the initial condition in the first case. The key lemma in the proof of Proposition~\ref{prop: KL increasing} is Lemma~\ref{lem: close to boundary}. Then combining the relation between KL-divergence and the distance to the boundary in $\ell_1$ norm, the result in the proposition is concluded.
    
  In Figure \ref{Figure: OMWUwithInitial22}\footnote{The step size of 200 is employed here to enhance the clarity in observing the increment of strategies with a probability greater than $\frac{1}{2}$ after every two iterations.}, we present the trajectories of mixed strategies of both players under the initial condition $\textbf{x}_{1}^0=\textbf{x}_{2}^0=(0.45,0.55)$. It illustrates the findings from Lemma~\ref{lem: close to boundary}.  Similar result are observed in the other cases as in Lemma~\ref{lem: close to boundary}, as shown in Figure \ref{Figure: OMWUwithInitial11}, \ref{Figure: OMWUwithInitial12}, and \ref{Figure: OMWUwithInitial21}.
\begin{figure}[h]
    \centering
    \subfigure[]{
        \includegraphics[width=1.5in]{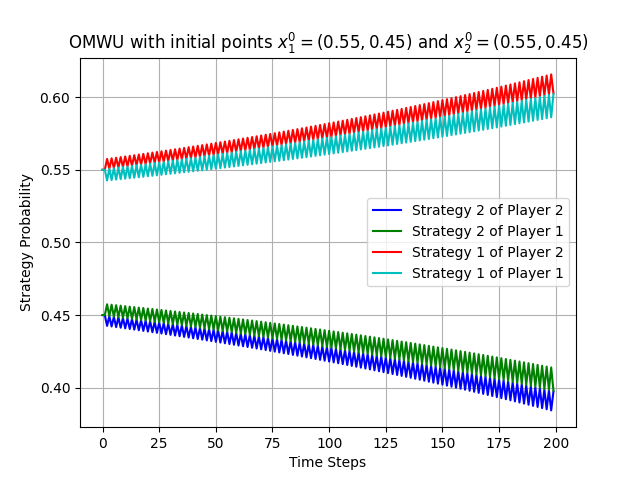}
        \label{Figure: OMWUwithInitial11}
    }
    \subfigure[]{
	\includegraphics[width=1.5in]{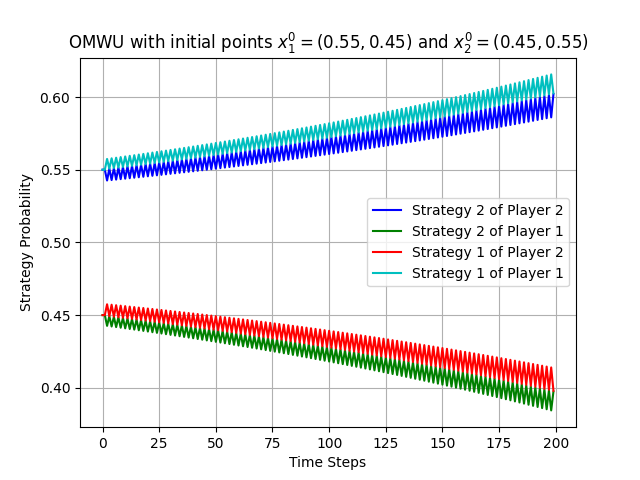}
        \label{Figure: OMWUwithInitial12}
    }
    \subfigure[]{
	\includegraphics[width=1.5in]{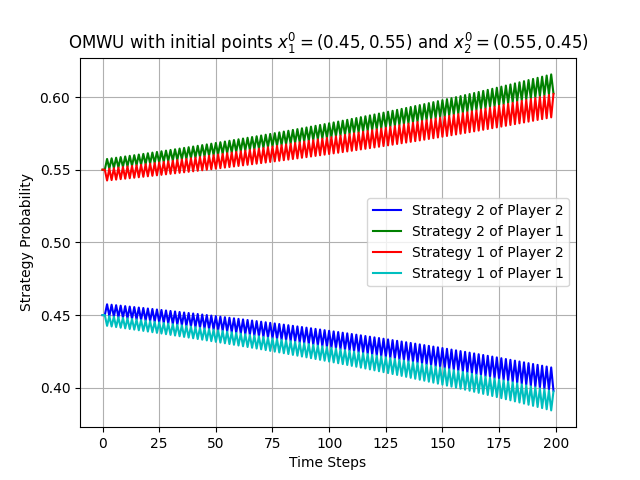}
        \label{Figure: OMWUwithInitial21}
    }
    \subfigure[]{
	\includegraphics[width=1.5in]{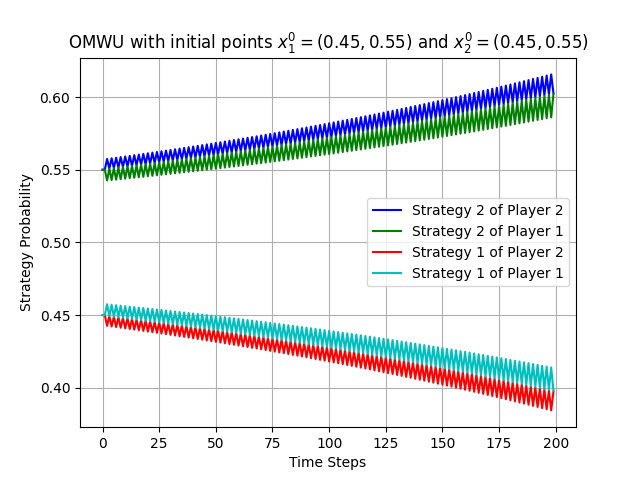}
        \label{Figure: OMWUwithInitial22}
    }
    \caption{Trajectories of strategies for both players when using OMWU in the periodic game defined in (\ref{2-periodic game_m}).}
    \label{em2}
\end{figure}
    
    Then we can extend Proposition~\ref{prop: KL increasing} to encompass general initial conditions:
    \begin{prop}\label{prop: before4.2}
    For the periodic game defined in \ref{2-periodic game_m}, let $p=\frac{1}{2}\min (\lvert \textbf{x}_{1,1}^0-\textbf{x}_{1,1}^*\rvert ,\lvert \textbf{x}_{2,1}^0-\textbf{x}_{2,1}^* \rvert)$, and sufficient small $\eta$ such that $p\ge 16\eta^\frac{1}{2}$, when $t\ge 3$, if for any $k<t$, $\textbf{x}_{1,2}^k,\textbf{x}_{2,2}^k\le 1-\sqrt{\eta}$, then it holds that
    \begin{align*}
        \KL\left((\textbf{x}^*_1,\textbf{x}_2^*), (\textbf{x}_{1}^{t+2},\textbf{x}_{2}^{t+2})\right)-
         \KL\left((\textbf{x}^*_1,\textbf{x}_2^*), (\textbf{x}_{1}^{t},\textbf{x}_{2}^{t})\right)
        \ge \frac{3}{4}p^2\eta^3.
    \end{align*}
\end{prop}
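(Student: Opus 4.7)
The proposition extends Proposition~\ref{prop: KL increasing}, which establishes the KL increment under the restrictive initial condition $\tb{x}_{1,2}^0, \tb{x}_{2,2}^0 \ge 1/2 + 2p$, to an arbitrary starting point in $\Delta_2\times\Delta_2$. Since the common equilibrium of (\ref{2-periodic game_m}) is $\tb{x}_1^* = \tb{x}_2^* = (1/2, 1/2)$, the two quantities $|\tb{x}_{i,1}^0 - 1/2|$ and $|\tb{x}_{i,2}^0 - 1/2|$ coincide, so the defined $p$ equals $\tfrac{1}{2}\min_i|\tb{x}_{i,2}^0 - 1/2|$. My plan is a case analysis on the four sign patterns of $(\tb{x}_{1,2}^0 - 1/2,\ \tb{x}_{2,2}^0 - 1/2)$. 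The quadrant where both are positive is exactly the content of Proposition~\ref{prop: KL increasing}.

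A discrete symmetry of the game handles the opposite quadrant. Both matrices $A_{\text{odd}}$ and $A_{\text{even}}$ in (\ref{2-periodic game_m}) are anti-diagonal, so they satisfy $PA_tP = A_t$, where $P$ denotes the $2\times 2$ swap matrix. A direct inspection of the OMWU update rule then shows that whenever $\{(\tb{x}_1^t, \tb{x}_2^t)\}$ is an OMWU trajectory on (\ref{2-periodic game_m}), so is $\{(P\tb{x}_1^t, P\tb{x}_2^t)\}$; and because the equilibrium is $P$-invariant, $\KL((\tb{x}_1^*,\tb{x}_2^*), \cdot)$ is preserved under this transformation. This reduces the quadrant $\tb{x}_{1,2}^0, \tb{x}_{2,2}^0 < 1/2$ to the already handled case with the same value of $p$.

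For the two asymmetric quadrants, the plan is to rerun the induction of Lemma~\ref{lem: close to boundary} while tracking the \emph{dominant} coordinate of each player. For example, in the quadrant $\tb{x}_{1,2}^0 < 1/2 < \tb{x}_{2,2}^0$, I would track $\tb{x}_{1,1}^t$ and $\tb{x}_{2,2}^t$ in place of $\tb{x}_{1,2}^t$ and $\tb{x}_{2,2}^t$. The ratio identities of Lemma~\ref{lem: fraction} retain the same algebraic form after this relabeling, because the anti-diagonal structure of $A_t$ only exchanges which coordinate of the opposing player appears in the exponent. The six-step inductive chain of ratio inequalities then propagates with appropriately permuted exponents, yielding $\tb{x}_{1,1}^{t+2} - \tb{x}_{1,1}^{t} \ge \tfrac{3}{4} p \eta^3$ and $\tb{x}_{2,2}^{t+2} - \tb{x}_{2,2}^{t} \ge \tfrac{3}{4} p \eta^3$. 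The remaining asymmetric quadrant is handled by the analogous argument with $(\tb{x}_{1,2}^t, \tb{x}_{2,1}^t)$ as the tracked coordinates. Lemma~\ref{lem: kl and boundary}, applied independently in each component, then upgrades these coordinate drifts into the claimed $\tfrac{3}{4}p^2\eta^3$ lower bound on the KL increment.

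The main obstacle is the bookkeeping in the asymmetric cases, where the two dominant coordinates sit in opposite corners of the $2\times 2$ payoff structure, and a misplaced sign in any of the six inequalities of the induction step of Lemma~\ref{lem: close to boundary} could reverse the direction of the drift. The hypothesis $p \ge 16\eta^{1/2}$ is precisely what is needed so that the term $2\eta(2\tb{x}_{i,j}^{t+1} + \tb{x}_{i,j}^{t})$ in each ratio identity dominates the baseline $\pm 3\eta$ correction, ensuring the monotonic telescoping toward the boundary survives the sign permutations introduced by tracking different coordinates across the four quadrants.
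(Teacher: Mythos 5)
Your overall structure --- a case analysis over the four sign patterns of $(\tb{x}_{1,2}^0-\tfrac{1}{2},\ \tb{x}_{2,2}^0-\tfrac{1}{2})$, with the all-positive quadrant settled by Proposition~\ref{prop: KL increasing} --- is precisely the decomposition the paper uses: it enumerates the same four cases and treats case 1 as the one Proposition~\ref{prop: KL increasing} covers. Your observation that the coordinate-swap matrix $P$ commutes with both $A_t$ of~(\ref{2-periodic game_m}), that OMWU trajectories are sent to OMWU trajectories under $(\tb{x}_1,\tb{x}_2)\mapsto(P\tb{x}_1,P\tb{x}_2)$, and that the KL-divergence from the $P$-invariant equilibrium is preserved, is correct and cleanly disposes of the opposite quadrant; the paper does not make this symmetry explicit and instead leans on the numerical Figures~\ref{Figure: OMWUwithInitial11}--\ref{Figure: OMWUwithInitial22} plus the phrase ``similar result are observed in the other cases,'' so here you are actually tighter than the source.

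The two mixed quadrants are where both your proposal and the paper fall short, and you should be more cautious about the assertion that ``the ratio identities of Lemma~\ref{lem: fraction} retain the same algebraic form after this relabeling.'' Swapping only one player's coordinates is not a symmetry of the game: applying $P$ to player 1 alone turns the effective payoff pair from $(A_t,\,A_t)$ into $(PA_t,\,PA_t)$, and $PA_{\mathrm{odd}}=I$, $PA_{\mathrm{even}}=-I$, which is a different game entirely. Concretely, if in case 2 you track $\tb{x}_{1,1}$ and $\tb{x}_{2,2}$ and rewrite item~\ref{fraction 2} of Lemma~\ref{lem: fraction} using $\tb{x}_{1,2}=1-\tb{x}_{1,1}$, the second-difference term $2\tb{x}_{1,2}^t-\tb{x}_{1,2}^{t-1}-\tb{x}_{1,2}^{t-2}$ becomes $-\bigl(2\tb{x}_{1,1}^t-\tb{x}_{1,1}^{t-1}-\tb{x}_{1,1}^{t-2}\bigr)$, flipping the sign of the exponent relative to the item~\ref{fraction 1} you would be pairing it with --- so the six-step inductive chain does not literally ``propagate with appropriately permuted exponents,'' and the direction of the net drift over a period must be re-derived from scratch. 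This is exactly the bookkeeping obstacle you flag at the end, but it is not merely bookkeeping: it is the mathematical content of the claim in those quadrants, and as written neither your sketch nor the paper's figure-based appeal actually establishes it. If you want to fully close the proof, you need to redo the base case and induction of Lemma~\ref{lem: close to boundary} for the mixed sign pattern and verify that the resulting drift signs still produce a monotone increase of the KL-divergence; the condition $p\ge 16\eta^{1/2}$ alone will not save a reversed sign.
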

Here, both $\textbf{x}_{1,1}^* $ and $ \textbf{x}_{2,1}^*$ are equal to $\frac{1}{2}$ in game (\ref{2-periodic game_m}). Hence, parameter $p$ defined in Proposition \ref{prop: before4.2} satisfies the initial condition in Proposition \ref{prop: KL increasing}. 
The conclusions of Proposition~\ref{prop: OMWU fails} can be directly derived from Proposition~\ref{prop: before4.2}.

\subsection{Proof of Stage 2}

	
Lemma \ref{lem: close to boundary} states that the iterative vector   $(\tb{x}^{n}_{1,1},\tb{x}^{n+1}_{1,1},\tb{x}^{n}_{2,1},\tb{x}^{n+1}_{2,1})^\top$ will be close to one of $(0,0,\cdot,\cdot)$, $(1,1,\cdot,\cdot)$, $(\cdot,\cdot,0,0)$, $(\cdot,\cdot,1,1)$ after a sufficient number of steps.
	Without loss of generality, let's assume that the iterative vector is close to $(0,0,\cdot,\cdot)$. The other cases are symmetric to it.
	
The following lemma demonstrates that the composition of $\CG_1\circ \CG_2$ has continuous fixed points on the boundary.

\begin{lem}\label{lem: fix point}
     For every $a \in [0,1]$, $(0,0,a,\frac{ae^{-3\eta}}{ae^{-3\eta} + (1-a)})$ is a fixed point of $\CG_1 \circ \CG_2$. 
\end{lem}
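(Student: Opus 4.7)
The plan is to verify the claim by direct substitution: because the first two coordinates of the candidate fixed point are zero, most of the exponentials in $\CG_1$ and $\CG_2$ collapse, and what remains is a single scalar condition linking $a$ and $b := \tfrac{ae^{-3\eta}}{ae^{-3\eta}+(1-a)}$.

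First I would apply $\CG_2$ to $(0,0,a,b)$. By the definition of $\CG_2$, the first output coordinate equals $\tb{z}_2 = 0$, and the second output coordinate has $\tb{z}_1 = \tb{z}_2 = 0$ appearing as the numerator mass of player~1 and in the denominator, so it is $0$ as well. The third output coordinate equals $\tb{z}_4 = b$. For the fourth coordinate, substituting $\tb{z}_1 = \tb{z}_2 = 0$ into the $\CG_2$ formula kills the $2\eta\tb{z}_2$ and $\eta\tb{z}_1$ terms in both exponentials, so the fourth coordinate simplifies to $b' := \tfrac{b e^{3\eta}}{b e^{3\eta}+(1-b)}$. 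Hence $\CG_2(0,0,a,b) = (0,0,b,b')$.

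Next I would apply $\CG_1$ to $(0,0,b,b')$. By the same argument, the first and second output coordinates are $0$, the third is $b'$, and the fourth simplifies (using $\tb{z}_1 = \tb{z}_2 = 0$) to $\tfrac{b' e^{-3\eta}}{b' e^{-3\eta}+(1-b')}$. So $(\CG_1 \circ \CG_2)(0,0,a,b) = \bigl(0,0,b',\tfrac{b' e^{-3\eta}}{b' e^{-3\eta}+(1-b')}\bigr)$. It then remains to verify that $b' = a$ and, consequently, that the fourth coordinate equals $b$. Solving $\tfrac{b e^{3\eta}}{be^{3\eta}+(1-b)} = a$ for $b$ gives $\tfrac{b}{1-b} = \tfrac{a}{(1-a) e^{3\eta}}$, i.e.\ $b = \tfrac{ae^{-3\eta}}{ae^{-3\eta}+(1-a)}$, which matches the $b$ in the statement. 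Once $b' = a$ is established, the fourth coordinate automatically reduces to $\tfrac{ae^{-3\eta}}{ae^{-3\eta}+(1-a)} = b$ by the same algebra, closing the verification.

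There is no real obstacle: the only mildly delicate step is the algebraic inversion showing that $b' = a$ is equivalent to the displayed formula for $b$, and this is a one-line manipulation. The conceptual content is that on the invariant face $\{\tb{z}_1 = \tb{z}_2 = 0\}$ (corresponding to player~$1$ locked at a pure strategy), the OMWU dynamics for player~$2$ decouples into a one-dimensional two-step map whose fixed-point condition is the displayed equation, and this map has a fixed point for every value of $a \in [0,1]$, yielding the one-parameter family of fixed points.
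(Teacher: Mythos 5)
Your proof is correct and takes essentially the same approach as the paper: both arguments verify the fixed-point property by direct substitution, observing that the first two coordinates stay zero, that the composition reduces to a one-dimensional two-step map on the fourth coordinate, and that the displayed value of $b$ is exactly the one making that map return to its starting point. The only cosmetic difference is that you introduce the intermediate name $b'$ and then check $b'=a$, whereas the paper simplifies the fourth component of $\CG_2$ to $a$ in-line; the content is identical.
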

\begin{proof}
    Recall the definition of $\CG_1 $ and $ \CG_2$, for any $a\in(0,1)$, we have
    \begin{align*}
		&\CG_2\left((0,0,a,\frac{ae^{-3\eta}}{ae^{-3\eta} + (1-a)})\right)\\
		=&\left(0,0, 
		\frac{ae^{-3\eta}}{ae^{-3\eta} + (1-a)},
		\frac{\frac{ae^{-3\eta}}{ae^{-3\eta} + (1-a)} e^{3\eta}}{\frac{ae^{-3\eta}}{ae^{-3\eta} + (1-a)} e^{3 \eta }+(1-\frac{ae^{-3\eta}}{ae^{-3\eta} + (1-a)})}\right)\\
        =&\left(0,0,\frac{ae^{-3\eta}}{ae^{-3\eta} + (1-a)},a\right),
	\end{align*} 
	and 
	\begin{align*}
		\CG_1\left((0,0,\frac{ae^{-3\eta}}{ae^{-3\eta} + (1-a)},a)\right)
		=\left(0,0, 
		a,\frac{ae^{-3\eta}}{ae^{-3\eta} + (1-a)}\right).
	\end{align*} 
    Thus, it holds that
    \begin{align*}
        \CG_1 \circ \CG_2\left((0,0,a,\frac{ae^{-3\eta}}{ae^{-3\eta} + (1-a)})\right)=\left(0,0,a,\frac{ae^{-3\eta}}{ae^{-3\eta} + (1-a)}\right).
    \end{align*}
\end{proof}
The following lemma provides the description for the eigenvalues and eigenvectors of Jacobi matrix at the fixed points.
\begin{lem}\label{lem: eigenvalue}
    For $a\in(0,1)$, all the eigenvalues of the Jacobian matrix of $\CG_1 \circ \CG_2$ at points 
    $(0,0,a,\frac{ae^{-3\eta}}{ae^{-3\eta} + (1-a)})$ are no larger than $1$. Moreover, the central eigenspace corresponds to eigenvalue $1$ is generated by
    vectors $(0,0,*_1,*_2)$.
\end{lem}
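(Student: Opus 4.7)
The plan is to compute the Jacobian $J_F$ of $F := \CG_1 \circ \CG_2$ at the fixed point $z^* := (0, 0, a, b)$, where $b := ae^{-3\eta}/(ae^{-3\eta}+1-a)$, via the chain rule $J_F = J_{\CG_1}(\CG_2(z^*)) \cdot J_{\CG_2}(z^*)$. By Lemma~\ref{lem: fix point}, $\CG_2(z^*) = (0, 0, b, a)$, so both Jacobian factors are evaluated at points of the form $(0, 0, \cdot, \cdot)$, which causes massive simplification.

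The key structural fact is that the second components of $\CG_1$ and $\CG_2$ are softmax-type ratios of the shape $\tfrac{z_2 \alpha}{z_2\alpha + (1-z_2)\beta}$, which vanish at $z_2 = 0$ together with all of their partial derivatives except $\partial/\partial z_2 = \alpha/\beta$; the first and third components are the coordinate shifts $z_1\mapsto z_2$ and $z_3\mapsto z_4$; and the fourth components are standard softmaxes in $z_4$, with diagonal derivative $\alpha'\beta'/D^2$ and off-diagonal derivatives with respect to $z_1, z_2$ proportional to the small factor $2\eta z_4(1-z_4)$. Writing out each Jacobian explicitly and multiplying, $J_F$ has: (i) third column identically zero (contributing one eigenvalue $0$); (ii) upper-left $2\times 2$ block whose only nonzero entries are the $(1,2)$-entry $c_1 := e^{-3\eta+4\eta b+2\eta a}$ and the $(2,2)$-entry $c_1c_2$ with $c_2 := e^{3\eta-4\eta a-2\eta b}$; and (iii) $(4,4)$-entry exactly equal to $\tfrac{e^{-3\eta}}{R^2}\cdot e^{3\eta}R^2 = 1$, where $R := ae^{-3\eta}+1-a$. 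This last value is forced by the existence of the one-parameter curve of fixed points from Lemma~\ref{lem: fix point}.

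Expanding the characteristic polynomial yields the clean factorization
\begin{equation*}
\det(J_F - \lambda I) = \lambda^2 (1 - \lambda)(c_1 c_2 - \lambda),
\end{equation*}
so the four eigenvalues are $\{0, 0, 1, c_1 c_2\}$. Using the explicit identity
\begin{equation*}
a - b = \frac{a(1-a)(1 - e^{-3\eta})}{ae^{-3\eta}+1-a} > 0 \qquad (a\in(0,1),\ \eta>0),
\end{equation*}
I obtain $c_1 c_2 = e^{2\eta(b-a)} < 1$, establishing the first claim.

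For the second claim I solve $(J_F - I)v = 0$ directly: the first two rows force $v_1 = c_1 v_2$ and $(c_1 c_2 - 1) v_2 = 0$, and since $c_1 c_2 < 1$ \emph{strictly} this forces $v_1 = v_2 = 0$. The remaining rows leave a one-dimensional line $\mathrm{span}\{(0,0,t,1)\}$ with $t := e^{3\eta}R^2$, which lies in $\{(0,0,*_1,*_2)\}$ as claimed (and agrees, up to scaling, with the tangent $(0,0,1,b'(a))$ to the fixed-point curve at parameter $a$). The main technical obstacle is the chain-rule bookkeeping: verifying that the third column of $J_F$ is exactly zero, and that the identity $\tfrac{e^{-3\eta}}{R^2}\cdot e^{3\eta}R^2 = 1$ produces the eigenvalue $1$ exactly, which is what permits the algebraic factorization above rather than only an $O(\eta)$ approximation near the continuum of fixed points.
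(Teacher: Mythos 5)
Your proof is correct and produces exactly the paper's eigenvalues $\{0,0,1,c_1c_2\}$ and eigenvector; one can check $c_1c_2 = e^{2\eta(b-a)} = e^{-2\eta a(1-a)(e^{3\eta}-1)/(a+(1-a)e^{3\eta})}$ and $e^{3\eta}R^2 = e^{-3\eta}(a+(1-a)e^{3\eta})^2$, matching the paper's expressions. The difference is methodological rather than substantive: the paper delegates the symbolic linear algebra to Matlab (as stated in a footnote), whereas you carry out the chain-rule computation by hand and extract the structural facts that make it tractable — the coordinate $z_2 = 0$ collapses the first two rows of each Jacobian factor, the third column of $J_{\CG_1\circ\CG_2}$ vanishes identically, and the identity $(e^{-3\eta}/R^2)\cdot(e^{3\eta}R^2)=1$ forces the exact eigenvalue $1$ demanded by the one-parameter curve of fixed points from Lemma~\ref{lem: fix point}. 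This buys a verifiable, software-free derivation of the characteristic polynomial $\lambda^2(1-\lambda)(c_1c_2-\lambda)$ and a conceptual explanation (tangency to the fixed-point curve) for why the center eigenspace sits in $\{(0,0,*,*)\}$, which the paper's black-box computation does not offer.
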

\begin{proof}
    It can be computed that\footnote{We employed Matlab for computation.}
    \begin{align*}
         0,0,1,e^{-\frac{2\eta a(1-a)(e^{3\eta}-1) }{a+(1-a)e^{3\eta}}}
    \end{align*}
    are the eigenvalues of Jacobian matrix of $\CG_1 \circ \CG_2$ at points 
    $(0,0,a,\frac{ae^{-3\eta}}{ae^{-3\eta} + (1-a)})$.

    Here, since $a\in(0,1)$, it holds that $e^{-\frac{2\eta a(1-a)(e^{3\eta}-1) }{a+(1-a)e^{3\eta}}}< 1$. And the eigenvector corresponding to $ 1 $ is $\tb{w}_1= (0,0,e^{-3\eta}(a+(1-a)e^{3\eta})^2,1)^\top $, with its first two elements being zero.
\end{proof}

	\begin{lem}\label{lem: first two converge to zero}
		Consider the composition dynamical system $\CG_1 \circ \CG_2$ which maps $(\tb{x}^{n-1}_{1,1},\tb{x}^{n}_{1,1},\tb{x}^{n-1}_{2,1},\tb{x}^{n}_{2,1})^\top$ to $(\tb{x}^{n+1}_{1,1},\tb{x}^{n+2}_{1,1},\tb{x}^{n+1}_{2,1},\tb{x}^{n+2}_{2,1})^\top$. 
		Then, for any points $ \tb{v} $ close to $(0,0,a,\frac{ae^{-3\eta}}{ae^{-3\eta} + (1-a)})^\top$ for arbitrary $a\in(0,1)$, the first two elements of $ \tb{v} $ will finally converge to zero.
	\end{lem}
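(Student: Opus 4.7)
The plan is to identify the curve $\Gamma = \{(0,0,a,\tfrac{ae^{-3\eta}}{ae^{-3\eta}+(1-a)}) : a \in [0,1]\}$ of fixed points from Lemma~\ref{lem: fix point} as a normally hyperbolic attracting invariant manifold of $\CG_1 \circ \CG_2$, and then deduce convergence of the first two coordinates to zero from the stable manifold theorem for such curves.

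First I would check that the plane $P = \{(0,0,z_3,z_4)\}$ is forward-invariant under each of $\CG_1$ and $\CG_2$: from their defining formulas, the first output coordinate is exactly the input $z_2$, and the second output coordinate is a fraction whose numerator is proportional to $z_1$, so $z_1 = z_2 = 0$ forces the first two output coordinates to vanish. In particular $\Gamma \subset P$ is an invariant curve of fixed points of $\CG_1 \circ \CG_2$, and every orbit confined to $P$ has first two coordinates identically zero.

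Next, by Lemma~\ref{lem: eigenvalue}, the Jacobian $D(\CG_1 \circ \CG_2)(p_a)$ at each $p_a = (0,0,a,\tfrac{ae^{-3\eta}}{ae^{-3\eta}+(1-a)}) \in \Gamma$ with $a \in (0,1)$ has spectrum $\{0,0,1,\lambda_a\}$ with $|\lambda_a| < 1$, and the eigenvalue-$1$ eigenspace is spanned by a vector of the form $(0,0,*_1,*_2)$ that is tangent to $\Gamma$. Thus the center bundle is one-dimensional and tangent to $\Gamma$, while the normal bundle supports only strictly contracting eigenvalues. Invoking the Hirsch--Pugh--Shub stable manifold theorem for normally hyperbolic invariant manifolds (equivalently, the local center--stable manifold theorem applied at each $p_a$), for any $a\in(0,1)$ there exists a neighborhood $U \subset \BR^4$ of $p_a$ such that every orbit of $\CG_1 \circ \CG_2$ starting in $U$ remains in $U$ for all forward time and converges exponentially to $\Gamma$. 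Since $\Gamma \subset P$, this convergence forces the first two coordinates of the orbit to tend to zero, which is the claim.

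I expect the main obstacle to be the routine but delicate verification of the NHIM hypotheses: smoothness of $\CG_1 \circ \CG_2$ near $\Gamma$ (immediate from the explicit exponential formulas, since $a \in (0,1)$ keeps us away from vanishing denominators), and confirming that the eigenvalue-$1$ eigenspace really matches the tangent direction of $\Gamma$. For the latter one can differentiate the parametrization $a \mapsto p_a$ and check that the resulting tangent vector has the form $(0,0,*_1,*_2)$ compatible with Lemma~\ref{lem: eigenvalue}. The two zero eigenvalues reflect the structural fact that each $\CG_i$ shifts $(z_1,z_2,z_3,z_4) \mapsto (z_2,\cdots,z_4,\cdots)$, so the Jacobian is degenerate in those coordinate directions; this ensures the normal contraction is not only strict but in fact includes nilpotent components, which only strengthens the attraction of $\Gamma$.
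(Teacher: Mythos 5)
Your proposal rests on the same two lemmas as the paper --- Lemma~\ref{lem: fix point} giving the one-parameter curve of boundary fixed points, and Lemma~\ref{lem: eigenvalue} giving the spectrum $\{0,0,1,\lambda_a\}$ with $|\lambda_a|<1$ and a center eigenvector of the form $(0,0,*,*)$ --- and draws the same conclusion, but it is more careful about the step that actually produces convergence. The paper decomposes $\tb{v}$ into its center and stable components and then appeals to Proposition~\ref{decomposition}; that proposition, however, is only the dimension count $\dim E^s + \dim E^u + \dim E^c = n$, which by itself does not imply that a nonlinear orbit near a non-hyperbolic fixed point collapses onto the translated center eigenspace. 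You supply the missing ingredient by recognizing the curve of fixed points as a normally hyperbolic attracting invariant manifold: you record the forward-invariance of the plane $\{z_1 = z_2 = 0\}$ under both $\CG_1$ and $\CG_2$, and check that the eigenvalue-$1$ eigenvector is tangent to the curve (indeed $\frac{d}{da}p_a$ is parallel to the paper's $\tb{w}_1$, as a short computation confirms), and then invoke the center-stable / NHIM theorem to obtain local attraction to that curve. So this is essentially the paper's route, with the convergence-to-center-manifold step justified rather than merely asserted; what you buy is a closed argument, while the paper buys brevity at the cost of leaning on a proposition that is not, on its own, strong enough for the claim.
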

	\begin{proof}
		By Lemma~\ref{lem: fix point} for any $ a\in (0,1) $, $(0,0,a,\frac{ae^{-3\eta}}{ae^{-3\eta} + (1-a)})$ is a fixed point of the dynamical system $\CG_1 \circ \CG_2$. Denote the Jacobian matrix of $\CG_1 \circ \CG_2$ at $(0,0,a,\frac{ae^{-3\eta}}{ae^{-3\eta} + (1-a)})$ as $ J $. 
		
		According to Lemma~\ref{lem: eigenvalue}, the neighborhood $\CW$ of the boundary of simplex can be decomposed as follows,
		\begin{align*}
			\CW=\CW_1+\CW_0,
		\end{align*}
		where $ \CW_1  $ is the eigenspace corresponding to eigenvalue $1$ and is spanned by $ \tb{w}_1=(0,0,e^{-3\eta}(a+(1-a)e^{3\eta})^2,1)^\top $, as mentioned in Lemma~\ref{lem: eigenvalue}.
		Additionally, $ \CW_0 $ is the eigenspace corresponding to eigenvalues with modules smaller than $ 1 $.
		Naturally, for any point $ \tb{v} $ close to $(0,0,a,\frac{ae^{-3\eta}}{ae^{-3\eta} + (1-a)})^\top$, $ \tb{v} $ can be decomposed as $ (\tb{v}_1+\tb{v}_0) $, where $ \tb{v}_1\in \CW_1 $ and $ \tb{v}_0\in \CW_0 $. By Proposition~\ref{decomposition}, with $ n $ enough large, $ (\CG_1 \circ \CG_2)^n(\tb{v}) $ will converge to the space $ (0,0,a,\frac{ae^{-3\eta}}{ae^{-3\eta} + (1-a)})^\top+\CW_1 $. Consequently, it can be concluded that the first two elements of the iterative vector converge to zero.
	\end{proof}
 
	With the above preparation, we are ready to prove Proposition~\ref{prop: converge to boundary}.
	\begin{proof}[Proof of Proposition~\ref{prop: converge to boundary}]
        We only discuss about the case when $(\tb{x}^{n-1}_{1,1},\tb{x}^{n}_{1,1},\tb{x}^{n-1}_{2,1},\tb{x}^{n}_{2,1})^\top$ is close to $(0,0,\cdot,\cdot)$, while the other case are similar.

        It can be computed by the update rule of dynamic of (OMWU), when $(\tb{x}^{n-1}_{1,1},\tb{x}^{n}_{1,1},\tb{x}^{n-1}_{2,1},\tb{x}^{n}_{2,1})^\top$ is close to $(0,0,\cdot,\cdot)$, it is auctually close to $(0,0,a,\frac{ae^{-3\eta}}{ae^{-3\eta} + (1-a)})$ for some $a\in(0,1)$.
        
	    According to Lemma~\ref{lem: first two converge to zero}, when $(\tb{x}^{n-1}_{1,1},\tb{x}^{n}_{1,1},\tb{x}^{n-1}_{2,1},\tb{x}^{n}_{2,1})^\top$ lies in the neighborhood of $(0,0,a,\frac{ae^{-3\eta}}{ae^{-3\eta} + (1-a)})$, it holds that $\tb{x}^{n-1}_{1,1}$ and $\tb{x}^{n}_{1,1}$ will converge to $0$. This results in the fact that KL-divergence tends to infinity.
	\end{proof}
\section{Proof of Theorem \ref{T2}}\label{A2}
The proof is a combination of three parts:

\begin{itemize}
    \item[1] The KL-divergence is a decreasing function throughout the composition of Extra-MWU in a period, starting from an arbitrary initial point. (Proposition \ref{decreasing_KL})
    \item[2] Discrete-time LaSalle invariance principle, which provides a sufficient condition for a discrete dynamical system to converge. (Proposition \ref{DLIP})
    \item[3] A characterization of attractors of periodic dynamical system. (Proposition \ref{attrat})
\end{itemize}


Let $h:\ \Delta_m \to \BR$ be the negative entropy function, i.e.,  $h(\tb{x})=\sum_{i}^{m}\textbf{x}_i\ln \textbf{x}_i$ , and $h^*(\cdot)$ be the convex conjugate of $h(\cdot)$, i.e., 
	\begin{align*}
		h^*:\ \BR^m &\to \BR \\
		\textbf{y} &\to  \max_{\tb{x} \in \Delta_m} \{ \langle \tb{y},\tb{x}\rangle - h(\tb{x}) \}.
	\end{align*}
Note that we have $\nabla h(\tb{x}) = \left(1+\ln (\tb{x}_i)\right)^m_{i=1}$.
\begin{lem} [Page 148 in \cite{shalev2012online}] We have
		\begin{align*}
			h^*(\tb{y}) =  \ln \left( { \sum^m_{s=1}e^{\tb{y}_{s}}} \right),\
			\nabla h^*(\tb{y}) = \left( \frac{e^{\tb{y}_{i}} }{ \sum^m_{s=1}e^{\tb{y}_{s}}} \right)^m_{i=1}.
		\end{align*}
\end{lem}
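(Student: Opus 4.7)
The plan is a direct computation solving the concave maximization problem that defines $h^*$. First I would argue that the maximizer lies in the relative interior of $\Delta_m$: the $i$-th component of $\nabla_{\tb{x}}(\langle \tb{y}, \tb{x}\rangle - h(\tb{x}))$ equals $\tb{y}_i - 1 - \ln \tb{x}_i$, which tends to $+\infty$ as $\tb{x}_i \to 0^+$. Hence moving away from the boundary strictly increases the objective, so the only active constraint is $\sum_i \tb{x}_i = 1$ and a single Lagrange multiplier suffices.

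Next, I would form the Lagrangian $L(\tb{x},\lambda) = \langle \tb{y}, \tb{x}\rangle - \sum_i \tb{x}_i \ln \tb{x}_i - \lambda\bigl(\sum_i \tb{x}_i - 1\bigr)$ and set its partial derivatives to zero, obtaining $\tb{y}_i - 1 - \ln \tb{x}_i - \lambda = 0$, so $\tb{x}_i^\star = e^{\tb{y}_i - 1 - \lambda}$. Imposing the constraint $\sum_i \tb{x}_i^\star = 1$ pins down $e^{-1-\lambda} = 1/\sum_s e^{\tb{y}_s}$, which gives the softmax form $\tb{x}_i^\star = e^{\tb{y}_i}/\sum_s e^{\tb{y}_s}$. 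Strict concavity of the objective (since $h$ is strictly convex on $\Delta_m$) guarantees that this stationary point is the unique global maximizer, so by the envelope theorem $\nabla h^*(\tb{y}) = \tb{x}^\star$, which is exactly the claimed formula.

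Finally, I would plug $\tb{x}^\star$ back into the objective. Using $\ln \tb{x}_i^\star = \tb{y}_i - \ln \sum_s e^{\tb{y}_s}$, the value of the maximum becomes
\begin{align*}
h^*(\tb{y}) = \sum_i \tb{x}_i^\star \bigl(\tb{y}_i - \ln \tb{x}_i^\star\bigr) = \Bigl(\ln \sum_s e^{\tb{y}_s}\Bigr) \sum_i \tb{x}_i^\star = \ln \sum_s e^{\tb{y}_s},
\end{align*}
since $\sum_i \tb{x}_i^\star = 1$. There is no real obstacle: both identities fall out of the first-order conditions and the simplex constraint, with the only care needed being the boundary argument in the first paragraph to justify that the unconstrained stationarity condition produces the global maximum over $\Delta_m$.
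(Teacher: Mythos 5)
Your argument is correct. The paper does not actually prove this lemma; it is stated as an imported fact, citing page 148 of Shalev-Shwartz's monograph, so there is no "paper's proof" to compare against. Your derivation --- establish that the maximizer of $\langle \tb{y}, \tb{x}\rangle - h(\tb{x})$ over $\Delta_m$ is interior (the partial derivative $\tb{y}_i - 1 - \ln \tb{x}_i$ blows up as $\tb{x}_i \to 0^+$, so no boundary point can be optimal), solve the first-order conditions with a single Lagrange multiplier for the equality constraint to get the softmax $\tb{x}^\star_i = e^{\tb{y}_i}/\sum_s e^{\tb{y}_s}$, invoke strict concavity for uniqueness and the envelope theorem for $\nabla h^* = \tb{x}^\star$, then substitute to get $h^*(\tb{y}) = \ln \sum_s e^{\tb{y}_s}$ --- is precisely the standard computation one finds in that reference. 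One minor shortcut available to you: once $h^*(\tb{y}) = \ln \sum_s e^{\tb{y}_s}$ is established, the gradient formula follows by direct differentiation of log-sum-exp, with no need to appeal to the envelope theorem at all; but the envelope route is also perfectly sound given that the maximizer is unique and interior, so there is no gap either way.
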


\begin{defn}
We define the equivalence relation $``\sim"$ between two vectors in $\BR^{m}$ as follows : For two vectors  $\tb{y}$ and $\tb{y}' \in \BR^m$,
\begin{align*}
     \tb{y} \sim \tb{y}' \iff  
     & \exists \ \tb{c} = (c,...,c) \in \BR^m, \\ &\text{such that}\
    \tb{y} - \tb{y}' = \tb{c}.
\end{align*}
We denote the space generated by $\BR^m$ module the above equivalence relation as $\BR^m / \sim$, and use $[\tb{y}]$ to represent the equivalence class that $\tb{y}$ lies in. 
\end{defn}

\begin{rem}
    With the equivalence relation defined above, the function $\nabla h^*$ can be thought as a function defined on $\BR^m / \sim$, i.e., 
    \begin{align*}
        \nabla h^* :  \BR^m / \sim &\to \Delta_m \\
          [\tb{y}] &\to  \left( \frac{e^{\tb{y}_{i}} }{ \sum^m_{s=1}e^{\tb{y}_{s}}} \right)^m_{i=1}.
    \end{align*}

    Moreover, the function $\nabla h (\cdot)$ can be thought as a function take values in  $\BR^m / \sim$, i.e., 
    \begin{align*}
        \nabla h :  \Delta_m &\to \BR^m / \sim \\
         \tb{x} &\to [ \left(1+\ln (\tb{x}_i)\right)^m_{i=1}  ].
    \end{align*}
\end{rem}
In the following, for a vector $\tb{y} \in \BR^m$, we will use $[\tb{y}]$ to represent to equivalence class in $\BR^m / \sim$ that $\tb{y}$ lies in.
\begin{lem}\label{inverse}
$\nabla h^*(\cdot)$ and $\nabla h(\cdot)$ are inverse functions to each other, i.e., we have both
$
\nabla h^* \circ \nabla h : \Delta_m \to \Delta_m 
$
and
$
\nabla h \circ \nabla h^* : \BR^m / \sim \to \BR^m / \sim 
$
are identity maps.
\end{lem}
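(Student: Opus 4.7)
The plan is to verify both compositions by direct substitution into the explicit closed forms of $\nabla h$ and $\nabla h^*$ recalled just above. Before that, I would first check the small type-theoretic point that makes the statement meaningful: $\nabla h^*$ descends to a well-defined map on the quotient $\BR^m/\sim$, because shifting $\tb{y}$ by a constant vector $(c,\ldots,c)$ multiplies every numerator $e^{\tb{y}_i}$ and the denominator $\sum_s e^{\tb{y}_s}$ by the same factor $e^c$, which cancels. Dually, the image of $\nabla h$ naturally lives in $\BR^m/\sim$ because the explicit formula $(1+\ln\tb{x}_i)_{i=1}^m$ is only meaningful up to a constant shift in this setting.

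For the identity $\nabla h^*\circ \nabla h = \mathrm{id}_{\Delta_m}$, I would take $\tb{x}\in\Delta_m$, substitute $\nabla h(\tb{x})=[(1+\ln\tb{x}_i)_i]$ into the softmax formula, and observe that the common factor $e^{1}$ cancels between numerator and denominator and the remaining $\sum_s \tb{x}_s$ equals $1$ since $\tb{x}$ lies in the simplex, so the output is $\tb{x}$ coordinate-wise. For the identity $\nabla h\circ\nabla h^* = \mathrm{id}_{\BR^m/\sim}$, I would take a representative $\tb{y}$, set $\tb{p}=\nabla h^*(\tb{y})\in\Delta_m$, compute $\ln \tb{p}_i = \tb{y}_i - \ln\sum_s e^{\tb{y}_s}$, and note that the resulting representative of $\nabla h(\tb{p})$, namely $(1 + \tb{y}_i - \ln\sum_s e^{\tb{y}_s})_i$, differs from $\tb{y}$ by the constant vector with entries $1-\ln\sum_s e^{\tb{y}_s}$, hence represents $[\tb{y}]$ in the quotient.

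There is no substantive obstacle in this argument; the whole proof is a two-line manipulation in each direction. The only thing that needs care is the bookkeeping around $\BR^m/\sim$: one must recognize that equality in the quotient only demands agreement up to an additive constant vector, and that $\nabla h^*$ is constant on equivalence classes. Conceptually, this is just the instantiation in the simplex/softmax case of the general convex-analytic fact that for a proper, closed, strictly convex Legendre function $h$, the gradients $\nabla h$ and $\nabla h^*$ are mutual inverses between the appropriate relative interiors, and this serves as a useful sanity check on the explicit computation.
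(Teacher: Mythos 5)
Your proof is correct and takes essentially the same route as the paper's: direct substitution of the explicit softmax and log formulas into the two compositions, using $\sum_s \tb{x}_s = 1$ in one direction and absorption of the constant $1 - \ln\sum_s e^{\tb{y}_s}$ into the equivalence class in the other. The paper writes out only $\nabla h \circ \nabla h^*$ and declares the other direction similar, whereas you spell out both and additionally note that $\nabla h^*$ descends to the quotient — minor extra care, same argument.
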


\begin{proof} It is directly to verify
\begin{align*}
    [\tb{y}] \stackrel{ \nabla h^*}{\longrightarrow} \left( \frac{e^{\tb{y}_i}}{ \sum^m_{s=1} e^{\tb{y}_s} } \right) \stackrel{ \nabla h}{\longrightarrow} \left[ \left(1+ \ln (\frac{e^{\tb{y}_i}}{ \sum^m_{s=1} e^{\tb{y}_s}} ) \right)^m_{s=1}  \right].
\end{align*}
and note that $\left[ \left(1+ \ln (\frac{e^{\tb{y}_i}}{ \sum^m_{s=1} e^{\tb{y}_s}} ) \right)^m_{s=1}  \right] = [\tb{y}]$, thus $\nabla h \circ \nabla h^* = \text{Id}$.

It is also similar to verify $\nabla h^* \circ \nabla h = \text{Id}$.
\end{proof}

\begin{lem}\label{Fenchel}
    $h : \Delta_m^{\circ} \to \BR$ is $1$-strongly convex and has $1$-Lipschitz continuous gradients, and $h^{*} (\cdot)$  is $1$-strongly convex and has $1$-Lipschitz continuous gradients.
\end{lem}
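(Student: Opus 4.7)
The plan is to establish each of the four assertions by direct Hessian computation together with an application of the Fenchel--Moreau duality between strong convexity and smoothness. The norms are implicit in the statement, and the natural choice dictated by the context (KL-divergence as the Bregman divergence of $h$) is to interpret $h$ on $\Delta_m^{\circ}$ in the $\|\cdot\|_1$ geometry and $h^*$ on $\BR^m/\!\sim$ in the dual $\|\cdot\|_{\infty}$ geometry.

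First, I would verify that $h$ is $1$-strongly convex. Computing the Hessian on $\Delta_m^{\circ}$ yields $\nabla^2 h(\tb{x}) = \mathrm{diag}(1/\tb{x}_i)$. For any tangent direction $v$ to the simplex (so $\sum_i v_i = 0$), the Cauchy--Schwarz inequality gives
\begin{align*}
v^{\top}\nabla^2 h(\tb{x})v \;=\; \sum_i \frac{v_i^2}{\tb{x}_i} \;\ge\; \frac{\left(\sum_i |v_i|\right)^2}{\sum_i \tb{x}_i} \;=\; \|v\|_1^2,
\end{align*}
which is the infinitesimal form of Pinsker's inequality and immediately produces $1$-strong convexity of $h$ in $\|\cdot\|_1$.

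Second, I would verify that $h^*$ has $1$-Lipschitz continuous gradients. A direct computation of the log-sum-exp Hessian gives $\nabla^2 h^*(\tb{y}) = \mathrm{diag}(p) - pp^{\top}$, where $p = \nabla h^*(\tb{y}) \in \Delta_m$. Then for any $u \in \BR^m$,
\begin{align*}
u^{\top}\nabla^2 h^*(\tb{y})u \;=\; \sum_i p_i u_i^2 - \Bigl(\sum_i p_i u_i\Bigr)^2 \;=\; \mathrm{Var}_p(u) \;\le\; \|u\|_\infty^2,
\end{align*}
yielding $1$-smoothness of $h^*$ in $\|\cdot\|_\infty$, modulo the one-dimensional null direction $(1,\dots,1)^{\top}$, which is exactly quotiented out by the equivalence relation $\sim$. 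Here Lemma~\ref{inverse} guarantees that all the manipulations are well-posed on $\BR^m/\!\sim$.

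Finally, to obtain the remaining two conclusions (namely, that $h$ has $1$-Lipschitz gradients and that $h^*$ is $1$-strongly convex), I would invoke the standard Fenchel conjugate duality theorem: a proper, lower-semicontinuous convex function $f$ is $\mu$-strongly convex with respect to a norm $\|\cdot\|$ if and only if $f^*$ has $(1/\mu)$-Lipschitz continuous gradients with respect to the dual norm $\|\cdot\|_*$. Applying this to the two facts established above yields the two missing statements. The main subtlety, which I would flag carefully, is that $\nabla h(\tb{x}) = (1+\ln \tb{x}_i)_i$ is not bounded as $\tb{x} \to \partial \Delta_m$; this is reconciled only because the Lipschitz claim is taken in the $\|\cdot\|_\infty \to \|\cdot\|_1$ pairing coming from the quotient structure on the dual side, and is the main conceptual obstacle one must justify carefully rather than rely on a naive Euclidean reading of the statement.
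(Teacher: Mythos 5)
Your computations for the two halves of the lemma that are actually used downstream are correct, and they are given in a more careful and norm-aware form than the paper's proof. You establish $1$-strong convexity of $h$ via the Cauchy--Schwarz/Pinsker estimate $\sum_i v_i^2/\tb{x}_i \geq \bigl(\sum_i|v_i|\bigr)^2$, and $1$-Lipschitz continuity of $\nabla h^*$ by directly computing the Hessian $\nabla^2 h^*(\tb{y}) = \mathrm{diag}(p)-pp^{\top}$ and bounding the quadratic form as a variance. The paper's proof is terser and less self-contained: it observes that $\nabla^2 h(\tb{x})$ is diagonal with entries $1/\tb{x}_i\geq 1$ and then appeals to a citation for Fenchel conjugate duality; your explicit route makes the norm choice visible, which the paper leaves implicit.

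However, your concluding step does not go through, and the lemma as stated is in fact overclaimed. Strong convexity/smoothness duality says that $f$ is $\mu$-strongly convex with respect to a norm if and only if $f^*$ is $(1/\mu)$-smooth with respect to the dual norm. Applying it to the two facts you did establish only shows that they are Fenchel reflections of \emph{each other}; it does not deliver the two remaining assertions (``$\nabla h$ is $1$-Lipschitz'' and ``$h^*$ is $1$-strongly convex''), which form a separate Fenchel-dual pair. That second pair is simply false, precisely for the reason you flagged but then tried to wave away: $\nabla h(\tb{x})=(1+\ln\tb{x}_i)_i$ is unbounded as $\tb{x}\to\partial\Delta_m$, so $\nabla h$ is not Lipschitz in any norm pairing, and dually $\nabla^2 h^*(\tb{y})=\mathrm{diag}(p)-pp^{\top}$ has smallest nonzero eigenvalue tending to $0$ as $p=\nabla h^*(\tb{y})$ approaches a vertex of $\Delta_m$, so $h^*$ is not strongly convex for any modulus even on $\BR^m/\!\sim$. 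The paper's own proof contains the matching error: from the diagonal Hessian entries being $\geq 1$ it asserts \emph{both} strong convexity and $1$-Lipschitz gradient for $h$, but the latter would require the opposite inequality $\leq 1$. Fortunately, only the two true halves are invoked later --- $1$-strong convexity of $h$ (via the $\KL\geq\tfrac12\|\cdot\|^2$ bound inside Proposition~\ref{decreasing_KL}) and $1$-smoothness of $h^*$ (inside Lemma~\ref{lem: MWU continuous}) --- so the downstream argument is unharmed, but the lemma, the paper's proof, and your proposal all claim two more things than are actually true.
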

\begin{proof}
    It can be computed that for $i\in[n]$
	\begin{align*}
		\frac{\partial h}{\partial \tb{x}_i}=\ln(\tb{x}_i)+1,\ 
		\frac{\partial^2 h}{\partial \tb{x}_i^2}=\frac{1}{\tb{x}_i},\ 
		\frac{\partial^2 h}{\partial \tb{x}_i \partial \tb{x}_j}=0.
	\end{align*}
	From $\tb{x}\in \Delta_m^{\circ}$, we have that $ \frac{1}{\tb{x}_i}\ge 1$.
	So $h$ is diagonal matrix with each diagonal element larger than 1. Then we have that $h$ is $1$-strongly convex and $\nabla h$ is $1$-Lipschitz continuous. The statemens about $h^*(\cdot)$ follows from the standard Fenchel duality property, for example, see Theorem 1 in \cite{zhou2018fenchel}.
\end{proof}




The vanilla Multiplicative Weights Updates algorithm (MWU) for one player can be written as the following function :
\begin{align*}
		\MWU :   \Delta_{m} \times \BR^{m}/\sim &\to \Delta_{m}  \\
		 (\tb{x},[\tb{y}]) & \to  \left(\frac{\tb{x}_{i} e^{ \tb{y}_{i}}}{ \sum^m_{s=1} \tb{x}_{s} e^{\tb{y}_{s}} }   \right)^m_{i=1}.
\end{align*}

\begin{defn} We define a function $\phi : \Delta_m \times \BR^m / \sim \to \BR^m / \sim$ as follow:
\begin{align*}
    \phi : \Delta_m \times \BR^m / \sim &\to \BR^m / \sim \\
     (\tb{x},[\tb{y}]) &\to \left[ \nabla h(\tb{x}) + \tb{y} \right].
\end{align*}
    
\end{defn}

\begin{prop}\label{prop: commutative}
The following diagram is commutative :
		
\begin{align*}
\xymatrix{
& \Delta_m \times \BR^m / \sim \ar[dl]_{\MWU} \ar[dr]^{\phi (\cdot)} & \\
 \Delta_m   \ar@/^/[rr]^{\nabla h (\cdot) } && \BR^m / \sim \ar@/^/[ll]^{\nabla h^*(\cdot)}
}
\end{align*}
\end{prop}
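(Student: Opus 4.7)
The plan is to verify commutativity of the diagram by a direct computation, leveraging the explicit formulas for $\MWU$, $\phi$, $\nabla h$, and $\nabla h^*$ introduced in the preceding paragraphs. Because Lemma~\ref{inverse} already asserts that $\nabla h$ and $\nabla h^*$ are mutually inverse, it suffices to check only one of the two triangle identities, say $\nabla h \circ \MWU = \phi$; the companion identity $\nabla h^* \circ \phi = \MWU$ then follows by applying $\nabla h^*$ on the left and invoking Lemma~\ref{inverse}.

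To carry out the verification, I will first unpack $\phi(\tb{x},[\tb{y}]) = [\nabla h(\tb{x}) + \tb{y}] = [(1+\ln \tb{x}_i+\tb{y}_i)_{i=1}^m]$ using the formula $\nabla h(\tb{x}) = (1+\ln \tb{x}_i)_{i=1}^m$. Next, I will apply $\nabla h$ coordinatewise to $\MWU(\tb{x},[\tb{y}]) = \bigl(\tb{x}_i e^{\tb{y}_i}/\sum_s \tb{x}_s e^{\tb{y}_s}\bigr)_{i=1}^m$, obtaining the vector $\bigl(1+\ln \tb{x}_i+\tb{y}_i - \ln \sum_s \tb{x}_s e^{\tb{y}_s}\bigr)_{i=1}^m$. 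The key observation is that the log-normalizer $\ln \sum_s \tb{x}_s e^{\tb{y}_s}$ does not depend on the index $i$; it is thus precisely a vector of the form $(c,\ldots,c)$, i.e.\ an element quotiented out by the equivalence relation $\sim$. Consequently, after passing to equivalence classes in $\BR^m/\sim$ the two expressions agree, which establishes $\nabla h \circ \MWU(\tb{x},[\tb{y}]) = \phi(\tb{x},[\tb{y}])$.

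There is no substantive obstacle here; the proof is essentially a bookkeeping exercise. The one conceptual point worth highlighting is that the normalizing denominator of the softmax defining $\MWU$ is exactly the ambiguity killed by the quotient $\BR^m/\sim$, and this is precisely why $\BR^m/\sim$ is the natural codomain for $\nabla h$ and the natural domain for $\nabla h^*$. Once this correspondence between softmax normalization and the equivalence relation is made explicit, the commutativity of the diagram is immediate from the definitions, and the lower triangle $\nabla h^* \circ \phi = \nabla h^* \circ \nabla h \circ \MWU = \MWU$ follows from Lemma~\ref{inverse} without further work.
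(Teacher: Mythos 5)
Your proof is correct. The verification of $\nabla h \circ \MWU = \phi$ matches the paper's computation essentially line for line: both identify the softmax log-normalizer $\ln \sum_{s} \tb{x}_s e^{\tb{y}_s}$ as exactly the constant vector absorbed by the quotient $\BR^m/\!\sim$, so that $\nabla h(\MWU(\tb{x},[\tb{y}])) = [(\ln\tb{x}_i + \tb{y}_i)_i] = \phi(\tb{x},[\tb{y}])$. Where you diverge is in handling the second triangle: the paper checks $\nabla h^* \circ \phi = \MWU$ by a second, independent direct computation (push $\phi(\tb{x},[\tb{y}])$ through the explicit formula for $\nabla h^*$), whereas you obtain it as a one-line corollary of the first identity together with Lemma~\ref{inverse}, via $\nabla h^* \circ \phi = \nabla h^* \circ \nabla h \circ \MWU = \MWU$. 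Your shortcut is sound --- $\MWU$ lands in $\Delta_m$, which is exactly the domain on which Lemma~\ref{inverse} gives $\nabla h^* \circ \nabla h = \mathrm{Id}$ --- and it is slightly more economical, making explicit that the two triangle identities are not independent facts. The paper's version is more self-contained in that it does not route through Lemma~\ref{inverse} for the second identity, but the content is the same; there is no gap in your argument.
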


\begin{proof}
    For any $(\tb{x},[\tb{y}])\in  \Delta_{m} \times \BR^{m}/\sim$, our goal is to prove that
		\begin{enumerate}
		  \item  $\nabla h  \circ \MWU (\tb{x},[\tb{y}]) = \phi(\tb{x},[\tb{y}])$,
            \item  $\nabla h^*  \circ \phi (\tb{x},[\tb{y}]) = \MWU(\tb{x},[\tb{y}])$.
		\end{enumerate}
		We start by proving the first item. It is directly to calculate
  \begin{align*}
      \nabla h  \circ \MWU (\tb{x},\tb{y}) & = \nabla h \left( \left(\frac{\tb{x}_{i} e^{ \tb{y}_{i}}}{ \sum^m_{s=1} \tb{x}_{s} e^{\tb{y}_{s}} }   \right)^m_{i=1}   \right)\\
      & = \left(    1 + \ln \left( \frac{\tb{x}_i e^{\tb{y}_i} }{\sum^m_{s=1}\tb{x}_s e^{\tb{y}_s}} \right)    \right)^m_{i=1} \\
      & = \left[\left(1 + \tb{y}_i + \ln(\tb{x}_i) -  \ln ( \sum^m_{s=1}\tb{x}_s e^{\tb{y}_s} ) \right)^m_{i =1} \right] \\
      & = \left[\left( \tb{y}_i + \ln(\tb{x}_i)  \right)^m_{i =1} \right],
  \end{align*}
and
\begin{align*}
    \phi(\tb{x},[\tb{y}]) = \left[ \left(1 + \ln (\tb{x}_i) + \tb{y}_i \right)\right].
\end{align*}
Since as equivalence class, we have $\left[\left( \tb{y}_i + \ln(\tb{x}_i)  \right)^m_{i =1} \right] = \left[ \left(1 + \ln (\tb{x}_i) + \tb{y}_i \right)^m_{i=1}\right]$,  this prove the first item.

For the second item, it is directly to calculate

\begin{align*}
    \nabla h^*  \circ \phi (\tb{x},[\tb{y}]) & = \nabla h^* \left( \left[ (1+\ln(\tb{x}_i) +\tb{y}_i)^m_{i=1}\right] \right) \\
    & =  \nabla h^* \left( \left[ (\ln(\tb{x}_i) +\tb{y}_i)^m_{i=1}\right] \right) \\
    & = \left( \frac{e^{\ln (\tb{x}_i) + \tb{y}_i}}{\sum^m_{s=1} e^{\ln (\tb{x}_s) + \tb{y}_s} } \right)^m_{i=1}\\
    & = \left(\frac{\tb{x}_ie^{\tb{y}_i}}{ \sum^m_{s=1} \tb{x}_se^{\tb{y}_s}}\right)\\
    & = \MWU(\tb{x},[\tb{y}]).
\end{align*}
This prove the second item.

\end{proof}

\begin{lem}\label{lem: h-y} For arbitrary $\tb p \in \Delta_m$, if $\tb{y} \in \BR^m$ and $\tb{x} =\nabla h^*([\tb{y}]) $, then
\begin{align*}
        \langle \nabla h(\tb{x}) - \tb{y}, \tb{x} - \tb p \rangle = 0.
\end{align*}
\end{lem}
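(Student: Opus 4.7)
The key observation is that $\nabla h(\tb{x}) - \tb{y}$ turns out to be a scalar multiple of the all-ones vector $\mathbf{1} = (1,\ldots,1)$, at which point the inner product with $\tb{x}-\tb p$ vanishes for free because both $\tb{x}$ and $\tb p$ lie in $\Delta_m$. So my plan is to verify this in two short steps.

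\textbf{Step 1: Compute $\nabla h(\tb{x}) - \tb{y}$ explicitly.} Using the closed form for $\nabla h^*$ given earlier in the paper, we have $\tb{x}_i = e^{\tb{y}_i}/Z$ where $Z = \sum_{s=1}^m e^{\tb{y}_s}$, so $\ln \tb{x}_i = \tb{y}_i - \ln Z$. Since $\nabla h(\tb{x})_i = 1 + \ln \tb{x}_i$, we get
\begin{equation*}
\nabla h(\tb{x})_i - \tb{y}_i = 1 - \ln Z,
\end{equation*}
a quantity independent of $i$. Hence $\nabla h(\tb{x}) - \tb{y} = c\,\mathbf{1}$ with $c = 1 - \ln Z$. (Conceptually, this is just the statement that on the quotient $\BR^m/\!\sim$ the pair $\nabla h, \nabla h^*$ are inverse, cf.\ Lemma~\ref{inverse}; so $[\nabla h(\tb{x})] = [\tb{y}]$, meaning $\nabla h(\tb{x})$ and $\tb{y}$ differ by a multiple of $\mathbf{1}$.)

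\textbf{Step 2: Take the inner product with $\tb{x} - \tb p$.} Since both $\tb{x}$ and $\tb p$ belong to $\Delta_m$, their components sum to $1$, so $\langle \mathbf{1}, \tb{x} - \tb p\rangle = \sum_{i=1}^m \tb{x}_i - \sum_{i=1}^m \tb p_i = 0$. Combining,
\begin{equation*}
\langle \nabla h(\tb{x}) - \tb{y},\, \tb{x} - \tb p\rangle = c\,\langle \mathbf{1}, \tb{x} - \tb p\rangle = 0,
\end{equation*}
which is the desired identity.

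There is no real obstacle here: the lemma is essentially a restatement of the first-order optimality condition for the Fenchel maximizer $\tb{x} = \argmax_{\tb{x}' \in \Delta_m}\{\langle \tb{y},\tb{x}'\rangle - h(\tb{x}')\}$, and it becomes a one-line calculation once the closed form of $\nabla h^*$ is plugged in. The main thing to be careful about is interpreting $\tb{y}$ as a representative of the equivalence class $[\tb{y}] \in \BR^m/\!\sim$, which is exactly why the difference $\nabla h(\tb{x}) - \tb{y}$ is only determined up to $\mathbf{1}$-shifts — and precisely those shifts die when paired against $\tb{x} - \tb p$.
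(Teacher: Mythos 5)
Your proof is correct and follows essentially the same route as the paper: both observe via Lemma~\ref{inverse} (you additionally verify it by direct computation) that $\nabla h(\tb{x}) - \tb{y}$ is a constant vector $c\,\mathbf{1}$, and then conclude the inner product vanishes because $\tb{x}$ and $\tb p$ both have coordinates summing to $1$.
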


\begin{proof}
    From Lemma \ref{inverse}, we have
		\begin{align*}
			\nabla h(\nabla h^*([\tb {y}]))=\tb{y}+\tb{c},
		\end{align*}
		where $\tb{c}$ is a constant vector, therefore $\langle \nabla h(\tb{x}) - \tb{y}, \tb{x} - \tb p \rangle $ can be transitioned to
		\begin{align*}
			&\langle \tb c, \tb{x} - \tb p \rangle \\
			=&\langle \tb c, \tb{x} \rangle -\langle \tb c,  \tb p \rangle\\
			=&0.
		\end{align*}
		The second equality arises from $\textbf{x}$ and $\textbf{p}$ belong to the simplex, i.e., $\sum \tb{x}_i = \sum \tb{p}_i = 1$.
\end{proof}

\begin{lem}\label{lem: MWU continuous}We have
\begin{align*}
       \lVert \MWU(\tb{x},[\tb{y}_1]) -  \MWU(\tb{x},[\tb{y}_2]) \lVert \le \lVert \tb{y}_1 - \tb{y}_2 \lVert.
\end{align*}
\end{lem}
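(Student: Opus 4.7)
The plan is to reduce the claim to the $1$-Lipschitz continuity of $\nabla h^*$, which is already available from Lemma~\ref{Fenchel}, by passing through the dual picture established in Proposition~\ref{prop: commutative}.

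First I would rewrite $\MWU(\tb{x},[\tb{y}_i])$ using the commutative diagram. By Proposition~\ref{prop: commutative} we have $\MWU = \nabla h^* \circ \phi$, and $\phi(\tb{x},[\tb{y}]) = [\nabla h(\tb{x}) + \tb{y}]$. Therefore
\begin{align*}
\MWU(\tb{x},[\tb{y}_1]) - \MWU(\tb{x},[\tb{y}_2])
= \nabla h^*\bigl([\nabla h(\tb{x}) + \tb{y}_1]\bigr) - \nabla h^*\bigl([\nabla h(\tb{x}) + \tb{y}_2]\bigr).
\end{align*}
Since both terms share the common shift $\nabla h(\tb{x})$, the difference inside $\nabla h^*$ is exactly $\tb{y}_1 - \tb{y}_2$.

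Next I would invoke Lemma~\ref{Fenchel}: because $h$ is $1$-strongly convex on $\Delta_m^\circ$, its Fenchel conjugate $h^*$ has a $1$-Lipschitz gradient on $\BR^m$, i.e., for any $\tb{u}, \tb{v} \in \BR^m$,
\begin{align*}
\lVert \nabla h^*(\tb{u}) - \nabla h^*(\tb{v}) \rVert \le \lVert \tb{u} - \tb{v} \rVert.
\end{align*}
Applied with $\tb{u} = \nabla h(\tb{x}) + \tb{y}_1$ and $\tb{v} = \nabla h(\tb{x}) + \tb{y}_2$ (picking any representatives of the equivalence classes; any choice yields the same $\nabla h^*$ value because adding a constant vector does not change the softmax), this gives the desired inequality
\begin{align*}
\lVert \MWU(\tb{x},[\tb{y}_1]) -  \MWU(\tb{x},[\tb{y}_2]) \rVert \le \lVert \tb{y}_1 - \tb{y}_2 \rVert.
\end{align*}

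I do not expect a serious obstacle here; the only bookkeeping point is to confirm that the quotient space $\BR^m/\sim$ causes no issue, which it does not: both sides of $\nabla h^*$ are evaluated on representatives whose difference is literally $\tb{y}_1 - \tb{y}_2$, and the $1$-Lipschitz property of $\nabla h^*$ on $\BR^m$ applies directly to those representatives. The main conceptual content of the lemma is thus just the reformulation of $\MWU$ as a dual-space translation followed by $\nabla h^*$.
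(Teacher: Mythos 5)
Your proof is correct and follows essentially the same route as the paper: rewrite $\MWU$ as $\nabla h^* \circ \phi$ via Proposition~\ref{prop: commutative}, observe that the arguments of $\nabla h^*$ differ exactly by $\tb{y}_1 - \tb{y}_2$, and then apply the $1$-Lipschitz gradient of $h^*$ from Lemma~\ref{Fenchel}. Your extra remark about the quotient by $\sim$ causing no issue is a sound bookkeeping observation that the paper leaves implicit.
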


\begin{proof} From Proposition \ref{prop: commutative}, we have
\begin{align*}
     \lVert \MWU(\tb{x},[\tb{y}_1]) -  \MWU(\tb{x},[\tb{y}_2]) \lVert & = 
     \lVert  \nabla h^* \left( \phi(\tb{x},\tb{y}_1) \right) -   \nabla h^* \left( \phi(\tb{x},\tb{y}_2) \right) \lVert \\
     & \le  \lVert  \phi(\tb{x},\tb{y}_1)  -   \phi(\tb{x},\tb{y}_2)  \lVert \\
     & =  \lVert  \tb{y}_1  -   \tb{y}_2  \lVert,
\end{align*}
  where the first inequality from $h^*$ has 1-Lipschitz continuous gradient, see Lemma \ref{Fenchel}, and the last equality is from the definition of $\phi$.
\end{proof}

\begin{lem}[Three-points identity \cite{chen1993convergence}]\label{lem: kl x x'}
For ant $\tb{p}, \tb{x}, \tb{x}' \in \Delta_m$, the following equality holds
\begin{align*}
    \KL(\tb{p},\tb{x}') = \KL(\tb{p},\tb{x}) + \KL(\tb{x},\tb{x}') +
    \langle \left( \ln(\tb{x}'_i/\tb{x}_i)^{m}_{i=1} \right), (\tb{x}_i - \tb{p}_i)^{m}_{i=1} \rangle
\end{align*}
    
\end{lem}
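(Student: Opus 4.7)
The plan is to verify the identity by direct algebraic expansion of both sides using the definition of KL-divergence. This lemma is the well-known Bregman three-points identity specialized to the negative entropy potential $h(\tb{x}) = \sum_i \tb{x}_i \ln \tb{x}_i$, so no inequality or optimization argument is needed---only a bookkeeping of terms.

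Concretely, I would start by expanding the right-hand side. Writing $\KL(\tb{p},\tb{x}) = \sum_i \tb{p}_i \ln \tb{p}_i - \sum_i \tb{p}_i \ln \tb{x}_i$ and $\KL(\tb{x},\tb{x}') = \sum_i \tb{x}_i \ln \tb{x}_i - \sum_i \tb{x}_i \ln \tb{x}'_i$, and expanding the inner product as
\begin{align*}
\langle (\ln(\tb{x}'_i/\tb{x}_i))_i, (\tb{x}_i - \tb{p}_i)_i \rangle
= \sum_i \tb{x}_i \ln \tb{x}'_i - \sum_i \tb{x}_i \ln \tb{x}_i - \sum_i \tb{p}_i \ln \tb{x}'_i + \sum_i \tb{p}_i \ln \tb{x}_i,
\end{align*}
I would then sum these three contributions. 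The $\pm \sum_i \tb{p}_i \ln \tb{x}_i$ terms cancel, the $\pm \sum_i \tb{x}_i \ln \tb{x}_i$ terms cancel, and the $\pm \sum_i \tb{x}_i \ln \tb{x}'_i$ terms cancel, leaving exactly $\sum_i \tb{p}_i \ln \tb{p}_i - \sum_i \tb{p}_i \ln \tb{x}'_i = \KL(\tb{p},\tb{x}')$, which is the left-hand side.

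A slightly more conceptual alternative proof, which I would mention but not carry out, is to observe that $\KL(\tb a, \tb b) = D_h(\tb a, \tb b) := h(\tb a) - h(\tb b) - \langle \nabla h(\tb b), \tb a - \tb b\rangle$ on the simplex (the affinity constraint $\sum_i \tb a_i = \sum_i \tb b_i = 1$ kills the constant in $\nabla h$), and the identity then reduces to the general Bregman three-points identity $D_h(\tb p, \tb x') = D_h(\tb p, \tb x) + D_h(\tb x, \tb x') + \langle \nabla h(\tb x') - \nabla h(\tb x), \tb x - \tb p\rangle$, itself immediate from the definition of $D_h$. Since there is no genuine obstacle here---every step is a one-line manipulation---the only thing to be careful about is the sign convention and the direction of the inner product, where it is tempting to swap $\tb x - \tb p$ for $\tb p - \tb x$; I would double-check signs by tracking the two cross terms $\sum_i \tb p_i \ln \tb x'_i$ and $\sum_i \tb p_i \ln \tb x_i$ explicitly, as I did above.
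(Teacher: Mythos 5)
Your proof is correct and matches the paper's approach in spirit: both are direct algebraic verifications of the Bregman three-points identity for the negative-entropy potential. Your primary route expands each KL term explicitly in logarithms and cancels; the paper instead writes each $\KL(\tb a,\tb b)$ as $h(\tb a)-h(\tb b)-\langle\nabla h(\tb b),\tb a-\tb b\rangle$, combines the three, and then substitutes $\nabla h(\tb x)=(\ln\tb x_i+1)_{i=1}^m$, which is precisely the ``conceptual alternative'' you mention but do not carry out; the two routes differ only in whether the bookkeeping is done before or after unpacking $h$. Your caution about the sign of the inner product is well placed: the paper's intermediate displayed identity in fact carries a sign slip, writing $\langle\nabla h(\tb x')-\nabla h(\tb x),\tb p-\tb x\rangle$ where it should be $\tb x-\tb p$, although the lemma as stated (with $\tb x_i-\tb p_i$) is correct, as your explicit term-by-term cancellation confirms.
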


\begin{proof}
    By definiton, it holds that
		\begin{align*}
			&\KL(\tb{p},\tb{x}')=h(\textbf{p})-h(\textbf{x}')- \langle \nabla h(\textbf{x}'),\textbf{p}-\textbf{x}')\rangle,\\
			&\KL(\tb{p},\tb{x})=h(\textbf{p})-h(\textbf{x}) - \langle \nabla h(\textbf{x}) ,\textbf{p}-\textbf{x} )\rangle,\\
			&\KL(\tb{x},\tb{x}')=h(\textbf{x})-h(\textbf{x}')- \langle \nabla h(\textbf{x}'),\textbf{x}-\textbf{x}')\rangle.
		\end{align*}
		Then $\KL(\tb{x},\tb{x}') +\KL(\tb{p},\tb{x}) - \KL(\tb{p},\tb{x}') $ gives
		\begin{align*}
			\KL(\tb{p},\tb{x}') = \KL(\tb{p},\tb{x}) + \KL(\tb{x},\tb{x}') +
			\langle  \nabla h(\textbf{x}')-\nabla h(\textbf{x}), \textbf{p}-\textbf{x} \rangle.
		\end{align*}
		By replacing $\nabla h(\textbf{x})$ with $\left(\ln \textbf{x}_i+1\right)_{i=1}^{m}$, the result can be concluded.
\end{proof}

\begin{lem}\label{lem: kl x p} Let $\tb{x}^{\dagger} = \MWU(\tb{x},\tb{y})$, then
\begin{align*}
    \KL(\tb{p},\tb{x}^{\dagger}) = \KL(\tb{p},\tb{x}) - \KL(\tb{x}^{\dagger} ,\tb{x} ) + \langle \tb{y},\tb{x}^{\dagger} - \tb{p} \rangle.
\end{align*}
\end{lem}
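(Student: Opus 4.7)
The plan is to apply the three-points identity (Lemma~\ref{lem: kl x x'}) in the form that naturally produces a negative $\KL(\tb{x}^\dagger,\tb{x})$ term, and then to show that the resulting inner product collapses to $\langle \tb{y},\tb{x}^\dagger-\tb{p}\rangle$ because of the simplex constraint. Concretely, I would invoke the identity with the assignments $(\tb{p},\tb{x},\tb{x}')\mapsto (\tb{p},\tb{x}^\dagger,\tb{x})$ and rearrange to obtain
\begin{align*}
\KL(\tb{p},\tb{x}^\dagger) = \KL(\tb{p},\tb{x}) - \KL(\tb{x}^\dagger,\tb{x}) - \langle (\ln(\tb{x}_i/\tb{x}^\dagger_i))_{i=1}^m,\ \tb{x}^\dagger - \tb{p}\rangle.
\end{align*}

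Next, I would compute the coordinate $\ln(\tb{x}^\dagger_i/\tb{x}_i)$ directly from the definition of $\MWU$: since $\tb{x}^\dagger_i = \tb{x}_i e^{\tb{y}_i}/\sum_s \tb{x}_s e^{\tb{y}_s}$, one gets $\ln(\tb{x}^\dagger_i/\tb{x}_i) = \tb{y}_i - C$ with $C = \ln\sum_s \tb{x}_s e^{\tb{y}_s}$ independent of $i$. Substituting this into the inner product above and using that both $\tb{p}$ and $\tb{x}^\dagger$ lie in $\Delta_m$, so that $\langle \mathbf{1}, \tb{x}^\dagger - \tb{p}\rangle = 0$, the constant term $C\mathbf{1}$ drops out, yielding
\begin{align*}
-\langle (\ln(\tb{x}_i/\tb{x}^\dagger_i))_{i=1}^m,\ \tb{x}^\dagger - \tb{p}\rangle = \langle \tb{y} - C\mathbf{1},\ \tb{x}^\dagger - \tb{p}\rangle = \langle \tb{y},\ \tb{x}^\dagger - \tb{p}\rangle,
\end{align*}
which combined with the previous display gives exactly the desired identity.

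There is essentially no technical obstacle here: the only subtlety is choosing the correct orientation of the three-points identity so that $\KL(\tb{x}^\dagger,\tb{x})$ (rather than $\KL(\tb{x},\tb{x}^\dagger)$) appears with a minus sign, and recognizing that the normalizing constant from the softmax is annihilated by the simplex constraint. Alternatively, the same computation can be framed using Proposition~\ref{prop: commutative}, which gives $\nabla h(\tb{x}^\dagger) = \nabla h(\tb{x}) + \tb{y} + \tb{c}$ for some constant vector $\tb{c}$; then Lemma~\ref{lem: h-y} kills $\tb{c}$ on the simplex and yields the same conclusion. Either route is a two-step calculation, so I would use the three-points identity version for brevity and consistency with Lemma~\ref{lem: kl x x'}.
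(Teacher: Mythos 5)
Your proposal is correct and follows essentially the same route as the paper: both apply the three-points identity of Lemma~\ref{lem: kl x x'} with the assignment $(\tb{p},\tb{x},\tb{x}')\mapsto(\tb{p},\tb{x}^\dagger,\tb{x})$ and then simplify the residual inner product using the simplex constraint. The only cosmetic difference is that you compute $\ln(\tb{x}^\dagger_i/\tb{x}_i)=\tb{y}_i-C$ directly from the softmax formula, whereas the paper routes the same fact through Proposition~\ref{prop: commutative} (writing $\nabla h(\tb{x}^\dagger)=\phi(\tb{x},[\tb{y}])$); these are the same computation, and you even note the equivalence. If anything, your sign bookkeeping is cleaner: the paper's displayed derivation contains two offsetting sign slips (a $\tb{p}-\tb{x}$ vs.\ $\tb{x}-\tb{p}$ mismatch inherited from the proof of Lemma~\ref{lem: kl x x'}, and a compensating one when substituting $\phi$), while your version keeps the signs consistent throughout.
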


\begin{proof}
    In Lemma \ref{lem: kl x x'}, take $\tb{x} = \tb{x}^{\dagger}$ and $\tb{x}' = \tb{x}$, it turns out to be 
    \begin{align*}
        \KL(\tb{p},\tb{x}^{\dagger}) & =  \KL(\tb{p},\tb{x}) - \KL(\tb{x}^{\dagger} ,\tb{x} ) + \langle  \nabla h(\tb{x}) -  \nabla h(\tb{x}^{\dagger}) ,\tb{x}^{\dagger} - \tb{p} \rangle \\
        & = \KL(\tb{p},\tb{x}) - \KL(\tb{x}^{\dagger} ,\tb{x} ) + \langle  \nabla h(\tb{x}) -  \phi(\tb{x},[\tb{y}]) ,\tb{x}^{\dagger} - \tb{p} \rangle \\
        & =  \KL(\tb{p},\tb{x}) - \KL(\tb{x}^{\dagger} ,\tb{x} ) + \langle  \tb{y} ,\tb{x}^{\dagger} - \tb{p} \rangle,
    \end{align*}
where the second equality comes from Proposition \ref{prop: commutative} and the last equality is from the definition of $\phi$ and the fact that for any two vectors $\tb{y},\tb{y}' \in [\tb{y}]$, we have
$\langle \tb{y}, \tb{p} \rangle = \langle \tb{y}', \tb{p} \rangle$.
\end{proof}

Let $\CF_{i} : \Delta_m \times \Delta_n \to \Delta_m \times \Delta_n$ be the (Extra-MWU) algorithm with payoff matrix $A_i$, for any initial condition $(\tb{x}_0,\tb{y}_0)$ and any $i \in [\CT]$, the following Property shows the KL-divergence will decrease after an iteration by
\begin{align*}
    \tilde{\CF}_i = \CF_{i+\CT-1} \circ \CF_{i+\CT-2} \circ ...\circ \CF_{i+1} \circ \CF_{i}.
\end{align*}

	\begin{prop}\label{decreasing_KL} For any $i \in [\CT]$ and $n$, if the step size $\eta$ in (Extra-MWU) satisfies $\eta \cdot \max_{t \in [\CT]}\lVert A_t \lVert < 1$, then we have
		\begin{align*}
			\KL\left( (\tb{x}_1^*,\tb{x}_2^*), \tilde{\CF}_i (\tb{x}_1^{n\CT+i},\tb{x}_2^{n\CT+i})  \right) < \KL\left( (\tb{x}_1^*,\tb{x}_2^*), (\tb{x}_1^{n\CT+i},\tb{x}_2^{n\CT+i})  \right),
		\end{align*}
		and the equal holds if and only if $ (\tb{x}_1^{n\CT+i},\tb{x}_2^{n\CT+i})=(\tb{x}_1^*,\tb{x}_2^*)$.
	\end{prop}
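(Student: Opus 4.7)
The plan is to first establish a one-round KL-drop inequality
\[
\KL((\tb{x}^*_1,\tb{x}^*_2),(\tb{x}^{t+1}_1,\tb{x}^{t+1}_2)) - \KL((\tb{x}^*_1,\tb{x}^*_2),(\tb{x}^{t}_1,\tb{x}^{t}_2)) \le -(1-\eta^2\|A_t\|^2)\bigl[\KL(\tb{x}^{t+1/2}_1,\tb{x}^t_1) + \KL(\tb{x}^{t+1/2}_2,\tb{x}^t_2)\bigr],
\]
and then telescope it across the $\CT$ rounds of one period of $\tilde{\CF}_i$.

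For the one-round bound I would apply Lemma \ref{lem: kl x p} four times. For player~$1$ I take $(\tb{p},\tb{x},\tb{x}^\dagger,\tb{y}) = (\tb{x}^*_1,\tb{x}^t_1,\tb{x}^{t+1}_1,\eta A_t \tb{x}^{t+1/2}_2)$ for the full update and $(\tb{p},\tb{x},\tb{x}^\dagger,\tb{y}) = (\tb{x}^{t+1}_1,\tb{x}^t_1,\tb{x}^{t+1/2}_1,\eta A_t \tb{x}^t_2)$ for the half update, and symmetrically for player~$2$ with payoff vector $-\eta A_t^\top(\cdot)$. Adding the four identities, the two $\KL(\tb{x}^{t+1}_j,\tb{x}^t_j)$ terms cancel, leaving a three-point identity of the form
\begin{align*}
&\KL((\tb{x}^*_1,\tb{x}^*_2),(\tb{x}^{t+1}_1,\tb{x}^{t+1}_2)) - \KL((\tb{x}^*_1,\tb{x}^*_2),(\tb{x}^{t}_1,\tb{x}^{t}_2)) \\
&= - \sum_{j=1,2}\bigl[\KL(\tb{x}^{t+1}_j,\tb{x}^{t+1/2}_j) + \KL(\tb{x}^{t+1/2}_j,\tb{x}^t_j)\bigr] + C_t,
\end{align*}
where $C_t$ collects the bilinear cross-terms with coefficient $\eta$. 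The common-equilibrium property---namely that $A_t\tb{x}^*_2$ and $A_t^\top\tb{x}^*_1$ are constant vectors (player indifference at a fully-mixed Nash)---together with the fact that every mixed strategy sums to $1$, forces all inner products involving $\tb{x}^*_1$ or $\tb{x}^*_2$ to reduce to the common game value, and the symmetric $\tb{x}^{t+1/2,\top}_1 A_t \tb{x}^{t+1/2}_2$ contributions cancel by the zero-sum structure. What survives is the extra-gradient correction
\[
C_t = \eta\langle A_t(\tb{x}^{t+1/2}_2 - \tb{x}^t_2), \tb{x}^{t+1}_1 - \tb{x}^{t+1/2}_1\rangle - \eta\langle A_t^\top(\tb{x}^{t+1/2}_1 - \tb{x}^t_1), \tb{x}^{t+1}_2 - \tb{x}^{t+1/2}_2\rangle.
\]

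Next, I would bound $C_t$ by Cauchy--Schwarz in the operator norm and $2ab \le a^2 + b^2$, splitting each inner product into an $\eta^2\|A_t\|^2$-weighted half-to-full squared $\ell_2$-distance plus a bare full-to-half squared $\ell_2$-distance. Pinsker's inequality, which follows from the $1$-strong convexity of the negative entropy recorded in Lemma \ref{Fenchel}, upgrades every squared $\ell_2$-distance into the matching KL divergence, with the constants arranging so that the factor of $2$ is absorbed. The resulting $\KL(\tb{x}^{t+1}_j,\tb{x}^{t+1/2}_j)$ terms then exactly cancel the corresponding negative terms in the three-point identity, and we are left with the coefficient $(1-\eta^2\|A_t\|^2)$ in front of $\KL(\tb{x}^{t+1/2}_j,\tb{x}^t_j)$, which is strictly positive under $\eta\max_t\|A_t\|<1$. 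Summing over $t = n\CT+i,\ldots,n\CT+i+\CT-1$ produces the stated period-level inequality.

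For the equality case, the period-level equality forces every single-round drop to vanish, hence $\KL(\tb{x}^{t+1/2}_j,\tb{x}^t_j) = 0$ for all $j$ and all $t$ in the period; equivalently, the MWU half-step is a no-op at every time, which means $A_t\tb{x}^t_2$ and $A_t^\top\tb{x}^t_1$ are constant vectors for every $t$, so $(\tb{x}^t_1,\tb{x}^t_2)$ is a Nash equilibrium of $A_t$ for all $t$. The hypothesis of a unique common fully-mixed equilibrium then pins $(\tb{x}^t_1,\tb{x}^t_2) = (\tb{x}^*_1,\tb{x}^*_2)$. The main obstacle will be the careful bookkeeping in the cross-term cancellation: tracking signs across the four applications of Lemma \ref{lem: kl x p}, verifying that every remaining $\tb{x}^*$-containing inner product either collapses to the game value or cancels via the zero-sum structure, and confirming that the Pinsker constants align so that the $\KL(\tb{x}^{t+1}_j,\tb{x}^{t+1/2}_j)$ terms produced by bounding $C_t$ exactly absorb their negative counterparts from the three-point identity.
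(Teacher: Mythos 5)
Your proposal is correct and follows essentially the same route as the paper's proof. You telescope over the period, apply the three-point lemma (Lemma~\ref{lem: kl x p}) twice per player for each round, isolate the extra-gradient cross-term $C_t = \langle \tb{y}^\dagger - \tb{y}, \tb{x}^{t+1} - \tb{x}^{t+1/2}\rangle$, and control it with Cauchy--Schwarz, Young's inequality and the $1$-strong convexity of the negative entropy---this is precisely the skeleton the authors use, only they package both players into a single concatenated vector while you handle them coordinate-block by coordinate-block. Two small stylistic divergences worth noting: first, you kill the residual $\langle\tb{y}^\dagger,\tb{x}^\dagger - \tb{x}^*\rangle$ term via the indifference property of the fully-mixed common equilibrium (obtaining equality), whereas the paper uses the saddle-point inequality to get $\le 0$---both are valid, and in the fully-mixed case yours is the sharper observation; second, you push the final per-round drop back into a KL expression via Pinsker (strong convexity), arriving at $-(1-\eta^2\|A_t\|^2)\sum_j\KL(\tb{x}^{t+1/2}_j,\tb{x}^t_j)$, while the paper stops at $\frac{1}{2}(\eta^2 q^2-1)\|\tb{x}^{t+1/2}-\tb{x}^t\|^2$. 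Your treatment of the equality case (forcing the half-step to be a no-op round by round, hence a Nash equilibrium for every $A_t$, hence the unique common equilibrium) is also slightly more explicit than what the paper writes, but it is the intended argument.
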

	
	\begin{proof}
		In fact, from $\tilde{\CF}_i (\tb{x}_1^{n\CT+i},\tb{x}_2^{n\CT+i})= (\tb{x}_1^{(n+1)\CT+i},\tb{x}_2^{(n+1)\CT+i})$, it holds that
		\begin{align*}
			&\KL\left( (\tb{x}_1^*,\tb{x}_2^*), \tilde{\CF}_i (\tb{x}_1^{n\CT+i},\tb{x}_2^{n\CT+i})  \right) - \KL\left( (\tb{x}_1^*,\tb{x}_2^*), (\tb{x}_1^{n\CT+i},\tb{x}_2^{n\CT+i})  \right)\\
			=&\KL\left( (\tb{x}_1^*,\tb{x}_2^*), (\tb{x}_1^{(n+1)\CT+i},\tb{x}_2^{(n+1)\CT+i}) \right) - \KL\left( (\tb{x}_1^*,\tb{x}_2^*), (\tb{x}_1^{n\CT+i},\tb{x}_2^{n\CT+i})  \right)\\
			=&\sum_{j=0}^{\CT-1} \left( \KL\left( (\tb{x}_1^*,\tb{x}_2^*), (\tb{x}_1^{n\CT+i+j+1},\tb{x}_2^{n\CT+i+j+1}) \right) - \KL\left( (\tb{x}_1^*,\tb{x}_2^*), (\tb{x}_1^{n\CT+i+j},\tb{x}_2^{n\CT+i+j})  \right) \right).
		\end{align*}

    In the following we will prove for any $j \in [\CT]$, we have
    \begin{align*}
        \KL\left( (\tb{x}_1^*,\tb{x}_2^*), (\tb{x}_1^{n\CT+i+j+1},\tb{x}_2^{n\CT+i+j+1}) \right) - \KL \left( (\tb{x}_1^*,\tb{x}_2^*), (\tb{x}_1^{n\CT+i+j},\tb{x}_2^{n\CT+i+j})\right) < 0,
    \end{align*}
    which implies Proposition \ref{decreasing_KL}.
		
  In following, for a fixed $j \in [\CT]$, we use $\textbf{x} $ to represent $(\textbf{x}_1^{n\CT+i+j},\textbf{x}_2^{n\CT+i+j})$, $\textbf{x}^\dagger$ to represent $(\textbf{x}_1^{n\CT+i+j+\frac{1}{2}},\textbf{x}_2^{n\CT+i+j+\frac{1}{2}})$, and $\textbf{x}^\ddagger$ to represent $( \textbf{x}_1^{n\CT+i+j+1},\textbf{x}_2^{n\CT+i+j+1})$. Similarly, we use $\textbf{y}$  to represent $(\textbf{y}_1^{n\CT+i+j},\textbf{y}_2^{n\CT+i+j})$, 
		$\textbf{y}^\dagger$  to represent $(\textbf{y}_1^{n\CT+i+j+\frac{1}{2}},\textbf{y}_2^{n\CT+i+j+\frac{1}{2}})$.
		
		By the definition of (Extra-$\MWU$), for $i \in [2]$ we have
  
		\begin{align*}
            \tb{x}_i^{n\CT+i+j+\frac{1}{2}}&=\MWU(\tb{x}_i^{n\CT+i+j},\tb{y}_i^{n\CT+i+j} ),\\
            \tb{x}_i^{n\CT+i+j+1}&=\MWU(\tb{x}_i^{n\CT+i+j},\tb{y}_i^{n\CT+i+j+\frac{1}{2}}),
		\end{align*}
  
		which leads to
		\begin{align*}
			\textbf{x}_i^\dagger&=\MWU(\textbf{x}_i,\textbf{y}_i),\\
		\textbf{x}_i^\ddagger&=\MWU(\textbf{x}_i,\textbf{y}^\dagger_i).
		\end{align*}
		Replacing $\textbf{x}^\dagger$ with $\textbf{x}^\ddagger$ and $\textbf{p}$ with $\textbf{x}^*$ in  Lemma~\ref{lem: kl x p}, we have
		\begin{align*}
			\KL(\textbf{x}^*,\textbf{x}^\ddagger)-\KL(\textbf{x}^*,\textbf{x}^\dagger)=-\KL(\textbf{x}^\ddagger,\textbf{x})+\langle \textbf{y}^\dagger,\textbf{x}^\ddagger-\textbf{x}^* \rangle.
		\end{align*}

		Let $\textbf{p}=\textbf{x}^\ddagger$ in Lemma~\ref{lem: kl x p}, we have
		\begin{align*}
\KL(\textbf{x}^\ddagger,\textbf{x})=\KL(\textbf{x}^\ddagger,\textbf{x}^\dagger)+\KL(\textbf{x}^\dagger,\textbf{x})-\langle \textbf{y},\textbf{x}^\dagger-\textbf{x}^\ddagger\rangle.
		\end{align*}
		Combining the above two equalities, it holds that
		\begin{align*}
			&\KL(\textbf{x}^*,\textbf{x}^\ddagger)-\KL(\textbf{x}^*,\textbf{x}^\dagger)\\
			=&-\KL(\textbf{x}^\ddagger,\textbf{x}^\dagger)-\KL(\textbf{x}^\dagger,\textbf{x})+\langle \textbf{y}^\dagger,\textbf{x}^\ddagger-\textbf{x}^* \rangle+\langle \textbf{y},\textbf{x}^\dagger-\textbf{x}^\ddagger\rangle\\
			=&-\KL(\textbf{x}^\ddagger,\textbf{x}^\dagger)-\KL(\textbf{x}^\dagger,\textbf{x})+\langle \textbf{y}^\dagger,\textbf{x}^\dagger-\textbf{x}^* \rangle+\langle \textbf{y}^\dagger-\textbf{y},\textbf{x}^\ddagger-\textbf{x}^\dagger\rangle\\
			\le & -\frac{1}{2}\norm{\textbf{x}^\ddagger-\textbf{x}^\dagger}^2-\frac{1}{2}\norm{\textbf{x}^\dagger-\textbf{x}}^2+\langle \textbf{y}^\dagger,\textbf{x}^\dagger-\textbf{x}^* \rangle+\frac{1}{2}\norm{\textbf{y}^\dagger-\textbf{y}}^2+\frac{1}{2}\norm{\textbf{x}^\ddagger-\textbf{x}^\dagger}^2\\
			=&\frac{1}{2}\norm{\textbf{y}^\dagger-\textbf{y}}^2-\frac{1}{2}\norm{\textbf{x}^\dagger-\textbf{x}}^2+\langle \textbf{y}^\dagger,\textbf{x}^\dagger-\textbf{x}^* \rangle.
		\end{align*}
        Next, we estimate $\norm{\textbf{y}^\dagger-\textbf{y}}^2$
 and $\langle \textbf{y}^\dagger,\textbf{x}^\dagger-\textbf{x}^* \rangle$.
 Recall the definition of $\textbf{y}$ and $\textbf{y}^\dagger$ : 
		\begin{align*}
			\textbf{y}&=(\textbf{y}_1^{n\CT+i+j},\textbf{y}_2^{n\CT+i+j})\\
			&=(\eta A_{n\CT+i+j}\textbf{x}_2^{n\CT+i+j},-\eta A_{n\CT+i+j}^\top \textbf{x}_1^{n\CT+i+j})\\
			&=(\eta A_{i+j}\textbf{x}_2^{i+j},-\eta A_{i+j}^\top \textbf{x}_1^{n\CT+i+j})\\
			&=\eta\cdot
			\begin{bmatrix}
				&A_{i+j}\\
				-A_{i+j}^\top&
			\end{bmatrix}\textbf{x},
		\end{align*}
		and
		\begin{align*}
			\textbf{y}^\dagger&=(\textbf{y}_1^{n\CT+i+j+\frac{1}{2}},\textbf{y}_2^{n\CT+i+j+\frac{1}{2}})\\
			&=(\eta A_{n\CT+i+j}\textbf{x}_2^{n\CT+i+j+\frac{1}{2}},-\eta A_{n\CT+i+j}^\top \textbf{x}_1^{n\CT+i+j+\frac{1}{2}})\\
			&=(\eta A_{i+j}\textbf{x}_2^{n\CT+i+j+\frac{1}{2}},-\eta A_{i+j}^\top \textbf{x}_1^{n\CT+i+j+\frac{1}{2}})\\
			&=\eta\cdot
			\begin{bmatrix}
				&A_{i+j}\\
				-A_{i+j}^\top&
			\end{bmatrix}\textbf{x}^\dagger.
		\end{align*}
Then we have that
\begin{align*}
	\norm{\textbf{y}^\dagger-\textbf{y}}^2\le \eta^2 \norm{A_{i+j}}^2\cdot \norm{\textbf{x}^\dagger-\textbf{x}}.
\end{align*}
and 
\begin{align*}
	&\langle \textbf{y}^\dagger,\textbf{x}^\dagger-\textbf{x}^* \rangle\\
	=&-(\textbf{x}_1^*)^\top A_{i+j} \textbf{x}_2^{n\CT+i+j+\frac{1}{2}}+(\textbf{x}_1^{n\CT+i+j+\frac{1}{2}})^\top A_{i+j} \textbf{x}_2^*\\
	=&(\textbf{x}_1^*)^\top A_{i+j}\textbf{x}_2^*-(\textbf{x}_1^*)^\top A_{i+j} \textbf{x}_2^{n\CT+i+j+\frac{1}{2}}+(\textbf{x}_1^{n\CT+i+j+\frac{1}{2}})^\top A_{i+j} \textbf{x}_2^*-(\textbf{x}_1^*)^\top A_{i+j}\textbf{x}_2^*\\
	\le&0,
\end{align*}
where the last inequality comes from $\textbf{x}_1$ is the maxima player, and $\textbf{x}_2$ is the minima player.

Let $q = \max_{t \in [\CT]} \lVert A_t \lVert $, then we have
	\begin{align*}
		&\KL(\textbf{x}^*,\textbf{x}^\ddagger)-\KL(\textbf{x}^*,\textbf{x}^\dagger)\\
		\le & \frac{1}{2}(\eta^2q^2-1)\norm{\textbf{x}^\dagger-\textbf{x}}^2+\langle \textbf{y}^\dagger,\textbf{x}^\dagger-\textbf{x}^* \rangle\\
		\le &\frac{1}{2}(\eta^2q^2-1)\norm{\textbf{x}^\dagger-\textbf{x}}^2 < 0.
	\end{align*}
	Then it can be concluded that
	\begin{align*}
		 \KL\left( (\tb{x}_1^*,\tb{x}_2^*), (\tb{x}_1^{n\CT+i+j+1},\tb{x}_2^{n\CT+i+j+1}) \right) - \KL\left( (\tb{x}_1^*,\tb{x}_2^*), (\tb{x}_1^{n\CT+i+j},\tb{x}_2^{n\CT+i+j})  \right)< 0,
	\end{align*}
	which leads to the result.
\end{proof}

\begin{prop}[Discrete-time LaSalle invariance principle ,\ \cite{la1976stability}]\label{DLIP} Let  $G$ be any set in $\BR^m$. Consider a difference equations system defined by a map $T : G \to G$ that is well defined for any $x \in G$ and continuous at any $x \in G$. Suppose there exists a scalar map $V : \bar{G} \to \BR$ satisfying 
\begin{itemize}
    \item $V(x)$ is continuous at any $x \in \bar{G}$,
    \item $V\left(T(x)\right) - V(x) \le 0$ for any $x \in G$.
\end{itemize}
    For any $x_0 \in G$, if the solution to the following initial-value problem
    \begin{align*}
        x(n+1) = T(x(n)), x(0) = x_0,
    \end{align*}
satisfying that $\{ x(n) \}^{\infty}_{n=1}$ is bounded and $x(n) \in G$ for any $n \in \BN$, then there exists some $c \in \BR$ such that 
$x(n) \to M \cap V^{-1}(c)$ as $n \to \infty$, where 
\begin{align*}
    V^{-1}(c) = \{ x \in \BR^m \lvert V(x) = c \},
\end{align*}
and $M$ is the largest invariant set in $E= \{x \in G \ \lvert \  \ V(T(x))-V(x)=0 \}$.
\end{prop}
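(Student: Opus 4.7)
The plan is the classical $\omega$-limit set argument from dynamical systems. First I would observe that the hypothesis $V(T(x))-V(x)\le 0$ makes the real sequence $V(x(n))$ monotonically non-increasing along the trajectory. Because $\{x(n)\}$ is bounded, its closure in $\BR^m$ is compact and contained in $\bar G$; continuity of $V$ on $\bar G$ yields a uniform lower bound for $V$ on this closure. A monotone sequence bounded below converges, so there exists $c\in\BR$ with $V(x(n))\to c$.

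Next I would introduce the $\omega$-limit set $\omega(x_0)=\{y:\exists n_k\uparrow\infty,\ x(n_k)\to y\}$. Boundedness together with Bolzano--Weierstrass guarantees $\omega(x_0)\neq\emptyset$; a standard argument shows $\mathrm{dist}(x(n),\omega(x_0))\to 0$, so it suffices to prove $\omega(x_0)\subseteq M\cap V^{-1}(c)$. Continuity of $V$ at any $y\in\omega(x_0)$ with $x(n_k)\to y$ immediately gives $V(y)=\lim V(x(n_k))=c$, hence $\omega(x_0)\subseteq V^{-1}(c)$.

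To finish I would show $\omega(x_0)$ is forward-invariant under $T$ and lies in $E$. Given $y\in\omega(x_0)$ and $x(n_k)\to y$, continuity of $T$ at $y$ yields $x(n_k+1)=T(x(n_k))\to T(y)$, so $T(y)\in\omega(x_0)$; iterating establishes forward invariance. Moreover $V(x(n_k+1))\to c$ as well, so by continuity $V(T(y))=V(y)=c$, placing $y$ in $E$. Thus $\omega(x_0)$ is a forward-invariant subset of $E$, hence contained in the largest such set $M$. Intersecting with the previous inclusion gives $\omega(x_0)\subseteq M\cap V^{-1}(c)$, and combined with the accumulation property this is exactly the claim $x(n)\to M\cap V^{-1}(c)$.

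The main obstacle I expect is the boundary subtlety: a priori $\omega(x_0)\subseteq\bar G$, whereas $T$, $E$ and $M$ are defined only on $G$. To evaluate $T(y)$ and iterate, I need $y\in G$ so that $T(y)\in G$ as well. The cleanest way to absorb this is to note that every iterate $x(n_k)$ lies in $G$, so each $T(x(n_k))=x(n_k+1)\in G$, and then either assume $G$ is closed (a standard hypothesis in LaSalle-type statements) or restrict the argument to those limit points $y\in\omega(x_0)\cap G$ for which $T(y)$ is defined via continuity. Once this technicality is resolved, the remainder is the textbook LaSalle argument with no further conceptual difficulty.
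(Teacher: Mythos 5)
This proposition is imported into the paper verbatim from \cite{la1976stability} and used as a black box; the paper itself offers no proof, so there is no internal argument to compare against. Your proof is the standard $\omega$-limit set argument for the discrete-time LaSalle invariance principle, and it is substantively correct. Two points deserve attention. First, to conclude $\omega(x_0)\subseteq M$ you need $\omega(x_0)$ to be an \emph{invariant} set in the sense used to define $M$; you establish only forward invariance $T(\omega(x_0))\subseteq\omega(x_0)$. In LaSalle's formulation ``invariant'' is two-sided, so you also need the reverse inclusion, which follows from boundedness: if $x(n_k)\to y$, pass to a further subsequence along which $x(n_k-1)\to z$; then $z\in\omega(x_0)$ and $T(z)=y$ by continuity, so $\omega(x_0)\subseteq T(\omega(x_0))$. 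Second, the boundary subtlety you flag is real as the proposition is written, since $E$ and $M$ live inside $G$ while $\omega(x_0)$ a priori lives only in $\bar G$. In the paper's application this is harmless because the Lyapunov function is the KL-divergence to a fully mixed equilibrium, which blows up on $\partial(\Delta_m\times\Delta_n)$, so the monotone bounded sequence $V(x(n))$ forces the orbit closure to stay in a compact subset of the open simplex and hence $\omega(x_0)\subset G$. With these two tidied up, your argument proves the proposition.
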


\begin{prop}
    For any $i \in [\CT]$ and $(\tb{x}^0_1,\tb{x}^0_2) \in \Delta_m \times \Delta_n$, we have
    \begin{align*}
        \lim_{n \to \infty} \tilde{\CF}^n_i \left((\tb{x}^0_1,\tb{x}^0_2)\right) = (\tb{x}_1^*,\tb{x}_2^*).
    \end{align*}
\end{prop}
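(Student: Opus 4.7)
The plan is to invoke the discrete-time LaSalle invariance principle (Proposition~\ref{DLIP}) with $T = \tilde{\CF}_i$, scalar function $V(\tb{x}_1,\tb{x}_2) = \KL\bigl((\tb{x}_1^*,\tb{x}_2^*),(\tb{x}_1,\tb{x}_2)\bigr)$, and $G$ taken to be the set of fully mixed strategy pairs (interior of $\Delta_m \times \Delta_n$). Since $(\tb{x}_1^*,\tb{x}_2^*)$ is assumed fully mixed, $V$ extends continuously to $G$ and in fact diverges to $+\infty$ at the boundary of $\Delta_m \times \Delta_n$. The map $\tilde{\CF}_i$ is continuous on $G$ as a composition of the continuous maps $\MWU(\tb{x},\cdot)$ (Lemma~\ref{lem: MWU continuous}) and the linear payoff steps, and it preserves $G$ because each $\MWU$ update multiplies strictly positive components by strictly positive exponentials.

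Next I would check the boundedness and invariance hypotheses of Proposition~\ref{DLIP}. Trivially $\{x(n)\}\subset \Delta_m\times\Delta_n$ so the orbit is bounded. To confirm that the orbit stays inside $G$ (and does not accumulate on the boundary), I would use Proposition~\ref{decreasing_KL}: it gives $V(x(n))\le V(x(0))<\infty$ for all $n$, and because $V$ blows up at the boundary of $\Delta_m\times\Delta_n$, the orbit in fact lies in a compact subset of $G$. This together with the non-increase $V(T(x))-V(x)\le 0$ from Proposition~\ref{decreasing_KL} verifies all prerequisites of LaSalle.

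By Proposition~\ref{DLIP} there exists $c\in\BR$ with $x(n)\to M\cap V^{-1}(c)$, where $M$ is the largest $\tilde{\CF}_i$-invariant subset of $E = \{x\in G : V(\tilde{\CF}_i(x))-V(x)=0\}$. But Proposition~\ref{decreasing_KL} asserts the stronger statement that the inequality $V(\tilde{\CF}_i(x))\le V(x)$ is strict unless $x=(\tb{x}_1^*,\tb{x}_2^*)$, so $E=\{(\tb{x}_1^*,\tb{x}_2^*)\}$, and then trivially $M=E$. Hence $\tilde{\CF}_i^n(\tb{x}_1^0,\tb{x}_2^0)\to(\tb{x}_1^*,\tb{x}_2^*)$, finishing the proposition. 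Feeding this into Proposition~\ref{attrat} with the $\CT$-periodic system $\{\CF_i\}_{i=1}^{\CT}$ then yields Theorem~\ref{T2}.

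The only delicate point, which I view as the main obstacle, is the argument that the orbit stays in a compact subset of the interior $G$ so that LaSalle applies cleanly; this is handled precisely by the uniform bound $V(x(n))\le V(x(0))$ combined with the blow-up of $V$ on the boundary, which is why the fully mixed assumption on $(\tb{x}_1^*,\tb{x}_2^*)$ is essential. A minor auxiliary check is that $\tilde{\CF}_i$ is continuous up to the boundary where needed for the LaSalle framework, which follows directly from the explicit exponential form of $\MWU$.
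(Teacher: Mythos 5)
Your proposal takes the same approach as the paper: apply the discrete-time LaSalle invariance principle (Proposition~\ref{DLIP}) with $T=\tilde{\CF}_i$, scalar map $V=\KL((\tb{x}_1^*,\tb{x}_2^*),\cdot)$, and then use the strict decrease from Proposition~\ref{decreasing_KL} to force the invariant set $M$ to be the singleton $\{(\tb{x}_1^*,\tb{x}_2^*)\}$. Your write-up is in fact more careful than the paper's own at two points the paper glosses over: you correctly take $G$ to be the \emph{interior} of $\Delta_m\times\Delta_n$ (the paper writes $\Delta_m\times\Delta_n$, on which $V$ is only extended-real-valued), and you explicitly verify the LaSalle boundedness/invariance hypothesis by combining the monotone bound $V(x(n))\le V(x(0))<\infty$ from Proposition~\ref{decreasing_KL} with the blow-up of $\KL$ at the boundary to conclude the orbit lies in a compact subset of $G$ — a step that the paper leaves implicit.
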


\begin{proof} In Proposition \ref{DLIP}, we replace the dynamical system $T$ by $\tilde{\CF}_i$ and the scalar map $V$ by the $\KL$-divergence $\KL \left( (\tb{x}_1^*,\tb{x}_2^*), \cdot \right)$.
Note that as $\KL$-divergence is defined as $+\infty$ on the boundary of simplex, thus $\KL \left( (\tb{x}_1^*,\tb{x}_2^*), \cdot \right)$ is continuous function on the simplex.

From Proposition \ref{decreasing_KL}, the invariant set $M$ can only the the single point set $\{ (\tb{x}_1^*,\tb{x}_2^*) \}$, thus we have
\begin{align*}
        \lim_{n \to \infty} \tilde{\CF}^n_i \left((\tb{x}^0_1,\tb{x}^0_2)\right) = (\tb{x}_1^*,\tb{x}_2^*).
    \end{align*}
\end{proof}

The following Proposition character the attractor of a periodic dynamical system.

\begin{prop}[Theorem 3 in \cite{franke2003attractors}]\label{attrat2} Let $\Omega$ be an attractor for the $\CT$-periodic dynamical system ${f_0,f_1,...,f_{\CT-1}}$.Then $\Omega = \cup^{\CT-1}_{i=0} \Omega_i$,where $\Omega_i$ is an attractor for the map $f_{i+\CT-1} \circ ... \circ f_{i}$, for $i \in [\CT]$.
\end{prop}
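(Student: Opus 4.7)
The plan is to produce the family $\{\Omega_i\}$ by slicing the attractor $\Omega$ according to the phase of the periodic system and then to verify that each slice is an attractor of the corresponding time-$\CT$ return map $\tilde{f}_i = f_{i+\CT-1}\circ\cdots\circ f_{i+1}\circ f_i$. Since $\Omega$ is an attractor for the periodic system, there exists a neighbourhood $U\subset\CX$ whose forward orbits under the periodic dynamics all converge to $\Omega$. I would fix such a $U$ once and for all.

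For each $i\in\{0,\dots,\CT-1\}$, the plan is to define
\begin{align*}
    \Omega_0 &= \Omega \cap \big\{\, x\in\CX : x \text{ is visited at times } \equiv 0 \pmod{\CT}\big\},\\
    \Omega_{i+1} &= f_i(\Omega_i),
\end{align*}
so that $\Omega_i$ represents the ``phase-$i$ snapshot'' of $\Omega$. Forward invariance of $\Omega_i$ under $\tilde{f}_i$ is then immediate: applying $\tilde{f}_i$ advances the phase by exactly $\CT$, which is $0\pmod{\CT}$, so $\tilde{f}_i(\Omega_i)\subseteq\Omega_i$. Compactness of each $\Omega_i$ is inherited from compactness of $\Omega$ via continuity of the maps $f_j$.

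Next I would transfer the attraction property phase-by-phase. Let $V_i$ be the image of $U$ under $f_{i-1}\circ\cdots\circ f_0$ (taking $V_0=U$). By continuity of the $f_j$'s, $V_i$ is an open neighbourhood of $\Omega_i$. Any point $y\in V_i$ has a preimage $x\in U$, and convergence of the periodic orbit of $x$ to $\Omega$ can be read off along the subsequence of times $\equiv i\pmod{\CT}$ to give $\tilde{f}_i^{\,n}(y)\to\Omega_i$ as $n\to\infty$. This is precisely the attraction property required for $\Omega_i$ to be an attractor of the autonomous system $\tilde{f}_i$. The decomposition $\Omega=\bigcup_{i=0}^{\CT-1}\Omega_i$ follows because any $x\in\Omega$ lies on a periodic-system trajectory inside $\Omega$, hence is visited at some phase $i$, placing it in $\Omega_i$.

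The main obstacle is conceptual rather than computational: one must pin down exactly which notion of \emph{attractor} is in play (forward-invariant compact set with a basin, maximal attractor of a trapping region, or a pullback/Milnor attractor), because the natural definition in the non-autonomous setting is not unique, and the decomposition above must be checked to be compatible with the chosen definition. Once this is done, the construction is essentially bookkeeping, matching iterates of the periodic cycle to iterates of each time-$\CT$ autonomous map $\tilde{f}_i$; the continuity of the $f_j$'s does the rest.
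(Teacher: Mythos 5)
The paper does not prove this proposition; it is invoked verbatim as Theorem~3 of Franke and Selgrade (2003) and used as a black box, so there is no internal proof to compare your attempt against.

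On its own terms, your phase-slicing idea is the right one and is in the spirit of the cited result, but there are two genuine gaps. First, the set $\Omega_0 = \Omega \cap \{x : x \text{ is visited at times } \equiv 0 \pmod{\CT}\}$ is not well-defined without first fixing a precise notion of periodic attractor: a bare subset $\Omega \subset \CX$ carries no phase labels, so there is no intrinsic way to say "at which phase" a point of $\Omega$ is visited. In Franke--Selgrade's framework the attractor is constructed together with its phase fibers (equivalently, as an attractor of the autonomous skew product on $\CX \times \BZ/\CT\BZ$), and the decomposition $\Omega = \bigcup_i \Omega_i$ is part of the data rather than something extracted afterward; without importing that structure your first step is circular. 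Second, the assertion that $V_i := (f_{i-1}\circ\cdots\circ f_0)(U)$ is an \emph{open} neighborhood of $\Omega_i$ ``by continuity of the $f_j$'' is false in general: continuous maps do not send open sets to open sets unless they are open maps, which is not assumed. The robust fix is to take the phase-$i$ basin directly, but again this presupposes the phase-structured definition. Finally, the inclusion $\Omega \subseteq \bigcup_i \Omega_i$ requires that each $f_i$ maps $\Omega_i$ \emph{onto} $\Omega_{i+1}$ and that these slices cover $\Omega$; this invariance/surjectivity claim is taken for granted in your sketch and would need a separate argument.
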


Now Theorem \ref{T2} directly follows from Proposition \ref{attrat}, as it has been shown in our case $\Omega_i = \{ (\tb{x}_1^*,\tb{x}_2^*) \}$.

\end{document}